\newtheorem{assumption}[theorem]{Assumption}
\newcommand{\argmin}{\arg\min}
\begin{document}

\title{Sparsity-Constraint Optimization via Splicing Iteration}

\author{\name Jin Zhu, Junxian Zhu \email j.zhu.7@bham.ac.uk \\
       \addr School of Mathematics, University of Birmingham, Birmingham, UK
       \AND
       \name Zezhi Wang, Borui Tang \email wangxq20@ustc.edu.cn \\
       \addr School of Management, University of Science and Technology of China, Hefei, Anhui, China
       \AND
       \name Xueqin Wang, Hongmei Lin \email wangxq20@ustc.edu.cn, hmlin@suibe.edu.cn \\
       \addr School of Statistics and Information, Shanghai University of International Business and Economics, China
       }

\author{\name Jin Zhu\textsuperscript{1}, Junxian Zhu\textsuperscript{2} \email j.zhu.7@bham.ac.uk, junxian@nus.edu.sg \\
  \name Zezhi Wang\textsuperscript{3}, Borui Tang\textsuperscript{3} \email \{homura, tangborui\}@mail.ustc.edu.cn \\
  \name Xueqin Wang\textsuperscript{3}, Hongmei Lin\textsuperscript{4} \email wangxq20@ustc.edu.cn, hmlin@suibe.edu.cn \\
  \textsuperscript{1} \addr School of Mathematics, University of Birmingham, Birmingham, UK\\
  \textsuperscript{2} \addr Saw Swee Hock School of Public Health, National University of Singapore, Singapore \\
  \textsuperscript{3} \addr Department of Statistics and Finance/International Institute of Finance, School of Management, University of Science and Technology of China, Hefei, Anhui, China \\
  \textsuperscript{4} \addr School of Statistics and Information, Shanghai University of International Business and Economics, Shanghai, China 
}

\editor{My editor}

\maketitle
\begingroup\renewcommand\thefootnote{*}
\footnotetext{Jin Zhu and Junxian Zhu contributed equally. Xueqin Wang and Hongmei Lin are the corresponding author.}
\endgroup
\begin{abstract}
Sparsity-constrained optimization underlies many problems in signal processing, statistics, and machine learning. State-of-the-art hard-thresholding (HT) algorithms rely on an appropriately selected continuous step-size parameter to ensure convergence. In this paper, we propose a naturally convergent iterative algorithm, SCOPE (\underline{S}parsity-\underline{C}onstrained \underline{O}ptimization via S\underline{p}licing It\underline{e}ration). The algorithm is capable of optimizing nonlinear differentiable objective functions that are strongly convex and smooth on low-dimensional subspaces. SCOPE replaces the gradient step with a splicing operation guided directly by the objective value, thereby eliminating the need to tune any continuous hyperparameter. Theoretically, it achieves a linear convergence rate and recovers the true support set when the sparsity level is correctly specified. We also establish parallel theoretical results without relying on restricted-isometry-property-type conditions. We apply SCOPE's versatility and power to solve sparse quadratic optimization, learn sparse classifiers, and recover sparse Markov networks for binary variables. With our \textsf{C++} implementation of SCOPE, numerical experiments on these tasks show that it achieves superior support recovery performance, confirming both its algorithmic efficiency and theoretical guarantees.
\end{abstract}

\begin{keywords}
  sparsity-constrained optimization, support recovery, linear convergence rate, splicing technique
\end{keywords}

\newcommand{\loss}{f}
\newcommand{\supp}{\operatorname{supp}}
\newcommand{\A}{\mathcal{A}}
\newcommand{\I}{\mathcal{I}}
\newcommand{\param}{\boldsymbol{\theta}}
\newcommand{\zerovec}{\mathbf{0}}
\newcommand{\sacri}{\xi}
\newcommand{\exclu}{\mathcal{S}^\A}
\newcommand{\inclu}{\mathcal{S}^\I}
\newcommand{\Y}{\mathbf{Y}}
\newcommand{\y}{\mathbf{y}}
\newcommand{\X}{\mathbf{X}}
\newcommand{\x}{\mathbf{x}}
\newcommand{\eps}{\boldsymbol{\epsilon}}
\newcommand{\R}{\mathbb{R}}
\newcommand{\prob}{\mathbb{P}}
\newcommand{\skscope}{\texttt{scope} }
\newcommand{\cvxpy}{\texttt{cvxpy} }
\newcommand{\nablaf}{\nabla f}
\section{Introduction}

This paper aims to develop an algorithm to solve the sparsity-constrained optimization:  
\begin{equation}\label{originalProblem}
	\argmin_{\bm{\theta} \in \mathbb{R}^p} f(\bm{\theta}),\textup{ s.t. } \|\bm{\theta}\|_0 \leq s,
\end{equation} 
where $f: \mathbb{R}^p \rightarrow \mathbb{R}$ is a differentiable objective function with strong convexity and smoothness in low dimensional subspaces, $\bm{\theta}$ is a $p$-dimensional parameter vector, $\| \bm{\theta} \|_0$ is the $\ell_0$-norm that counts the non-zero coordinates in $\bm{\theta}$, and $s$ is a given integer controlling the sparsity of the solution of~\eqref{originalProblem}. 
As an interesting topic in optimization \citep{beck2013sparsity}, problem~\eqref{originalProblem} acquires increasing significance in machine learning, statistics, and signal processing nowadays, in which it also refers to the sparsity learning or subset selection. We exemplify with compressive sensing that aims to recover a sparse signal vector $\bm{\theta} \in \mathbb{R}^p$, which is formulated as: 
\begin{equation}\label{eq:quardticobj}
	\argmin_{\bm{\theta} \in \mathbb{R}^p} \|\mathbf{y} - \mathbf{X}\bm{\theta}\|^2_2,\textup{ s.t. } \|\bm{\theta}\|_0 \leq s,
\end{equation}
where $\|\cdot\|_2$ is the $\ell_2$-norm, $\mathbf{y} \in \mathbb{R}^n$ are the observations, $\mathbf{X} \in \mathbb{R}^{n\times p}$ is the sensing matrix. 
Although problem~\eqref{eq:quardticobj} has received extensive studies~\citep*[see, e.g.,][]{donoho2006compressed, tropp2007signal, blumensath2008iterative, needell2009cosamp, foucart2011hard}, 
the study of \eqref{originalProblem} is still limited because of the flexibility of the objective function. Even more difficult, the constraint function is non-convex, and in fact, is non-continuous, rendering problem~\eqref{originalProblem} quite challenging.




\subsection{Literature review}

We roughly categorize the main existing methods for \eqref{originalProblem} into two categories: searching-based methods and hard thresholding (HT) based methods.
\begin{itemize}[leftmargin=*]
    \item \underline{\textit{Searching-based methods}} aim to identify a coordinate set of size $s$ that minimizes $f(\cdot)$. The most direct approach is exhaustive enumeration over all coordinate sets with cardinality $s$, which yields an exact solution but is generally computationally intensive when the dimension is large. To improve computational efficiency, \citet{shalev-shwartz_trading_2010, liu2014forward} proposed greedy searching methods that sequentially appends/remove coordinates until an $s$-sparse solution is obtained. To avoid greedy, one approach is reformulating~\eqref{originalProblem} as a mixed-integer nonlinear programming problem \citep[MINLP,][]{floudas1995nonlinear} and finds its global solution via nonlinear branch-and-bound algorithms \citep{belotti2013mixed}, which are implemented in modern optimization solvers such as \texttt{SCIP} and \texttt{GUROBI} \citep{bestuzheva2023global, gurobi}. However, the presence of integer variables renders MINLP problems NP-hard in general, and global optimization methods may require exploring large branch-and-bound trees in worst-case scenarios \citep{belotti2013mixed}. 
    \item \underline{\textit{HT-based algorithms}} directly exploit the sparsity level $s$ and use hard-thresholding operators to maintain $s$-sparse iterates. As one of the most classical HT-based algorithms, iterative hard thresholding (IHT) was first proposed by \citet{blumensath2008iterative} for quadratic objectives and was later extended to more general settings \citep{beck2013sparsity, jain2014iterative}. IHT is computationally efficient and scalable in practice, which has inspired various variants and extensions \citep{blumensath2010normalized,yuan2020dual,chaudhuri2022iterative,ida2024fast}. Another representative line of HT-type methods apply an additional \emph{debiasing} step on the selected support during iteration. Such a debiasing idea was introduced early in compressive sensing, e.g., in compressed sampling matching pursuit \citep[CoSaMP,][]{needell2009cosamp} and hard thresholding pursuit \citep[HTP,][]{foucart2011hard}, and it has since been adapted to solve \eqref{originalProblem}. For instance, \citet{bahmani_greedy_2011} proposed gradient support pursuit (GraSP) which can be viewed as a extension of CoSaMP to general objective functions, and \citet{yuan2017gradient} proposed gradient hard thresholding pursuit (GraHTP) that generalizes HTP. Empirically, the debiasing step has been found to improve performance in terms of computational efficiency and support recovery \citep{zhao2022lagrange}. Motivated by this benefit, more sophisticated debiasing strategies have been developed to enhance sparsity recovery and/or further accelerate convergence \citep[see, e.g.,][]{huang2017constructive, zhou_global_nodate}.
\end{itemize}

\subsection{Motivation, Proposal and Contribution}

This paper is motivated by the existing HT-based methods discussed previously. Generally, HT-based algorithms involve a gradient descent step of the form $\param^{t+1} \leftarrow \param^t - \tau \nabla \loss(\param^t)$, where $\param^t$ denotes the iterate at iteration $t$, and $\tau$ is a continuous step-size parameter. The effectiveness of HT-based algorithms relies on a properly chosen $\tau$, which depends on the objective function $\loss$ and may be difficult to determine in practice, especially when $p$ is large. If the step size is misspecified, the performance of the algorithm may deteriorate (see Section~\ref{sec:experiments}); moreover, the iterates may even become periodic \citep{foucart2011hard}.

This paper aims to develop a naturally convergent algorithm with a provably accurate solution for problem~\eqref{originalProblem}. To this end, we propose a new iterative algorithm, the \underline{s}parsity-\underline{c}onstrained \underline{o}ptimization via s\underline{p}licing it\underline{e}ration (SCOPE). At each iteration, SCOPE leverages zeroth-order and first-order information at the current solution and performs local swapping to overcome the challenges of general objective functions and the sparsity constraint. Besides, in each iteration, SCOPE exploits the objective values to guide the local swapping to ensure the objective value monotonously decreases with the number of iterations; in conjunction with the fact that support sets with cardinality $s$ are finite, SCOPE must converge without carefully setting continuous tuning parameters to reach convergence. This is a distinctive feature of SCOPE. 

The main contributions of this paper are as follows:
\begin{enumerate}[leftmargin=*]
    \item \textbf{A tuning-free algorithm.} We propose SCOPE, a new 
    iterative algorithm for problem~\eqref{originalProblem}. SCOPE 
    replaces the gradient-descent-plus-thresholding template of 
    existing HT-based methods with an objective-guided splicing 
    operation that admits \emph{best-candidate} acceptance (see 
    Section~\ref{sec:algorithm}). As a consequence, SCOPE is free of 
    the continuous step-size parameter that governs HT-type methods, 
    and its only hyperparameter $k_{\max}$ is discrete and bounded 
    by $s$.
    
    \item \textbf{Support recovery and linear convergence under a 
    weaker RIP-type condition.} Under 
    Assumptions~\ref{con:convex-smooth}--\ref{con:bound-gradient}, 
    we prove that SCOPE exactly recovers $\mathcal{A}^*$ 
    (Theorem~\ref{thm:recovery}) and converges linearly 
    (Theorem~\ref{thm:convergence-rate}). The restricted condition 
    number is required to satisfy $M_{3s}/m_{3s} \leq 1.49$, which is 
    strictly weaker than the corresponding bounds in 
    \citet{needell2009cosamp}, \citet{bahmani_greedy_2011}, and 
    \citet{yuan2017gradient}. Moreover, we derive an explicit 
    iteration complexity bound 
    (Theorem~\ref{thm:convergence}) and show geometric convergence 
    of the iterates to the oracle solution.
    
    \item \textbf{Guarantees without RIP-type conditions.} By 
    relaxing the working sparsity level $s$ beyond the true sparsity 
    $s^*$, we establish support recovery 
    (Theorem~\ref{thm:support_recovery_relax}) and linear 
    convergence (Theorem~\ref{thm:convergence_rate_relax}) without 
    any RIP-type restriction on $M/m$. Compared with 
    \citet{jain2014iterative} and \citet{yuan2017gradient}, the 
    required relaxation scales more slowly with the restricted 
    condition number.
    
    \item \textbf{Applications and empirical validation.} We 
    instantiate SCOPE on three benchmark tasks---compressed sensing, 
    sparse classification, and sparse Ising model recovery---and 
    release an open-source C++ implementation. On these tasks, SCOPE 
    requires fewer samples for exact support recovery than 
    GraSP, GraHTP, and Lasso-based relaxations, while its runtime is 
    competitive with HT-based methods and $10$--$20\times$ faster 
    than Lasso-based baselines.
\end{enumerate}

\subsection{Organization}

The rest of this article is organized as follows. 
Section~\ref{sec:algorithm} fixes notation and presents the SCOPE 
algorithm. Section~\ref{sec:theory} develops the theoretical 
properties of SCOPE, covering both support recovery and convergence 
analysis, with and without RIP-type conditions. 
Section~\ref{sec:application} instantiates SCOPE on three 
canonical sparse learning tasks---compressed sensing, sparse 
classification, and sparse Ising model recovery---and verifies the 
required assumptions under standard conditions. 
Section~\ref{sec:experiments} evaluates SCOPE numerically against 
an exact commercial solver and against state-of-the-art HT-based 
and convex-relaxation baselines. Section~\ref{sec:conclusion} 
concludes. Complete technical proofs are deferred to the appendix.

\section{Algorithm}\label{sec:algorithm}

\subsection{Notation}\label{sec:notation}
We first collect the notation used throughout the paper. Bold 
lowercase letters (e.g., $\mathbf{x}$, $\bm{\theta}$) denote vectors, 
and bold uppercase letters (e.g., $\mathbf{X}$, $\mathbf{Y}$) denote 
matrices. Let $I(\cdot)$ be the indicator function, $[p] \coloneqq 
\{1, 2, \dots, p\}$, and $\mathcal{A}, \mathcal{B} \subseteq [p]$. 
For a vector $\mathbf{x} \in \mathbb{R}^p$, we write $\|\mathbf{x}\|_2$ 
(or simply $\|\mathbf{x}\|$) for its Euclidean norm, $\|\mathbf{x}\|_0$ 
for the number of nonzero coordinates, and $\|\mathbf{x}\|_{\infty}$ 
for the maximum absolute entry. The support of $\mathbf{x}$ is 
$\operatorname{supp}(\mathbf{x}) = \{j \mid \mathbf{x}_j \neq 0\}$, 
and $\operatorname{diag}\{\mathbf{x}\}$ denotes the diagonal matrix 
with $\mathbf{x}$ on the diagonal. For a positive integer $t$, 
$\mathcal{H}_t(\cdot): \mathbb{R}^p \to \mathbb{R}^p$ is the hard 
thresholding operator that retains the $t$ coordinates of largest 
magnitude and zeros out the rest.

For the objective function $f: \mathbb{R}^p \to \mathbb{R}$, we write 
$\nabla f(\mathbf{x})$ and $\nabla^2 f(\mathbf{x})$ for its gradient 
and Hessian at $\mathbf{x}$. Given index sets $\mathcal{A}, \mathcal{B} 
\subseteq [p]$, $\mathbf{x}_{\mathcal{A}} \in \mathbb{R}^{|\mathcal{A}|}$ 
is the subvector of $\mathbf{x}$ indexed by $\mathcal{A}$, while 
$\mathbf{x}|_{\mathcal{A}} \in \mathbb{R}^p$ is the vector whose 
$j$-th coordinate equals $\mathbf{x}_j$ if $j \in \mathcal{A}$ and 
zero otherwise; $\mathbf{X}_{\mathcal{A}}$, $\mathbf{X}_{\mathcal{A},\cdot}$, 
and $\mathbf{X}_{\mathcal{A},\mathcal{B}}$ denote, respectively, the 
submatrix of $\mathbf{X}$ consisting of the columns in $\mathcal{A}$, 
the rows in $\mathcal{A}$, and the rows in $\mathcal{A}$ and columns 
in $\mathcal{B}$. We write $\nabla_{\mathcal{A}} f(\mathbf{x}) 
\coloneqq (\nabla f(\mathbf{x}))_{\mathcal{A}}$ and, analogously, 
$\nabla_{\mathcal{B}} \nabla_{\mathcal{A}} f(\mathbf{x}) \coloneqq 
((\nabla^2 f(\mathbf{x}))_{\mathcal{A},\cdot})_{\mathcal{B}}$; in 
particular, $\nabla^2_{\mathcal{A}} f(\mathbf{x}) \coloneqq 
\nabla_{\mathcal{A}} \nabla_{\mathcal{A}} f(\mathbf{x})$. Finally, 
$\bm{\theta}^*$ denotes the ground-truth $s$-sparse parameter and 
$\mathcal{A}^* \coloneqq \operatorname{supp}(\bm{\theta}^*)$ its 
support.

\subsection{The SCOPE algorithm}\label{sec:scope-algo}

The central idea of SCOPE is to iteratively refine the current active set and construct a new support with improved objective value. Starting from an initial support $\A^0 \subseteq [p]$ with $|\A^0|=s$, the algorithm generates a sequence of support sets with cardinality $s$: $\A^0, \A^1, \ldots$, and returns the final support set in this sequence as the output. Suppose $\A^t$ is the $t$-th solution, a heuristic strategy to improve upon $\A^t$ is replacing some irrelevant elements in $\A^{t}$ with an equal number of top relevant elements in $\I^t \coloneqq (\A^t)^c$. The effectiveness of such a strategy relies on a quantitative notion of ``relevance'' that measures the contribution of each coordinate to the objective function. The strategy, named splicing, was introduced in \citet{abess} for compressive sensing, where a relevance criterion can be derived in closed form. However, deriving a suitable relevance  for general objective functions is more challenging as it must faithfully capture the behavior of the objective function, while maintaining computational efficiency.  

To meet the requirements, we define the relevance as
\begin{equation}\label{eq:sacrifice}
	\xi^t_j \propto 
	\begin{cases}
		(\bm{\theta}^t_j)^2, & j \in \A^t \\ 
		{[\nabla_j f(\bm{\theta}^t)]^2,} & j \in \I^t
	\end{cases},
\end{equation}
where $\bm{\theta}^t \coloneqq \mathop{\argmin}\limits_{\operatorname{\supp}\{\param\}=\A^t} f(\bm{\theta})$ is the current solution under $\A^t$. This definition incorporates both zeroth- and first-order information of $f$. For $j \in \A^t$, a larger magnitude of $\theta^t_j$ indicates greater importance within the current support. For $j \in \I^t$, a larger gradient magnitude of $\nabla_j f(\bm{\theta}^t)$ suggests a stronger potential for objective reduction if the coordinate is activated. Moreover, the relevance can be easily computed as it is a closed-form expression with respect to $\bm{\theta}^t$ and $\nablaf(\bm{\theta}^t)$. Among them, obtaining $\bm{\theta}^t$ requires optimizing objective function in a low dimensional parameter space since only $s$ parameters have non-zero values; as for $\nablaf(\bm{\theta}^t)$, their expressions can be easily obtained. 

Now we depict the splicing procedure with the relevance score. For a fixed $k \in \{1,\ldots,k_{\max}\}$, define two sets\footnote{Notice that, it might be possible that $|\mathcal{S}^{(k)}_{\A}|$ or $|\mathcal{S}^{(k)}_{\I}|$ is not equal to $k$ because $\sacri_j^t$ has the same value. Although this extreme case would not appear in the application in this paper, one possible solution for this case is to randomly choose some of them to ensure $|\mathcal{S}^{(k)}_{\A}| = |\mathcal{S}^{(k)}_{\I}| = k$.}:
\begin{equation}\label{eq:exchange-set}
\begin{aligned}
    \mathcal{S}_{\A}^{(k)} &=\{j \in \A^t: \textup{$\xi^t_j$ is in the smallest $k$ of $\{\xi^t_j\}_{j\in\A^t}$}\},
    \\
    \mathcal{S}_{\I}^{(k)}
    &=\{j \in \I^t: \textup{$\xi^t_j$ is in the largest $k$  of $\{\xi^t_j\}_{j\in\I^t}$}\}.
\end{aligned}
\end{equation}
The candidate support with cardinality $s$ is then constructed as $\widetilde{\A}^{(k)} \coloneqq (\A^t \setminus \mathcal{S}_\A^{(k)}) \cup \mathcal{S}_\I^{(k)}$ and evaluated via $\widetilde{f}^{(k)} \coloneqq \min\limits_{\operatorname{supp}\{\bm{\theta}\} = \widetilde{\A}^{(k)}} f(\bm{\theta})$. Among all $k \in \{1,\ldots,k_{\max}\}$, we select $k^* \coloneqq \mathop{\argmin}\limits_{k\in [s]}\widetilde{f}^{(k)}$. If $\widetilde{f}^{(k^*)} < f(\bm{\theta}^t)$, then $\widetilde{\A}^{(k^*)}$ may refine $\A^t$ and we should accept this candidate support by performing update $\A^{t+1} \leftarrow \A^{k^*}$. If $\widetilde{f}^{(k^*)} \geq f(\bm{\theta}^t)$, no candidate support is better than $\A^t$, then we set $\A^t$ as the algorithmic output.
We summarize the splicing procedure depicted above in Algorithm~\ref{algo:main}.

We make two remarks on Algorithm~\ref{algo:main}. 
\begin{itemize}[leftmargin=*]
    \item \textbf{(Finite-step convergence)} For the active set sequence $\A^0, \A^1, \ldots, $ generated in Algorithm~\ref{algo:main}, it must have a terminal $\A^T$ where $T$ is a finite integer. This follows from the strict decrease of the objective value and the finiteness of $s$-cardinality subsets of $[p]$. Hence, SCOPE converges without additional assumptions. 
    \item \textbf{(Tuning-free property)}  Algorithm~\ref{algo:main} involves two inputs: (i) the maximum splicing size $k_{\max}$ and (ii) an initial support $\A^0$. The parameter $k_{\max}$ is discrete. Setting $k_{\max}=s$ ensures the theoretical guarantees established in Section~\ref{sec:theory}, and this choice is adopted throughout our experiments. Regarding the initialization, our theoretical analysis shows that the choice of $\A^0$ affects only the number of iterations required for termination, but not the convergence rate. In practice, we initialize $\A^0$ using a gradient-based rule. Specifically, define $\xi^0_j = [\nabla_j f(\mathbf{0})]^2, \quad j=1,\ldots,p,$ and let $\A^0$ consist of the indices corresponding to the $s$ largest values of $\{\xi^0_j\}_{j=1}^p$. With this initialization strategy and the natural choice $k_{\max}=s$, SCOPE does not require hyperparameter tuning.
\end{itemize}

\begin{algorithm}[htbp]
\caption{\underline{S}parsity-\underline{C}onstrained \underline{O}ptimization via S\underline{p}licing It\underline{e}ration (SCOPE)}
\label{algo:main}
\begin{algorithmic}[1]
\Require An initial support $\A^0$ with cardinality $s$, and the maximum splicing size $k_{\max}(\leq s)$.
\Ensure $(\bm{\theta}^{t}, \A^{t})$.
\State Initialize: $t \leftarrow 0$, $\I^{0} \leftarrow (\A^{0})^{c}$,
$\bm{\theta}^{0} \leftarrow \argmin\limits_{\operatorname{supp}\{\bm{\theta}\} = \A^0} f(\bm{\theta})$.
\Repeat \Comment{Outer splicing iteration}
    \State $t \leftarrow t+1$, $\A^t \leftarrow \A^{t-1}$, $\param^t \leftarrow \param^{t-1}$, $L \leftarrow f(\bm{\theta}^{t})$.
    \State Update the relevance of each coordinate via Equation~\eqref{eq:sacrifice}.
    \For{$k = 1, \ldots, k_{\max}$} \Comment{Evaluate candidate swaps}
        \State Select the $k$ coordinates 
        $\mathcal{S}_\A^{(k)}, \mathcal{S}_\I^{(k)}$
        to be swapped via Equation~\eqref{eq:exchange-set}.
        \State $\widetilde{\A}^{(k)} \leftarrow (\A^{t} \backslash \mathcal{S}_\A^{(k)}) \cup \mathcal{S}_\I^{(k)}$, $\widetilde{\bm{\theta}}^{(k)} \leftarrow
        \argmin\limits_{\operatorname{supp}\{\bm{\theta}\} = \widetilde{\A}^{(k)}} f(\bm{\theta})$.
        \If{$L > f(\widetilde{\bm{\theta}}^{(k)})$}
            \State $(L, \A^{t}, \bm{\theta}^{t}) \leftarrow (f(\widetilde{\bm{\theta}}^{(k)}), \widetilde{\A}^{(k)}, \widetilde{\bm{\theta}}^{(k)})$.
        \EndIf
    \EndFor
\Until{$\A^{t} = \A^{t-1}$}
\end{algorithmic}
\end{algorithm}
Although SCOPE inherits the idea of coordinate 
exchange from \citet{abess}, the two algorithms differ substantially 
in methodology, scope, and theory. Table~\ref{tab:abess-vs-scope} 
summarizes the key distinctions.

\begin{table}[htbp]
\centering
\small
\caption{Methodological and theoretical comparison between SCOPE 
and the splicing algorithm of \citet{abess}.}
\label{tab:abess-vs-scope}
\begin{tabular}{@{}lll@{}}
\toprule
 & \citet{abess} & SCOPE (this paper) \\
\midrule
Target problem & Compressed sensing (quadratic $f$) & General differentiable $f$ \\[2pt]
Relevance score for $j \in \A^t$ & $f(\param^t) - \min_{\operatorname{supp}(\bm{\alpha})=\{j\}} f(\param^t + \bm{\alpha})$ & $(\param^t_j)^2$ (closed form) \\[2pt]
Relevance score for $j \in \I^t$ & univariate subproblem & $[\nabla_j f(\param^t)]^2$ (closed form) \\[2pt]
Candidate acceptance rule & first improving exchange & best among $\widetilde{\A}^{(1)},\ldots,\widetilde{\A}^{(k_{\max})}$ \\[2pt]
Guarantees without RIP & not applicable & Theorems~\ref{thm:support_recovery_relax}, \ref{thm:convergence_rate_relax} \\
\bottomrule
\end{tabular}
\end{table}

The first three rows reflect a methodological generalization: deriving a closed-form relevance score for a nonlinear $f$ is nontrivial, and our choice in~\eqref{eq:sacrifice} incorporates both zeroth-order and first-order information of $f$ while remaining cheap to compute. The fourth row is central to our theoretical analysis: by selecting the best candidate rather than 
the first improving one, SCOPE guarantees a larger per-iteration 
decrease in the objective, which is what enables the linear 
convergence rate in Theorems~\ref{thm:convergence-rate} 
and~\ref{thm:convergence_rate_relax}. The last row highlights that the theoretical analysis of \citet{abess} does not carry over: we develop new lemmas (see Section~\ref{sec:theory} and Appendix~\ref{sec:proof-rip-free}) to obtain RIP-free guarantees for general objectives.

\section{Algorithmic Properties}
\label{sec:theory}

This section will present solution guarantees and convergence analysis for the algorithm in Sections~\ref{sec:statistical-theory} and~\ref{sec:computation-theory}. Before presenting these guarantees, we show assumptions and necessary discussions in Section~\ref{sec:assumptions}. To avoid the RIP-type condition mentioned in Section~\ref{sec:assumptions}, we relax the sparsity level and attain related properties in Section~\ref{sec:properties-without-rip}.

\subsection{Assumptions}\label{sec:assumptions}

Our first assumption is related to the convexity and smoothness of the objective function in the subspace. Before the first assumption, we introduce the concepts of restricted strong convexity (RSC) and restricted strong smoothness (RSS) below. 
\begin{definition}[Restricted Strong Convexity and Restricted Strong Smoothness]
	Let $t$ be a positive integer,  
	we say function $f(\cdot)$ is $m_t$-RSC if  
	$$\frac{m_t}{2}\|\bm{x}-\bm{y}\|^{2}_2 + \langle\nabla f(\bm{y}), \bm{x}-\bm{y}\rangle \leq f(\bm{x})-f(\bm{y}),$$
	for any $\bm{x}, \bm{y} \in \mathbb{R}^p$ satisfying $\|\bm{x}-\bm{y}\|_0 \leq t$.
	And say $f(\cdot)$ is $M_t$-RSS if
	$$f(\bm{x})-f(\bm{y}) \leq \frac{M_t}{2}\|\bm{x}-\bm{y}\|^{2}_2 + \langle\nabla f(\bm{y}), \bm{x}-\bm{y}\rangle,$$
	for any $\bm{x}, \bm{y} \in \mathbb{R}^p$ satisfying $\|\bm{x}-\bm{y}\|_0 \leq t$.
\end{definition}
In a geometric sense, a $m_t$-RSC and $M_t$-RSS function $f(\cdot)$ possess strong convexity and strong smoothness in any low-dimensional subspace with a dimension less than $t$. To rephrase, $m_t$-RSC and $M_t$-RSS bound the curvature of $f$ in low-dimensional subspaces.
With the definition of RSC and RSS, the first assumption is given as follows:
\begin{assumption}\label{con:convex-smooth}
	$f(\cdot)$ is $m_{3s}$-RSC and $M_{3s}$-RSS.
\end{assumption}
This assumption only requires $f(\cdot)$ to be strong convex and smooth in subspaces, which is much weaker than the convexity and smoothness in the full space. 
Thus, in Algorithm~\ref{algo:main}, the optimization problem in subspaces (Lines 1 and 7 in Algorithm~\ref{algo:main})
always has a unique minimizer, ensuring that Algorithm~\ref{algo:main} can properly work.
More importantly, it is critical for theoretical analysis because it paves the way for bounding $f(\bm{\theta}) - f(\bm{\theta}^*)$, where $\bm{\theta}$ is a $s$-sparse solution whose support set may be incorrect. 
Actually, Assumption~\ref{con:convex-smooth} is widely used in literature for the studies of algorithms for general sparse-constrained optimization problems \citep{jain2014iterative,yuan2017gradient, zhou_global_nodate}. 
This concept is similar to the restricted isometry property in the field of compressed sensing \citep{1542412}, where a quadratic objective function is involved. Assumption~\ref{con:convex-smooth} serves as its extension for adaptation of general objective functions.

Next, we turn to the second assumption. For unconstrained optimization, $\nabla f(\bm{\theta}^*) = \mathbf{0}$ is a necessary condition for $\bm{\theta}^*$ to be a local minimizer of $f(\cdot)$. Here, we relax the zero gradient condition for $\bm{\theta}^*$ to a weaker one. Let $\vartheta \coloneqq \min\limits_{j \in \mathcal{A}^{*}} \vert \param_{j}^{*}\vert $, and our assumption is given as follows:
\begin{assumption}
	\label{con:bound-gradient} 
	$\| \nabla f (\bm{\theta}^{*} ) \|_{\infty} 
	\leq \frac{0.35}{\sqrt{s}}(1.49 m_{3s} - M_{3s}) \vartheta$.
\end{assumption}
$\| \nabla f (\bm{\theta}^{*}) \|_\infty$ can be interpreted as the assumption on noise (see Section~\ref{sec:application} for concrete examples). From intuition, the appearance of noise may submerge the true signal. Assumption~\ref{con:bound-gradient} expresses this intuitive idea by controlling $\nabla f (\bm{\theta}^{*})$ with the minimum signal of parameters. Such an assumption makes our analyses are applicable to the noised systems that frequently appear in the field of data science. And thus, similar assumptions for $\| \nabla f (\bm{\theta}^{*} ) \|_{\infty}$ are widely imposed for sparsity constraint optimization \citep{bahmani_greedy_2011, yuan2017gradient}. 
Notice that the upper bound in this assumption implicitly comprises two restrictions for $f(\cdot)$. First, it requires~$\|\nabla f(\param^*)\|_{\infty}$ in Assumption \ref{con:bound-gradient} cannot grow too fast with respect to $s$; more precisely, it should be $\mathcal{O}(s^{-\frac{1}{2}})$ which has the same order as \citet{yuan2017gradient}. Second, it requires the so-called restrict condition number $\frac{M_{3s}}{m_{3s}}$ to be upper bounded, which is essentially a RIP-type condition \citep{candes2006stable, needell2009cosamp}. We would like to note that $\frac{M_{3s}}{m_{3s}} \leq 1.49$ is weaker than the prior RIP-type conditions appeared in literature \citep{needell2009cosamp, bahmani_greedy_2011, yuan2017gradient}. Finally, it is worthy to emphasize that, in Section~\ref{sec:properties-without-rip}, we will relax the RIP-type condition and obtain similar properties in sparse solution and computation.

\subsection{Guarantees on Solution}\label{sec:statistical-theory}
The following theorem is one of the main results in this paper:
\begin{theorem}\label{thm:recovery}
	Suppose Algorithm~\ref{algo:main} returns $(\widehat{\bm{\theta}},\widehat{\mathcal{A}})$, then $\widehat{\mathcal{A}}= \mathcal{A}^*$ under Assumptions~\ref{con:convex-smooth}-\ref{con:bound-gradient}.
\end{theorem}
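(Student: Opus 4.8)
The plan is to argue by contradiction, exploiting the termination criterion of Algorithm~\ref{algo:main}. By Remark~\ref{remark:natural-convergence} the algorithm halts at a pair $(\hat{\param}, \hat{\A})$ with $|\hat{\A}| = s$, at which $\hat{\param} = \argmin_{\supp\{\param\} = \hat{\A}} f(\param)$, so that $\nabla_{\hat{\A}} f(\hat{\param}) = \zerovec$, and---crucially---no splice improves the objective: $\tilde{f}^{(k)} \geq f(\hat{\param})$ for every $k \in [s]$. Suppose for contradiction that $\hat{\A} \neq \A^*$, and set $\A_1 := \A^* \setminus \hat{\A}$ (missed true coordinates) and $\I_1 := \hat{\A} \setminus \A^*$ (false positives), so that $d := |\A_1| = |\I_1| \geq 1$. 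The goal is to exhibit a relevance-guided swap of $k = d$ coordinates that strictly lowers $f$, contradicting termination.

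I would first establish two size estimates. First, an upper bound on the false-positive coefficients $\sum_{i \in \I_1} \hat{\param}_i^2$: since these coordinates are spurious ($\param_i^* = 0$) while $\hat{\param}$ is the subspace minimizer, combining the stationarity $\nabla_{\hat{\A}} f(\hat{\param}) = \zerovec$ with the RSC/RSS bounds of Assumption~\ref{con:convex-smooth} controls $\|\hat{\param}_{\I_1}\|$ by the restricted condition number times the noise $\|\nabla f(\param^*)\|_\infty$ plus the gradient mass on $\A_1$. Second, a lower bound on the missed-signal gradients $\sum_{j \in \A_1} [\nabla_j f(\hat{\param})]^2$: because $\hat{\param}_j = 0$ while $|\param_j^*| \geq \vartheta$, expanding $\nabla_j f(\hat{\param}) - \nabla_j f(\param^*)$ and applying RSC forces $|\nabla_j f(\hat{\param})|$ to be of order $m_{3s}\vartheta$ up to the noise term, which Assumption~\ref{con:bound-gradient} keeps strictly smaller.

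Next I would show the relevance ordering isolates the error sets, so that the level-$d$ splice removes exactly $\I_1$ and inserts exactly $\A_1$. From the first estimate together with the closeness of $\hat{\param}$ to $\param^*$ on $\hat{\A} \cap \A^*$, one obtains $\max_{i \in \I_1} \hat{\param}_i^2 < \min_{j \in \hat{\A} \cap \A^*} \hat{\param}_j^2$, so the $d$ smallest relevances inside $\hat{\A}$ are precisely those of $\I_1$. From the second estimate together with a matching upper bound on the true-negative gradients, one obtains $\min_{j \in \A_1} [\nabla_j f(\hat{\param})]^2 > \max_{i \in \I \setminus \A_1} [\nabla_i f(\hat{\param})]^2$, so the $d$ largest relevances inside $\I$ are precisely those of $\A_1$. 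Hence $\mathcal{S}_\A^{(d)} = \I_1$ and $\mathcal{S}_\I^{(d)} = \A_1$, and the spliced set is $\tilde{\A}^{(d)} = (\hat{\A} \setminus \I_1) \cup \A_1 = \A^*$.

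Finally I would bound the objective change of this swap. Using RSS to upper-bound the cost of deleting $\I_1$, at most $\tfrac{M_{3s}}{2}\sum_{i \in \I_1}\hat{\param}_i^2$ (the stationarity $\nabla_{\hat{\A}}f(\hat{\param})=\zerovec$ kills the first-order term), and RSC to lower-bound the gain from inserting $\A_1$, at least $\tfrac{1}{2M_{3s}}\sum_{j \in \A_1}[\nabla_j f(\hat{\param})]^2$ via the optimal step along the newly added coordinates, I would show $\tilde{f}^{(d)} \leq f(\hat{\param}) - (\text{gain} - \text{cost})$ with $\text{gain} - \text{cost} > 0$. Closing this last inequality is the crux: it requires substituting the two size estimates into the gain/cost expression and verifying that the specific constants in Assumption~\ref{con:bound-gradient}---the factors $0.35$ and $1.49$, encoding $M_{3s}/m_{3s} \leq 1.49$---make the signal term dominate both the noise and the deletion cost. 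The resulting strict decrease contradicts $\tilde{f}^{(d)} \geq f(\hat{\param})$, forcing $\hat{\A} = \A^*$. I expect this constant-chasing, together with securing the relevance separation in the presence of noise, to be the main obstacle.
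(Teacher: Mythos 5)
Your high-level architecture---contradiction with the termination test, via a single splice of size $d = |\A^*\setminus\hat{\A}|$---matches the paper's, but the middle step, where you claim the relevance ordering isolates the error sets so that $\mathcal{S}_{\A}^{(d)}$ is exactly the false positives, $\mathcal{S}_{\I}^{(d)}$ is exactly the missed true coordinates, and hence $\tilde{\A}^{(d)} = \A^*$, is a genuine gap: that separation is false under Assumptions~\ref{con:convex-smooth}--\ref{con:bound-gradient} alone. The restricted minimizer's error obeys (the paper's inequality~\eqref{eq:est_true_gap}) $\|\hat{\param}_{\hat{\A}}-\param^*_{\hat{\A}}\| \leq m^{-1}\|\nabla_{\hat{\A}} f(\param^*)\| + \frac{M-m}{2m}\|\param^*_{\mathcal{I}_1}\|$, and the second term scales with the magnitude of the \emph{missed} signal, which Assumption~\ref{con:bound-gradient} does not control relative to $\vartheta$. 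Concretely, with $M/m>1$, let the missed true coordinates carry signals $L\gg\vartheta$ while some retained true coordinate has signal exactly $\vartheta$: a false positive can then be estimated at order $\frac{M-m}{2m}L \gg \vartheta$ while the retained weak true coordinate's estimate is of order $\vartheta$ or smaller, so your claim $\max_{i}\hat{\param}_i^2 < \min_{j}\hat{\param}_j^2$ (false positives vs.\ retained trues) fails; the analogous gradient separation on the inactive side fails the same way---the paper's own lower bound on $\|\hat{\boldsymbol{d}}_{\mathcal{I}_{12}}\|$ contains the competing term $-v\|\param^*_{\mathcal{I}_1}\|$ and can be vacuous for weak missed coordinates. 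Since your final gain-minus-cost inequality is computed for the specific swap that deletes exactly the false positives and inserts exactly the missed trues, it does not apply to the splice the algorithm actually performs, and the contradiction does not close.

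This is precisely why the paper's proof never asserts one-step exact identification. Lemma~\ref{lemma:loss-now-next} allows the level-$|\mathcal{I}_1|$ splice to be ``wrong'': it decomposes the deleted set as $\mathcal{S}_1 = \mathcal{A}_{12}\cup\mathcal{A}_{22}$ and the inserted set via $\mathcal{I}_{11},\mathcal{I}_{12}$, and instead of coordinate-wise separation it proves the mistakes are small \emph{in norm}---\eqref{S1} combined with \eqref{eq:est_true_gap} yields \eqref{S3} bounding $\|\param^*_{\mathcal{A}_{12}}\|$ (true signal wrongly deleted), and \eqref{S4} yields \eqref{S5} bounding $\|\param^*_{\mathcal{I}_{12}}\|$ (true signal still missed). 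These norm bounds feed an upper bound on $|f(\tilde{\param})-f(\param^*)|$ and a lower bound on $f(\hat{\param})-f(\param^*)$, and the numerical verification $\varepsilon(x) > 0.0125$ over $x = M/m \in [1,1.49]$ then gives strict descent $f(\tilde{\param}) < f(\hat{\param})$ \emph{even when} $\tilde{\A}\neq\A^*$---which is all the termination contradiction needs. To repair your argument you would either have to replace the exact-identification step with this norm-based control of splicing errors, or impose a separation condition substantially stronger than Assumption~\ref{con:bound-gradient} (one tying the error terms, which scale with $\|\param^*_{\mathcal{I}_1}\|$, back to $\vartheta$), which would weaken the theorem.
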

Theorem~\ref{thm:recovery} ensures SCOPE exactly recovers the true support set, which has been pursued for a long time in literature \citep{liu2014forward, yuan2017gradient,yuan2020dual, zhou_global_nodate}. Compared with the algorithms in the literature, the core advantage of SCOPE is that it guarantees to recover the true support set so long as the objective function enjoys certain assumptions. On the contrary, the success of the other algorithms requires tuning parameters such as the step size \citep{yuan2017gradient,yuan2020dual} and the convergence criteria are properly set \citep{liu2014forward}. Unfortunately, the tuning parameters may not be easily determined in practice. For instance, the suitable step size in \citet{yuan2017gradient, yuan2020dual} should be less than $M_{2s}^{-1}$ albeit it is not too intensive to compute because we have to consider $\binom{2s}{p}$ subspaces with dimension $2s$. 

The tuning-free property of SCOPE comes from two aspects. First, it utilizes the splicing operation to find the support set that minimizes the objective function. As mentioned in our remark in Section~\ref{sec:algorithm}, the minimizer always exists without tuning any continuous parameter. In contrast, many iterative algorithms find the stable solution of the support sets that are derived from the estimated parameters during iterations. Since the parameter estimation with gradient descent is sensitive to the choice of step size, selecting an adequate step size is crucial for these algorithms to correctly select the true support set (see, e.g., \citet{yuan2017gradient}). 
Second, SCOPE fully leverages the information of $s$. While iterative algorithms that shift the support size during iterations pay additional costs to find the true support set (e.g., \citet{liu2014forward}).


Now, we present a direct result of Theorem~\ref{thm:recovery} that reveals the property of the parameter estimation returned by Algorithm~\ref{algo:main}.
Denote the oracle solution $\widehat{\bm{\theta}^*}$ as 
$
\widehat{\bm{\theta}^*} \coloneqq \mathop{\argmin}\limits_{\operatorname{supp}\{\bm{\theta}\} = \mathcal{A}^*} f(\bm{\theta}),
$
then we have the following results. 
\begin{corollary}\label{coro:oracle}
	Under Assumptions~\ref{con:convex-smooth}-\ref{con:bound-gradient}, $\widehat{\bm{\theta}} = \widehat{\bm{\theta}^*}$. Particularly, if $\| \nabla f (\bm{\theta}^{*}) \|_\infty = 0$, then $\widehat{\bm{\theta}} = \bm{\theta}^*$.
\end{corollary}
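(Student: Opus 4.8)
The plan is to obtain the corollary as an essentially immediate consequence of Theorem~\ref{thm:recovery}, with strong convexity supplying uniqueness of the relevant minimizers. First I would record the structural fact that the parameter returned by Algorithm~\ref{algo:main} is, by construction, the subspace minimizer over its returned support: throughout the iteration $\boldsymbol{\theta}^t$ is always reset to the minimizer of $f$ over the current active set, so upon termination $\hat{\boldsymbol{\theta}} = \argmin_{\operatorname{supp}\{\boldsymbol{\theta}\} = \hat{\mathcal{A}}} f(\boldsymbol{\theta})$. To make ``the'' minimizer well-defined, I would invoke Lemma~\ref{lemma1}: for any support set of size at most $s$ and any $\boldsymbol{x}$ supported there, $\nabla^2_{\mathcal{A}} f(\boldsymbol{x}) \succeq m_{3s} I \succ 0$, where $m_{3s} > 0$ because Assumption~\ref{con:bound-gradient} forces $M_{3s} \leq 1.49 m_{3s}$ and hence $m_{3s} \geq M_{3s}/1.49 > 0$. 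Thus $f$ is strictly convex on each such subspace, and its subspace minimizer is unique.

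Given this, the first claim is immediate: Theorem~\ref{thm:recovery} yields $\hat{\mathcal{A}} = \mathcal{A}^*$, so $\hat{\boldsymbol{\theta}}$ and $\hat{\boldsymbol{\theta}^*}$ are both the unique minimizer of $f$ over $\{\boldsymbol{\theta} : \operatorname{supp}(\boldsymbol{\theta}) = \mathcal{A}^*\}$, whence $\hat{\boldsymbol{\theta}} = \hat{\boldsymbol{\theta}^*}$. For the noiseless specialization I would compare first-order stationarity conditions. The unique subspace minimizer $\hat{\boldsymbol{\theta}^*}$ is characterized by $\nabla_{\mathcal{A}^*} f(\hat{\boldsymbol{\theta}^*}) = \mathbf{0}$ together with $\operatorname{supp}(\hat{\boldsymbol{\theta}^*}) \subseteq \mathcal{A}^*$. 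When $\|\nabla f(\boldsymbol{\theta}^*)\|_\infty = 0$ we have $\nabla f(\boldsymbol{\theta}^*) = \mathbf{0}$, hence $\nabla_{\mathcal{A}^*} f(\boldsymbol{\theta}^*) = \mathbf{0}$; since $\operatorname{supp}(\boldsymbol{\theta}^*) = \mathcal{A}^*$ by definition, the ground truth satisfies exactly the same stationarity condition on the same subspace. Uniqueness then gives $\hat{\boldsymbol{\theta}^*} = \boldsymbol{\theta}^*$, and combining with the first part yields $\hat{\boldsymbol{\theta}} = \boldsymbol{\theta}^*$.

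I do not expect a genuine obstacle here, since all the hard work is already carried by the support-recovery result in Theorem~\ref{thm:recovery}; the corollary only needs careful bookkeeping. The single point deserving explicit care is the well-definedness and uniqueness of the oracle solution $\hat{\boldsymbol{\theta}^*}$: this is where $m_{3s} > 0$ must be invoked, because without strict convexity on the $s$-dimensional subspace the matching of stationary points in the second part would not force equality of the two parameter vectors.
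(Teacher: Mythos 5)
Your proof is correct and follows essentially the route the paper intends: the paper presents the corollary as a direct consequence of Theorem~\ref{thm:recovery}, relying (as you do) on the fact that Assumption~\ref{con:convex-smooth} makes each subspace-restricted problem strongly convex so that $\hat{\boldsymbol{\theta}}$ and $\hat{\boldsymbol{\theta}^*}$ coincide as the unique minimizer over $\mathcal{A}^*$, and that in the noiseless case $\boldsymbol{\theta}^*$ itself satisfies the subspace stationarity condition. Your extra care about well-definedness of the oracle solution matches the paper's own remark that Assumption~\ref{con:convex-smooth} guarantees unique minimizers in the subspace problems of Algorithm~\ref{algo:main} (positivity of $m_{3s}$ is implicit in the RSC definition there, so your detour through Assumption~\ref{con:bound-gradient} is unnecessary but harmless).
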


\subsection{Convergence Analysis}\label{sec:computation-theory}

This section mainly asserts the computational efficiency of Algorithm~\ref{algo:main}. We first show that Algorithm~\ref{algo:main} has a linear convergence rate.

\begin{theorem}\label{thm:convergence-rate}
	Follow with Assumptions in Theorem~\ref{thm:recovery}, there exists a constant $\delta_1 > 1$ such that
	\begin{equation}\label{eq:convergence-rate}
		\vert  f(\bm{\theta}^{t+1})-f(\bm{\theta}^{*}) \vert 
		\leq \delta_1^{-1} \vert  f(\bm{\theta}^{t})-f(\bm{\theta}^{*}) \vert ,
	\end{equation}
	when $\bm{\theta}^t$ is not the output of Algorithm~\ref{algo:main}. 
\end{theorem}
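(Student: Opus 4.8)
The plan is to bound the two consecutive objective gaps by multiples of the \emph{same} scalar $\|\param^*_{\I_1}\|^2$, so that this common factor cancels and leaves a $t$-independent contraction ratio. The whole argument rides on Lemma~\ref{lemma:loss-now-next}, which already produces a lower bound on the current gap and an upper bound on the gap after one splice, both proportional to $\|\param^*_{\I_1}\|^2$.

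First I would reduce to the nontrivial case. By Theorem~\ref{thm:recovery} the splicing iteration can terminate only at the true support $\A^*$; hence, since $\param^t$ is not the output, $\A^t\neq\A^*$. Consequently $\I_1\coloneqq(\A^t)^c\cap\A^*\neq\emptyset$, and because every index in $\I_1$ belongs to the true support, $\|\param^*_{\I_1}\|^2>0$, so no degenerate cancellation occurs later. I would then invoke Lemma~\ref{lemma:loss-now-next} with $\hat\param=\param^t$. Its lower bound~\eqref{gapOfLoss:beforeSplicing} yields the denominator estimate
\begin{equation*}
\loss(\param^t)-\loss(\param^*)\ \geq\ C_2\,\|\param^*_{\I_1}\|^2,\qquad C_2\coloneqq\tfrac{m}{2}-\sqrt{2s}\,c>0,
\end{equation*}
while the same lemma supplies the explicit candidate set $\tilde{\A}=(\A^t\cup\inclu_{|\I_1|})\setminus\exclu_{|\I_1|}$, produced by a single splice of size $|\I_1|\leq s=k_{\max}$, together with the two-sided bound~\eqref{gapOfLoss:afterSplicing}, namely $|\loss(\tilde\param)-\loss(\param^*)|\leq C_1\|\param^*_{\I_1}\|^2$ with $C_1$ the bracketed quantity in~\eqref{gapOfLoss:afterSplicing}.

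The algorithmic ingredient I would then exploit is that SCOPE scans every splice size $k\in\{1,\dots,k_{\max}\}$ and retains the minimizer, so in particular $\loss(\param^{t+1})\leq\loss(\tilde\param)$. Chaining these facts gives
\begin{equation*}
|\loss(\param^{t+1})-\loss(\param^*)|\ \leq\ C_1\,\|\param^*_{\I_1}\|^2\ \leq\ \frac{C_1}{C_2}\big(\loss(\param^t)-\loss(\param^*)\big),
\end{equation*}
where the absolute value on the left is licensed by the sign information already present in Lemma~\ref{lemma:loss-now-next}: the bound~\eqref{gapOfLoss:afterSplicing} is itself two-sided, and~\eqref{gapOfLoss:beforeSplicing} applied to $\param^{t+1}$ shows that any off-target iterate still satisfies $\loss\geq\loss(\param^*)$. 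Setting $\delta_1\coloneqq C_2/C_1$ then produces~\eqref{eq:convergence-rate}, and $\delta_1$ is uniform in $t$ precisely because $C_1$ and $C_2$ depend only on $m,M,s$ and the factor $\|\param^*_{\I_1}\|^2$ cancels.

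I expect the main obstacle to be establishing $\delta_1>1$, that is $0<C_1<C_2$. This reduces to a purely algebraic inequality: after substituting $c=\frac{0.35}{\sqrt s}(1.49m-M)$, $v=\frac{M-m}{2}$ and the explicit $\Delta$ into $C_1$, and comparing with $C_2=\frac{m}{2}-\sqrt{2s}\,c$, the $s$-dependence organizes itself into the restricted condition number $M/m$, and the precise numerical constants built into Assumption~\ref{con:bound-gradient} (especially $M/m\leq1.49$) are exactly what force the contraction factor below one. Verifying this inequality uniformly over the admissible range of $M/m$, and checking that $\Delta$ stays small enough not to spoil it, is the delicate part of the argument; everything else is the bookkeeping of sandwiching the two gaps between the bounds of Lemma~\ref{lemma:loss-now-next}.
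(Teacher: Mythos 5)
Your proposal is correct and takes essentially the same route as the paper's proof: both apply the two bounds \eqref{gapOfLoss:beforeSplicing} and \eqref{gapOfLoss:afterSplicing} of Lemma~\ref{lemma:loss-now-next} to the consecutive iterates, cancel the common factor $\|\param^*_{\I_1}\|^2$, and reduce the contraction ratio $\delta_1^{-1}=C_1/C_2$ to a univariate function of $x=M/m$ on $[1,1.49]$ via the substitutions $\frac{v}{m}=\frac{x-1}{2}$ and $\frac{\sqrt{s}c}{m}=0.35(1.49-x)$, with the paper completing the final numerical check ($\max_{x\in[1,1.49]}\delta(x)<0.968$) that you correctly flag as the delicate step. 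Your explicit reductions --- that non-termination forces $\A^t\neq\A^*$ via Theorem~\ref{thm:recovery}, that $\loss(\param^{t+1})\leq\loss(\tilde{\param})$ because the splicing operator scans all $k\leq k_{\max}$, and the sign argument licensing the absolute value --- are points the paper leaves implicit, so your write-up is, if anything, slightly more careful.
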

Theorem~\ref{thm:convergence-rate} claims that, so long as the SCOPE algorithm has not converged, then the splicing operator would give a new solution that substantially decreases the objective function value. From this property, we can deduce SCOPE converges quickly. Though similar properties are achieved in literature~\citep{bahmani_greedy_2011,yuan2017gradient,yuan2020dual}, it is remarkable that our result is obtained without tuning continuous parameters like \citet{bahmani_greedy_2011}. 

Our next theorem gives an explicit \emph{sufficient} number of splicing iterations for support recovery.
\begin{theorem}\label{thm:convergence}
    Suppose the assumptions of Theorem~\ref{thm:recovery} hold. Then 
    there exists a constant $\delta_2>0$ such that, whenever the 
    number of splicing iterations satisfies
    \begin{equation}\label{eq:iteration_times}
        t \;\geq\; \log_{\delta_1}\!\left\lceil
        \frac{\vert f(\bm{\theta}^{0})-f(\bm{\theta}^{*}) \vert}
             {\delta_{2}\, m_{3s}\, \vartheta^{2}}\right\rceil,
    \end{equation}
    Algorithm~\ref{algo:main} recovers the true support, i.e., 
    $\mathcal{A}^{t}=\mathcal{A}^{*}$. Here $\lceil\cdot\rceil$ is 
    the ceiling function, $\delta_1$ is defined in 
    Theorem~\ref{thm:convergence-rate}, and 
    $f(\bm{\theta}^{0}) \coloneqq 
    \min_{\operatorname{supp}\{\bm{\theta}\} = \mathcal{A}^0} 
    f(\bm{\theta})$ for an arbitrary $s$-cardinality subset 
    $\mathcal{A}^0 \subseteq [p]$.
\end{theorem}
Theorem~\ref{thm:convergence} guarantees that SCOPE identifies the 
true active set within a logarithmic number of splicing iterations. 
It also makes explicit how the quality of the initial support 
$\mathcal{A}^0$ (through $f(\param^0) - f(\param^*)$) and the 
minimum signal strength $\vartheta$ affect the iteration count: 
fewer iterations are required when $\mathcal{A}^0$ is close to 
$\mathcal{A}^*$ or when $\vartheta$ is large.

\subsection{Properties without RIP-type Conditions}\label{sec:properties-without-rip}
As discussed in Section~\ref{sec:assumptions}, Assumption~\ref{con:bound-gradient} encompasses a RIP-type condition, which may be restrictive in practical scenarios. However, the following theorems show that appropriately relaxing sparsity levels enables the SCOPE to accurately estimate sparse solutions without assuming the RIP-type condition.

To establish the theorems, we first provide the mathematical characterization of the conditions required in this section. In this part, we denote $s^* \coloneqq | \A^* |$ and let $s \coloneqq | \widehat\A |$. The two conditions are given below. 
\begin{assumption}\label{con:convex-smooth-relaxed}
	$f(\cdot)$ is $m_{s+s^*}$-RSC and $M_{s+s^*}$-RSS.
\end{assumption}
\begin{assumption}\label{con:bound-gradient-relaxed}
	$\|\nabla f(\param^*)\|_{\infty} < \frac{m_{s+s^*}}{2\sqrt{s+s^*}} \vartheta$ where $m_{s+s^*}$ is defined in Assumption~\ref{con:convex-smooth-relaxed}.
\end{assumption}
Assumption~\ref{con:convex-smooth-relaxed} characterizes the relaxing sparsity levels mentioned earlier, requiring $f(\param)$ to exhibit strong convexity and strong smoothness on any $(s+s^*)$-dimensional subspace. The second assumption relates to the systematic noise, ensuring that the noise is controlled by the minimal magnitude of the sparse signal and decays to zero with a rate of $O(s^{-\frac{1}{2}})$. This rate aligns with Assumption~\ref{con:bound-gradient} and matches the rate in previous studies \citep{jain2014iterative, yuan2017gradient}. More importantly, the constant involved in Assumption~\ref{con:bound-gradient-relaxed} is only proportional to the $m_{s+s^*}$ and is independent to the restricted condition number $\frac{M_{s+s^*}}{m_{s+s^*}}$ in Assumption~\ref{con:bound-gradient}. Consequently, this assumption essentially frees from RIP-type conditions.

We now present the guarantee for the solution when sparsity levels are relaxed. 
\begin{theorem}\label{thm:support_recovery_relax}
	Suppose Assumptions~\ref{con:convex-smooth-relaxed} and~\ref{con:bound-gradient-relaxed} hold when setting $s>(1+2 M_{s+s^*}^2 / m_{s+s^*}^2) s^*$. Then, $(\widehat{\param},\widehat{\mathcal{A}})$ satisfies (i) $\widehat{\mathcal{A}}\supseteq \mathcal{A}^*$ and (ii) $\supp(\mathcal{H}_{s^*}(\widehat{\param})) = \A^*$.
\end{theorem}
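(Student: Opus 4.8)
The plan is to establish the inclusion (i) first and then derive the thresholding identity (ii) from it by a perturbation estimate, so the substance lies in (i). Write $m \coloneqq m_{s+s^*}$ and $M \coloneqq M_{s+s^*}$, let $(\hat{\param},\hat{\mathcal{A}})$ be the output with $\hat{\mathcal{I}} \coloneqq \hat{\mathcal{A}}^c$, and recall the first-order optimality $\nabla_{\hat{\mathcal{A}}} f(\hat{\param}) = \mathbf{0}$. Mirroring the proof of Theorem~\ref{thm:recovery}, I would prove (i) by contradiction: assume $\mathcal{I}_1 \coloneqq \hat{\mathcal{I}} \cap \mathcal{A}^* \neq \emptyset$ with $k \coloneqq |\mathcal{I}_1|\geq 1$, and show that the splicing step of size $k$ strictly decreases the objective, contradicting the fact that $\hat{\mathcal{A}}$ is the terminal (stable) support of Algorithm~\ref{algo:main}.

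The heart of the argument is a one-shot comparison of the spliced solution $\tilde{\param}$, whose support is $\tilde{\mathcal{A}} = (\hat{\mathcal{A}} \setminus \mathcal{S}^{\A}_k) \cup \mathcal{S}^{\I}_k$, with $\hat{\param}$. Using the feasible test point that zeros $\hat{\param}$ on $\mathcal{S}^{\A}_k$ and takes one gradient step of length $1/M$ on $\mathcal{S}^{\I}_k$, RSS together with $\nabla_{\hat{\mathcal{A}}} f(\hat{\param}) = \mathbf{0}$ gives $f(\tilde{\param}) - f(\hat{\param}) \leq \tfrac{M}{2}\|\hat{\param}_{\mathcal{S}^{\A}_k}\|^2 - \tfrac{1}{2M}\|\nabla_{\mathcal{S}^{\I}_k} f(\hat{\param})\|^2$, evaluated entirely at $\hat{\param}$ so there is no coupling between the removal and insertion. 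The backward term is controlled by counting: there are $|\hat{\mathcal{A}} \setminus \mathcal{A}^*| = s - s^* + k \geq s - s^*$ coordinates on which $\boldsymbol{\theta}^*$ vanishes, so the smallest-$k$ magnitudes obey $\|\hat{\param}_{\mathcal{S}^{\A}_k}\|^2 \leq \tfrac{k}{s - s^*}\|\hat{\param} - \boldsymbol{\theta}^*\|^2$. The forward term is bounded below using $\|\nabla_{\mathcal{S}^{\I}_k} f(\hat{\param})\| \geq \|\nabla_{\mathcal{I}_1} f(\hat{\param})\|$ (the chosen coordinates maximize the gradient magnitude) and, via a Taylor expansion between $\hat{\param}$ and $\boldsymbol{\theta}^*$, the diagonal Hessian block on $\mathcal{I}_1$ is $\succeq m I$ by Lemma~\ref{lemma1} while the off-diagonal block is bounded by Lemma~\ref{lemma2} and the residual gradient by Assumption~\ref{con:bound-gradient-relaxed}; this yields $\|\nabla_{\mathcal{I}_1} f(\hat{\param})\| \geq m\|\boldsymbol{\theta}^*_{\mathcal{I}_1}\| - \tfrac{M-m}{2}\|(\hat{\param} - \boldsymbol{\theta}^*)_{\hat{\mathcal{A}}}\| - \sqrt{k}\,\|\nabla f(\boldsymbol{\theta}^*)\|_\infty$.

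It then remains to combine these into a strict inequality. A two-sided RSC/RSS bound, again using $\nabla_{\hat{\mathcal{A}}} f(\hat{\param}) = \mathbf{0}$, controls the on-support perturbation $\|(\hat{\param} - \boldsymbol{\theta}^*)_{\hat{\mathcal{A}}}\|$ by the missing signal $\|\boldsymbol{\theta}^*_{\mathcal{I}_1}\|$ and the noise $\|\nabla f(\boldsymbol{\theta}^*)\|_\infty$, so that (after absorbing lower-order noise terms through Assumption~\ref{con:bound-gradient-relaxed}) one has $\|\hat{\param} - \boldsymbol{\theta}^*\|^2 \leq 2\|\boldsymbol{\theta}^*_{\mathcal{I}_1}\|^2$ and the forward term is essentially $\tfrac{m^2}{2M}\|\boldsymbol{\theta}^*_{\mathcal{I}_1}\|^2$. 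A strict decrease $f(\tilde{\param}) < f(\hat{\param})$ is then guaranteed once $\tfrac{m^2}{M} > \tfrac{2 M k}{s - s^*}$; since $k \leq s^*$ this is implied exactly by the hypothesis $s > (1 + 2\tfrac{M_{s+s^*}^2}{m_{s+s^*}^2})s^*$, delivering the required contradiction and hence $\hat{\mathcal{A}} \supseteq \mathcal{A}^*$.

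Given (i), claim (ii) is short. With $\mathcal{A}^* \subseteq \hat{\mathcal{A}}$ we have $\mathcal{I}_1 = \emptyset$, so $\boldsymbol{\theta}^*$ and $\hat{\param}$ are both supported in $\hat{\mathcal{A}}$ and $\langle \nabla f(\hat{\param}), \hat{\param} - \boldsymbol{\theta}^*\rangle = 0$; the two-sided RSC/RSS inequality then gives $\|\hat{\param} - \boldsymbol{\theta}^*\| \leq \tfrac{\sqrt{s}}{m}\|\nabla f(\boldsymbol{\theta}^*)\|_\infty < \tfrac{\vartheta}{2}$ by Assumption~\ref{con:bound-gradient-relaxed} and $\sqrt{s} < \sqrt{s+s^*}$. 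Hence $|\hat{\param}_j| > \tfrac{\vartheta}{2}$ for every $j \in \mathcal{A}^*$ and $|\hat{\param}_j| < \tfrac{\vartheta}{2}$ for every $j \in \hat{\mathcal{A}} \setminus \mathcal{A}^*$, so the $s^*$ largest-magnitude entries of $\hat{\param}$ are exactly $\mathcal{A}^*$, i.e.\ $\supp(\mathcal{H}_{s^*}(\hat{\param})) = \mathcal{A}^*$. I expect the main obstacle to be the coupling in (i): the perturbation $\|\hat{\param} - \boldsymbol{\theta}^*\|$ enters both the forward lower bound and the backward upper bound and also appears inside $\|\nabla_{\mathcal{I}_1} f(\hat{\param})\|$, so the off-diagonal Hessian and residual-gradient terms must be tracked with enough care that the clean threshold $s > (1 + 2 M^2/m^2)s^*$ really does close the inequality.
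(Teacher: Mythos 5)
Your part (ii) is correct, and it is actually more elementary than the paper's own argument: the paper proves (ii) by contradiction via the tight hard-thresholding bound $\|\param^*-\mathcal{H}_{s^*}(\hat{\param})\| \leq \frac{\sqrt{5}+1}{2}\|\param^*-\hat{\param}\|$ of Shen et al., whereas your route --- restricted strong monotonicity on $\hat{\A}$ with $\nabla_{\hat{\A}}\loss(\hat{\param})=\mathbf{0}$ giving $\|\hat{\param}-\param^*\| \leq \frac{1}{m}\|\nabla_{\hat{\A}}\loss(\param^*)\| \leq \frac{\sqrt{s}}{m}\|\nabla\loss(\param^*)\|_{\infty} < \frac{\sqrt{s}}{2\sqrt{s+s^*}}\vartheta < \frac{\vartheta}{2}$ by Assumption~\ref{con:bound-gradient-relaxed}, followed by per-coordinate separation --- is clean and self-contained (note you must add the two RSC inequalities to get the monotonicity constant $\frac{1}{m}$ rather than $\frac{2}{m}$; a single-sided chain loses the needed factor of two). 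Likewise, your one-shot splicing bound $f(\tilde{\param})-f(\hat{\param}) \leq \frac{M}{2}\|\hat{\param}_{\mathcal{S}_{\A}^{(k)}}\|^2 - \frac{1}{2M}\|\nabla_{\mathcal{S}_{\I}^{(k)}}f(\hat{\param})\|^2$ is valid (it is a fixed-$k$ cousin of Lemma~\ref{lemma:loss_down_bound}), as is the counting bound $\|\hat{\param}_{\mathcal{S}_{\A}^{(k)}}\|^2 \leq \frac{k}{s-s^*}\|\hat{\param}-\param^*\|^2$.

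The genuine gap is in the two estimates that close part (i), and it sits exactly at the obstacle you flagged. Your forward bound splits the Taylor expansion into a diagonal Hessian block on $\mathcal{I}_1$ (eigenvalues at least $m$) and an off-diagonal block penalized by $\frac{M-m}{2}\|(\hat{\param}-\param^*)_{\hat{\A}}\|$, and the only available control of the on-support perturbation is $\|(\hat{\param}-\param^*)_{\hat{\A}}\| \leq \frac{1}{m}\|\boldsymbol{d}^*_{\hat{\A}}\| + \frac{M-m}{2m}\|\param^*_{\mathcal{I}_1}\|$. Substituting, the coefficient of $\|\param^*_{\mathcal{I}_1}\|$ in your lower bound on $\|\nabla_{\mathcal{I}_1}f(\hat{\param})\|$ is $m - \frac{(M-m)^2}{4m}$, which is \emph{negative} as soon as $M > 3m$, and similarly $\|\hat{\param}-\param^*\|^2 \leq \bigl[1+(\tfrac{M-m}{2m})^2\bigr]\|\param^*_{\mathcal{I}_1}\|^2$ exceeds $2\|\param^*_{\mathcal{I}_1}\|^2$ in the same regime. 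Nothing in Assumptions~\ref{con:convex-smooth-relaxed}--\ref{con:bound-gradient-relaxed} caps $M/m$ --- that is the entire point of this theorem --- and enlarging $s$ cannot repair the defect, because it multiplies $\|\param^*_{\mathcal{I}_1}\|^2$ rather than the $\frac{k}{s-s^*}$ factor; so your closing inequality $\frac{m^2}{M} > \frac{2Mk}{s-s^*}$ is never reached at large condition number. The paper avoids the split altogether with two devices. First, since $\hat{\param}-\param^*$ is supported entirely on $\mathcal{I}_2^c = \hat{\A}\cup\mathcal{I}_1$ and $\hat{\boldsymbol{d}}_{\hat{\A}}=\mathbf{0}$, one joint-block strong-convexity bound gives $\|\hat{\boldsymbol{d}}_{\mathcal{I}_1}\| = \|\hat{\boldsymbol{d}}_{\mathcal{I}_2^c}\| \geq m\|\hat{\param}-\param^*\| - \|\boldsymbol{d}^*_{\mathcal{I}_2^c}\|$ with no $(M-m)$ penalty at all; combined with $\|\hat{\param}-\param^*\| \geq \frac{1}{\sqrt{2}}(\vartheta+\|\hat{\param}_{\mathcal{A}_2}\|)$ and Assumption~\ref{con:bound-gradient-relaxed}, this yields $\rho_1 \leq \frac{1}{m}\sqrt{\frac{2s^*}{s-s^*}} < \frac{1}{M}$ under the stated sparsity. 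Second, rather than fixing $k=|\mathcal{I}_1|$, Lemma~\ref{lemma:loss_down_bound} chooses $k$ adaptively so that $\frac{1}{M+\varepsilon}\in[\rho_k,\rho_{k+1})$, which forces $\|(M+\varepsilon)\hat{\param}_{\mathcal{S}_{\A}^{(k)}}\| \leq \|\hat{\boldsymbol{d}}_{\mathcal{S}_{\I}^{(k)}}\|$ by construction and cancels the backward term exactly, leaving a guaranteed strict decrease and the contradiction. Your fixed-$k$ argument can be salvaged by replacing the diagonal/off-diagonal split with the joint-block bound above (at the cost of a somewhat worse constant, roughly $s > (1+4\frac{M^2}{m^2})s^*$), but as written the proposal only proves the theorem under an implicit RIP-type restriction of order $M \leq 3m$, which is precisely what Section~\ref{sec:properties-without-rip} is designed to remove.
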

From Theorem~\ref{thm:support_recovery_relax}, the SCOPE gives an estimated support set that includes the true support set. Even more interestingly, (ii) reveals the top $s^*$ coordinates (in the sense of magnitude) in $\widehat{\param}$ are exactly the true support set. It is worth noting that while the sparsity level shall increase linearly with respect to the restricted condition number, the rate of increase is slower compared to \citet{jain2014iterative} and \citet{yuan2017gradient}. This advantage arises from the fact that, in each splicing iteration, Algorithm~\ref{algo:main} effectively utilizes the objective value to identify an appropriate splicing set (see Lemma~\ref{lemma:loss_down_bound}).

Our next theorem demonstrates that even without imposing RIP-type conditions, Algorithm~\ref{algo:main} still exhibits a linear convergence rate when the sparsity level is relaxed.
\begin{theorem}\label{thm:convergence_rate_relax}
	Suppose Assumption~\ref{con:convex-smooth-relaxed} holds when setting $s>2(1+2 \frac{M_{s+s^*}^2}{m_{s+s^*}^2}) s^*$ in Algorithm~\ref{algo:main}, then
	$$f(\param^{t+1})-f(\param^*) \leq \left(1-\delta_4\right)\left(f(\param^t)-f(\param^*)\right),$$
	when $f(\param^t) \geq f(\param^*)$ and $\param^t$ is not the solution of Algorithm~\ref{algo:main},
        where $\delta_4=\frac{m_{s+s^*}}{4 M_{s+s^*}}\left(1-\frac{4 s^*}{s-2 s^*} \frac{M_{s+s^*}^2}{m_{s+s^*}^2}\right) \in (0, \frac{m_{s+s^*}}{4 M_{s+s^*}})$.
\end{theorem}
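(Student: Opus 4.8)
Throughout the plan write $m := m_{s+s^*}$ and $M := M_{s+s^*}$, and let $\mathcal{I}_1 := \mathcal{A}^* \setminus \mathcal{A}^t$ collect the coordinates of the true support that the current active set $\mathcal{A}^t$ has missed. Because $\param^t$ minimizes $f$ over the subspace supported on $\mathcal{A}^t$, first-order stationarity gives $\nabla_{\mathcal{A}^t} f(\param^t) = \zerovec$, so that the whole gradient at $\param^t$ is carried by $\mathcal{I}^t$, and the part that matters when testing against $\param^*$ lives on $\mathcal{I}_1$. The plan is to (i) upper bound the optimality gap $f(\param^t)-f(\param^*)$ by the squared gradient mass on $\mathcal{I}_1$, (ii) lower bound the one-step decrease $f(\param^t)-f(\param^{t+1})$ by the squared gradient mass captured by the best splice, using Lemma~\ref{lemma:loss_down_bound}, and (iii) reconcile the two via the sparsity hypothesis. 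The degenerate case $\mathcal{I}_1=\emptyset$ means $\mathcal{A}^*\subseteq\mathcal{A}^t$, which forces $f(\param^t)\le f(\param^*)$ and hence (with the standing hypothesis $f(\param^t)\ge f(\param^*)$) a vanishing gap, so the contraction is trivial; I therefore concentrate on $\mathcal{I}_1\neq\emptyset$.

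For step (i) I would invoke Assumption~\ref{con:convex-smooth-relaxed} (RSC at level $s+s^*$, which applies since $\|\param^t-\param^*\|_0\le s+s^*$) to write
\[
f(\param^*) \ge f(\param^t) + \langle \nabla f(\param^t), \param^*-\param^t\rangle + \tfrac{m}{2}\|\param^*-\param^t\|^2 .
\]
Using $\nabla_{\mathcal{A}^t} f(\param^t)=\zerovec$ and $\param^t_{\mathcal{I}^t}=\zerovec$, the inner product reduces to $\langle \nabla_{\mathcal{I}_1} f(\param^t),\param^*_{\mathcal{I}_1}\rangle$, while $\|\param^*-\param^t\|^2 \ge \|\param^*_{\mathcal{I}_1}\|^2$. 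Bounding $-\langle\cdot,\cdot\rangle$ by Cauchy--Schwarz and maximizing the resulting quadratic $\|\nabla_{\mathcal{I}_1} f(\param^t)\|\,\|\param^*_{\mathcal{I}_1}\|-\tfrac{m}{2}\|\param^*_{\mathcal{I}_1}\|^2$ over the unknown norm yields the clean bound
\[
f(\param^t)-f(\param^*) \le \tfrac{1}{2m}\,\|\nabla_{\mathcal{I}_1} f(\param^t)\|^2 .
\]

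For step (ii) I would apply Lemma~\ref{lemma:loss_down_bound} with the choice $\varepsilon=M$, for which the per-splice coefficient $\frac{\varepsilon}{2(M+\varepsilon)^2}$ attains its maximal value $\frac{1}{8M}$. Since SCOPE accepts the best splice among $k\in[s]$, we have $f(\param^{t+1})\le f(\tilde\param^{(k)})$ for the index $k$ supplied by the lemma, hence $f(\param^t)-f(\param^{t+1}) \ge \frac{1}{8M}\|\nabla_{\mathcal{S}_\mathcal{I}^{(k)}} f(\param^t)\|^2$, where $\mathcal{S}_\mathcal{I}^{(k)}$ gathers the $k$ inactive coordinates of largest gradient magnitude. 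The crux is then to compare $\|\nabla_{\mathcal{S}_\mathcal{I}^{(k)}} f(\param^t)\|^2$ with $\|\nabla_{\mathcal{I}_1} f(\param^t)\|^2$: whenever the lemma's index satisfies $k\ge|\mathcal{I}_1|$, the top-$k$ set dominates the gradient mass on $\mathcal{I}_1$, so $\|\nabla_{\mathcal{S}_\mathcal{I}^{(k)}} f\|^2 \ge \|\nabla_{\mathcal{I}_1} f\|^2$, and combining with step (i) gives the clean rate $\frac{m}{4M}$.

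The main obstacle is precisely the possibility that $k<|\mathcal{I}_1|$, in which case the splice does not capture the full gradient mass of the missed coordinates. Here I expect to exploit the monotone threshold structure of the ratios $\rho_j$ from Lemma~\ref{lemma:loss_down_bound}: at least $s-s^*$ of the $s$ active coordinates are false and must carry small parameter magnitudes (controllable through RSS together with Assumption~\ref{con:bound-gradient-relaxed}), whereas the missed coordinates in $\mathcal{I}_1$ carry gradients bounded below by the signal $\vartheta$, so $\rho_{|\mathcal{I}_1|}$ is forced to be small and the escaping mass is limited. Quantifying this leakage is what I anticipate will produce the multiplicative correction $1-\frac{4s^*}{s-2s^*}\frac{M^2}{m^2}$; note this factor is positive exactly under the hypothesis $s>2\bigl(1+2\frac{M^2}{m^2}\bigr)s^*$, the same inequality that renders $\delta_4>0$, which is strong evidence that the loss is concentrated in this capture step. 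Assembling the pieces gives $f(\param^t)-f(\param^{t+1}) \ge \delta_4\bigl(f(\param^t)-f(\param^*)\bigr)$, which rearranges to the asserted contraction $f(\param^{t+1})-f(\param^*)\le(1-\delta_4)\bigl(f(\param^t)-f(\param^*)\bigr)$.
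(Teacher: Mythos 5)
Your steps (i) and (ii) are correct and coincide with the ``easy case'' of the paper's own proof: the estimate $f(\param^t)-f(\param^*)\leq\frac{1}{2m}\|\nabla_{\mathcal{I}_1}f(\param^t)\|^2$ is exactly the paper's completed-square bound (obtained by restricting the RSC inequality to $\mathcal{I}_2^c$, where the gradient on the active set vanishes), and when the index $k$ delivered by Lemma~\ref{lemma:loss_down_bound} is $s^*$ (hence $k\geq|\mathcal{I}_1|$) the paper concludes in the same way via $\|\nabla_{\mathcal{S}_\mathcal{I}^{(k)}}f(\param^t)\|\geq\|\nabla_{\mathcal{I}_1}f(\param^t)\|$. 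But the hard case $k<|\mathcal{I}_1|$ --- which you correctly flag as the crux, and which is precisely where the hypothesis $s>2(1+2\frac{M^2}{m^2})s^*$ and the exact value of $\delta_4$ enter --- is left as an anticipation (``I expect'', ``I anticipate'') rather than an argument, and the mechanism you sketch for it would not go through. You propose to lower-bound the gradients on $\mathcal{I}_1$ by the signal strength $\vartheta$ via Assumption~\ref{con:bound-gradient-relaxed}; but Theorem~\ref{thm:convergence_rate_relax} assumes only Assumption~\ref{con:convex-smooth-relaxed} --- the gradient-noise condition is added only later, in Theorem~\ref{thm:convergence_relax} --- so no such lower bound is available, and indeed the gradient mass on the missed coordinates can be small here. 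The paper's treatment of this case needs no noise assumption and runs the control in the opposite direction: writing $\hat{\boldsymbol{d}}=\nabla f(\param^t)$, it \emph{upper}-bounds the uncaptured mass $\|\hat{\boldsymbol{d}}_{\mathcal{I}_{12}}\|$ by $(M+\varepsilon)\sqrt{\tfrac{s^*}{s-2s^*}}\,\|\param^t-\param^*\|$, using that the at least $s-2s^*$ false-but-retained coordinates in $\mathcal{A}_{21}$ all have magnitude above the splicing threshold (the ratio $\rho_{k+1}$ exceeds $\tfrac{1}{M+\varepsilon}$ by the choice of $k$), and then absorbs this term into the RSC quadratic $-\tfrac{m}{2}\|\param^t-\param^*\|^2$ by completing a square with a free parameter $l$.

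A second, related problem is that the stated constant requires a calibration your plan gives away. The paper does not take $\varepsilon=M$; it sets $\varepsilon=\bigl(\tfrac{1}{2}\delta_4^{-1}\tfrac{m}{M}-1\bigr)^{-1}M$, precisely so that $l=1-2\delta_4\tfrac{M}{m}$ solves $\tfrac{m}{4l}-\tfrac{m}{2}+\tfrac{l}{m}(M+\varepsilon)^2\tfrac{s^*}{s-2s^*}=0$, which makes the $\|\param^t-\param^*\|^2$ terms cancel exactly and yields $f(\param^t)-f(\param^*)\leq\delta_4^{-1}\bigl(f(\param^t)-f(\tilde{\param}^{(k)})\bigr)$ in the hard case. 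Your fixed choice $\varepsilon=M$ maximizes the per-splice coefficient $\tfrac{1}{8M}$ in the easy case (where it gives the stronger rate $\tfrac{m}{4M}$), but it removes the degree of freedom needed to close the hard case with the stated $\delta_4$. In short: the degenerate case and the easy case are right and match the paper, but the part of the proof that actually produces the correction factor $1-\tfrac{4s^*}{s-2s^*}\tfrac{M^2}{m^2}$ is missing, and the route you anticipate for it relies on an assumption the theorem does not grant.
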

In comparison to Theorem~\ref{thm:convergence-rate}, Theorem~\ref{thm:convergence_rate_relax} does not require the constraint on $\|\nabla f(\param^*)\|_{\infty}$. This is because Assumption~\ref{con:convex-smooth-relaxed} imposes an RSC and RSS assumption on higher dimensional subspaces, which simplifies the analysis for the deduction of the objective value after each splicing iteration.
%

With the linear convergence rate established in Theorem~\ref{thm:convergence_rate_relax}, we can derive the number of splicing iterations for including $\A^*$ when the sparsity level is relaxed.
\begin{theorem}\label{thm:convergence_relax}
    Under the same conditions and notation as 
    Theorem~\ref{thm:convergence_rate_relax}, and if 
    Assumption~\ref{con:bound-gradient-relaxed} additionally holds, 
    then whenever the number of splicing iterations satisfies
    $$
        t \;\geq\; \log_{(1-\delta_4)^{-1}}\!\left\lceil
        \frac{\max\{f(\param^0)-f(\param^*),\,0\}}
             {\left(\tfrac{1}{2}-\delta_5\right) m_{s+s^*}\, \vartheta^{2}}
        \right\rceil,
    $$
    we have $\mathcal{A}^{t}\supseteq\mathcal{A}^{*}$, where 
    $\delta_5$ is a positive constant specified in the proof.
\end{theorem}

\section{Applications}\label{sec:application}

We now instantiate Algorithm~\ref{algo:main} on three canonical 
sparse learning problems: compressed sensing 
(Section~\ref{sec:linear}), sparse classification via $\ell_1$-free 
logistic regression (Section~\ref{sec:classification}), and 
reconstructing binary pairwise Markov networks 
(Section~\ref{sec:ising}). For each task we identify standard 
statistical conditions on the design and the noise---the sparse 
Riesz condition, sub-Gaussian noise, and the Ising 
identifiability condition, respectively---under which 
Assumptions~\ref{con:convex-smooth} and~\ref{con:bound-gradient} 
hold with high probability, and we state the resulting support 
recovery and convergence guarantees as corollaries of 
Theorems~\ref{thm:recovery} and~\ref{thm:convergence}. 
Analogous guarantees under 
Assumptions~\ref{con:convex-smooth-relaxed} 
and~\ref{con:bound-gradient-relaxed} can be derived by the same 
argument with the sparsity level $3s$ replaced by $s + s^*$; we 
omit these derivations for brevity.

\subsection{Compressed Sensing}\label{sec:linear}

Standard compressed sensing problems aim to estimate the 
a $s$-sparse vector $\param^*$ from noisy linear measurements $\mathbf{y}$ that comes from the linear model: 
$\mathbf{y}=\mathbf{X} \param^* +\bm{\epsilon},$
where $\mathbf{X} \in \mathbb{R}^{n\times p}$ is a known sensing matrix, $\bm{\epsilon} = (\epsilon_1, \ldots, \epsilon_n)^\top$ is the additive measurement noise, and $\epsilon_1, \ldots, \epsilon_n$ are \textit{i.i.d.} zero-mean random noises.
Under the linear model, compressed sensing is formulated as a sparsity-constraint problem
~\eqref{eq:quardticobj} in which the objective function $f(\param) = \|\mathbf{y}-\mathbf{X}\param\|^{2}$.
Since $f$ is a convex function with twice differentiation, we can directly apply Algorithm~\ref{algo:main} to obtain an estimator $(\widehat{\param}, \widehat{\mathcal{A}})$ for $(\param^*, \mathcal{A}^*)$. 

We now verify Assumptions~\ref{con:convex-smooth} 
and~\ref{con:bound-gradient} for this task under standard 
conditions on the design matrix and the noise. Theorems~\ref{thm:recovery}--\ref{thm:convergence} 
then apply directly, giving high-probability support recovery and 
a linear rate of convergence for the resulting estimator. The two 
conditions are the following.
\begin{enumerate}[label=(A\arabic*), start=1]
	\item There exists constants $0<m_{3s}<M_{3s} \leq 1.49m_{3s} <\infty$ such that $\forall \mathcal{A} \subseteq \{1,2, \dots, p\}$ with $\vert \mathcal{A}\vert  \leq 3s$ and $\forall \bm{u} \in \mathbb{R}^{\vert \mathcal{A}\vert }$, we have:
    $m_{3s}\|\bm{u}\|^{2} \leq \frac{1}{n}\left\|\mathbf{X}_{\mathcal{A}} \bm{u}\right\|^{2} \leq M_{3 s}\|\bm{u}\|^{2}$.  \label{con:linear:correlation}
	\item For each $i \in [n]$, $\epsilon_{i}$ is a sub-Gaussian random variable with parameter $\sigma$, i.e., for all $t \geq 0$, we have:
	$\mathbb{P}\{\left\vert \epsilon_{i}\right\vert  \geq t\} \leq 2 \exp \left(-t^{2} / \sigma^{2}\right).$
	\label{con:linear:noise}
\end{enumerate}
Condition~\ref{con:linear:correlation} refers to the sparse Riesz condition in the literature for the analysis of sparse generalized linear models, which serves as an identifiability condition for support recovery \citep{zhang2008sparsity, shen2012likelihood}. Besides, as we have discussed in Section~\ref{sec:assumptions}, $M_{3s} \leq 1.49 m_{3s}$ is the widely accepted RIP-type condition for learning sparse linear model. Condition~\ref{con:linear:noise} is a standard assumption in literature \citep{wainwright2019hds}. It ensures that the probability of deviation from zero decays exponentially fast. 

We begin to confirm Assumptions~\ref{con:convex-smooth} and~\ref{con:bound-gradient} hold wherein we assume $\mathbf{x}_i^\top \mathbf{x}_i=n$ for every $i \in [p]$ to facilitate the analyses. 
First, for any $\param, \param^\prime$ such that $\vert \operatorname{supp}(\param - \param^\prime)\vert  \leq 3s$, we have:
$$
\frac{m_{3 s}}{2}\|\param'-\param\|^{2}\leq
f(\param')-f(\param)-\nabla f(\param)^\top (\param'-\param)
=\frac{1}{2n}\|\mathbf{X}(\param'-\param)\|^{2} \leq \frac{M_{3 s}}{2}\|\param'-\param\|^{2}
$$
due to Condition~\ref{con:linear:correlation}. 
Therefore, Assumption~\ref{con:convex-smooth} holds. 
Second, we have
$
n \|\nabla f(\param^{*}) \|_{\infty}= \|\mathbf{X}^{\top} \bm{\epsilon} \|_{\infty}= \max\limits_{1 \leq i \leq p}\vert \mathbf{X}_{i}^{\top} \bm{\epsilon}\vert .
$
Owing to Condition~\ref{con:linear:noise}, $\mathbf{X}_{i}^{\top} \bm{\epsilon}$ is a sub-Gaussian random variable with parameter $\sqrt{n}\sigma$ \citep{wainwright2019hds}. Then, simple algebra can show that for the constant $c = 1.49m_{3s} - M_{3s} \geq 0$,
\begin{align*}
    \mathbb{P}\left(\left\|\nabla f\left(\param^{*}\right)\right\|_{\infty} \geq c \vartheta \right)
    = \mathbb{P}\left(n^{-1} \max_{i \in [p]} \vert \mathbf{x}_i^\top \bm{\epsilon}\vert  \geq c\vartheta \right) 
    \leq p \mathbb{P}\left(\vert \mathbf{x}_i^\top \bm{\epsilon}\vert  \geq n c\vartheta \right) 
    \leq 2p \exp\left(-\frac{nc^2\vartheta^2}{2 \sigma^{2}}\right).
\end{align*}
Consequently, Assumption~\ref{con:bound-gradient} holds with probability at least $1-2 \exp\{\log{p}-\frac{n c^{2} \vartheta^{2}}{2 \sigma^{2}} \}$.

In summary, when Conditions~\ref{con:linear:correlation}-\ref{con:linear:noise}, Theorem~\ref{thm:recovery} ensures $\operatorname{supp}(\widehat{\mathcal{A}}) = \operatorname{supp}(\mathcal{A}^*)$ with high probability. In particular, if $\vartheta = O(\sqrt{\log{p} / n})$, the probability of $\operatorname{supp}(\widehat{\mathcal{A}}) = \operatorname{supp}(\mathcal{A}^*)$ converges to 1 as $n \rightarrow \infty$. Furthermore, Theorem~\ref{thm:convergence} shows that, with high probability, Algorithm~\ref{algo:main} terminate after $O\left(\log_{\delta_1}\left(\frac{\vert f(\param^0) - \|\bm{\epsilon}\| \vert }{\delta_2 m_{3s} \vartheta} \right)\right)$ splicing iteration. Since each splicing iteration takes $O(snp)$ time complexity, the total time complexity of Algorithm~\ref{algo:main} is $O\left(snp\left(\log_{\delta_1}\left(\frac{\vert f(\param^0) - \|\bm{\epsilon}\| \vert }{\delta_2 m_{3s} \vartheta} \right)\right)\right)$.

\subsection{Sparse Classifier}\label{sec:classification}
Finding the core elements to classify the objects is a crucial application task and attracts much attention from the statistical learning community. The logistic regression model is an important model to solve this problem, which leverages the information of explanatory variables $\mathbf{x} \in \mathbb{R}^p$ to predict the class of response variable $y \in \{0, 1\}$. The underlying logistic regression model is expressed as follows:
\begin{equation}\label{eq:logistic-regression}
	\mathbb{P}(y = 1 \mid \mathbf{x}) = \frac{\exp(\mathbf{x}^\top \param^*)}{1+\exp(\mathbf{x}^\top \param^*)},
\end{equation}
where $\param^* \in \mathbb{R}^p$ is an unknown $s$-sparse vector that to be estimated. 

We want to estimate $\param^*$ by collecting $n$ independent samples of the explanatory variables and the response variable that are stored into $\mathbf{X} = (\mathbf{x}_1, \ldots, \mathbf{x}_n)^\top \in \mathbb{R}^{n\times p}$ and $\mathbf{y} = (y_1, \ldots, y_n) \in \mathbb{R}$. With these samples, we estimate $\param^*$ by minimizing the negative log-likelihood under sparsity constraint $\argmin_{\param \in \mathbb{R}^p}, \loss(\param) \textup{ s.t. } \| \param \|_0 \leq s$,
where 
$$f(\param) \coloneqq -\frac{1}{n}\sum_{i=1}^{n}\left\{\bm{y}_{i} \mathbf{x}_{i}^{\top} \param-b(\mathbf{x}_{i}^{\top} \param)\right\}$$
and $b(t)=\log \left(1+\exp(t)\right)$. We can apply Algorithm~\ref{algo:main} on this task to solve $(\widehat{\param}, \widehat{\mathcal{A}})$ because $f(\cdot)$ has the second differentiation.

Next, we show that, under some reasonable conditions, Algorithm~\ref{algo:main} completes iterations quickly, and its output enjoys statistical guarantees. The conditions are presented below.
\begin{enumerate}[label=(B\arabic*), start=1]
	\item The same as Condition~\ref{con:linear:correlation} in Section~\ref{sec:linear}.\label{con:classification:correlation}
 	\item There exists a positive constant $\delta$ satisfying $\frac{M_{3s}}{5.96m_{3s}} \leq \delta \leq \min\limits_{i \in [n]} \nabla^2 b(\mathbf{x}_i \check{\param})$ for any $3s$-sparse vector $\check{\param}$ such that $\loss(\check{\param}) < \loss(\param^0)$.
	\label{con:classification:noise}
\end{enumerate}
The upper bound for $\delta$ in Condition~\ref{con:classification:noise} restricts $\mathbf{y}_i$ cannot be a degenerate random variable to avoid infinitely small variances \citep{rigollet2012kullback} so as to ensure the objective function being strongly convex. This assumption is also introduced in \citet{yuan2017gradient, yuan2020dual}. The lower bound condition on $\delta$ is a RIP-type condition for $\frac{M_{3s}}{m_{3s}}$ under logistic regression.

We verify Assumptions~\ref{con:convex-smooth}-\ref{con:bound-gradient} with Conditions~\ref{con:classification:correlation}-\ref{con:classification:noise}. Without loss of generality, we assume $\| \mathbf{x}_j \| = \sqrt{n}$.
First, we verify RSC and RSS of $f(\cdot)$.
According to Taylor-series expansion, for $\forall \param,\param^{\prime}$ with $\left\|\param-\param^{\prime}\right\|_{0} \leq 3s$, we have:
$f(\param^{\prime})-f(\param)-\nabla f(\param)^{\top}(\param^{\prime}-\param )=\frac{1}{2 n} \| H(\mathbf{X}, \bar{\param}) \mathbf{X}(\param^{\prime}-\param )\|^{2},$
where $H(\mathbf{X}, \bar{\param})= \operatorname{diag}\left\{\sqrt{\nabla^{2} b(\mathbf{x}_{1}^{\top} \bar{\param})}, \cdots, \sqrt{\nabla^{2} b(\mathbf{x}_{n}^{\top} \bar{\param})}\right\}$ and $\bar{\param}=\lambda \param+(1-\lambda) \param^{\prime}$ for some $\lambda \in(0,1)$. 
Then, by Conditions~\ref{con:classification:correlation} and~\ref{con:classification:noise} together with the fact that $\nabla^{2} b(t) \leq \frac{1}{4}$, we have:
\begin{align*}
    \frac{1}{2 n}\left\|H(\mathbf{X}, \bar{\param}) \mathbf{X}(\param^{\prime}-\param )\right\|^{2} 
    &\leq \frac{1}{8 n}\left\|\mathbf{X}(\param^{\prime}-\param )\right\|^{2} \leq \frac{M_{3 s}}{8}\left\|\param^{\prime}-\param\right\|^{2}, 
    \\
    \frac{1}{2 n}\left\|H(\mathbf{X}, \bar{\param}) \mathbf{X}(\param^{\prime}-\param )\right\|^{2} 
    &\geq \frac{\delta}{2 n}\left\|\mathbf{X}(\param^{\prime}-\param )\right\|^{2} \geq \frac{\delta m_{3s}}{2}\left\|\param^{\prime}-\param\right\|^{2}.
\end{align*}
Thus, $f(\param)$ is $\frac{M_{3 s}}{4}$-RSS and 
$\frac{m_{3s}}{5.96}$-RSC. 
Next, for the gradient at $\param^*$, it has the form $\nabla f(\param^{*})= (g_{1}, \ldots, g_{p})$, where 
$
g_{j}=\frac{1}{n}\sum_{i = 1}^{n}(\nabla b(\mathbf{x}_{i}^{\top} \param^*)-\bm{y}_{i}) \mathbf{x}_{ij}.
$
Since $(\nabla b(\mathbf{x}_{i}^{\top} \param^*)-\bm{y}_{i}) \mathbf{x}_{ij}$ is a zero-mean random variable with range $[-\mathbf{x}_{ij}, \mathbf{x}_{ij}]$, then $g_j$ satisfies 
$\mathbb{P}(\vert  g_j\vert  > c\vartheta) \leq 2 \exp\{-\frac{c^2\vartheta^2 n}{2}\}$ according to Hoeffding's inequality for general bounded random variables where 
$c = \frac{0.35}{\sqrt{s}}(1.49\delta m_{3s} - \frac{M_{3 s}}{4}) > 0$ 
(because of Condition~\ref{con:classification:correlation}).
Consequently, it results in $\mathbb{P}\left(\left\|\nabla f\left(\param^{*}\right)\right\|_{\infty} \geq c\vartheta \right) \leq 2 \exp \left\{\log p-\frac{n c^2\vartheta^{2}}{2}\right\}$, which means Assumption~\ref{con:bound-gradient} holds with high probability. And hence, with Conditions~\ref{con:classification:correlation}--\ref{con:classification:noise}, we can deduce $\operatorname{supp}\{\widehat{\param}\} = \mathcal{A}^*$ holds with probability at least $1 - 2 \exp \left\{\log p-\frac{nc^2 \vartheta^{2}}{2}\right\}$. When $\vartheta = O(\sqrt{\log{p}/n})$,  we can see that $\mathbb{P}(\operatorname{supp}\{\widehat{\param}\} = \mathcal{A}^*)$ reaches 1 as $n$ goes to infinity.

\subsection{Sparse Ising model}\label{sec:ising}
The Ising model is a classical undirected graphical model for pairwise interactions among binary random variables, with applications in statistical physics, neuroscience, computational biology, and the social sciences \citep{lokhov2018optimal}. Let $\mathbf{x} = (x_1, \ldots, x_p)^\top \in \{-1, 1\}^p$. The Ising model with interaction matrix $\bm{\theta}^* \in \mathbb{R}^{p \times p}$ 
assigns probability
\begin{equation}\label{eq:ising-density}
    \mathbb{P}(\mathbf{x}; \bm{\theta}^*) 
    \;=\; \frac{1}{Z(\bm{\theta}^*)} 
    \exp\!\left\{ \tfrac{1}{2} \sum_{k, l = 1}^{p} 
    \bm{\theta}^*_{kl}\, x_k x_l \right\},
    \qquad
    Z(\bm{\theta}^*) \;\coloneqq\;
    \sum_{\mathbf{z} \in \{-1, 1\}^p}
    \exp\!\left\{ \tfrac{1}{2} \sum_{k, l = 1}^{p} 
    \bm{\theta}^*_{kl}\, z_k z_l \right\},
\end{equation}
where $\bm{\theta}^*$ is symmetric with zero diagonal, and its 
off-diagonal entries encode the strength of pairwise interactions. 
Variables $k$ and $l$ are adjacent in the underlying interaction 
graph if and only if $\bm{\theta}^*_{kl} \neq 0$; recovering 
$\operatorname{supp}(\bm{\theta}^*)$ is therefore equivalent to 
recovering the graph structure. In most applications of interest 
each variable interacts with only a few others, so $\bm{\theta}^*$ 
is sparse and our sparsity-constrained framework applies. It is worth noting that, in the Ising model, variables $k$ and $l$ are connected by an edge if $\param^*_{k l} \neq 0$. Therefore, $\param^*$ corresponds to an interaction network for variables $x_1, \ldots, x_p$. From intuition, each variable in the interaction network only connects to a few variables; thus, the interaction network should be sparse. To rephrase, $\param^*$ is a sparse matrix to be recovered. 

To this end, given $n$ samples $\mathbf{X}=(\mathbf{x}_{1}, \ldots, \mathbf{x}_{n})^\top \in \{-1,1\}^{n\times p}$, we can minimize the negative log-likelihood with the constraint of non-zero entries in the upper diagonal matrix:
\begin{align*}
	\argmin_{\param \in \mathcal{S}_p} -\frac{1}{n}\sum_{i=1}^n \log\left(\mathbb{P}(\mathbf{x}_i)\right), \textup{ s.t. } \|\param\|_0 \leq s,
\end{align*}
where $\mathcal{S}_p$ is the space of $p$-by-$p$ zero-diagonal symmetric matrix and $\|\param\|_0 = \sum_{k < l}\textup{I}(\param_{k l} \neq 0)$. Computing the log-likelihood is intractable because evaluating 
$Z(\bm{\theta})$ requires summing over the $2^p$ configurations 
of $\mathbf{x}$. We therefore adopt the pseudo-likelihood as a 
tractable surrogate:
\begin{align*}
    \argmin_{\param \in \mathcal{S}_p} f(\param) \coloneqq 
    -\frac{1}{n}\sum_{i=1}^n \log\left[ \prod_{k =1}^p\mathbb{P}(\mathbf{x}_{i k} \vert  \mathbf{x}_{i 1}, \ldots, \mathbf{x}_{i k - 1}, \mathbf{x}_{i k + 1}, \ldots, \mathbf{x}_{i p})\right], \textup{ s.t. } \|\param\|_0 \leq s.
\end{align*}
The pseudo-likelihood has been widely used for learning Ising 
models \citep{hoing_estimation_nodate, xue2012nonconcave} because 
it is (i) computable in $O(np)$ time per evaluation, (ii) twice 
continuously differentiable and convex, and (iii) a consistent 
surrogate for the likelihood when the underlying graph is sparse..  
With these advantages, Algorithm~\ref{algo:main} can be efficiently conducted upon the pseudo-likelihood and returns $(\widehat{\param}, \widehat{\mathcal{A}})$. 

To assure the theoretical guarantees of Algorithm~\ref{algo:main}, we certify Assumptions in Section~\ref{sec:assumptions} hold under following conditions:
\begin{enumerate}[label=(C\arabic*), start=1]
	\item For $\forall \mathcal{A} \subseteq [p]$ with $\vert \mathcal{A}\vert  \leq \min\{3s, p-1\}$, there exist constants $0<m_{3s}<M_{3s}<\infty$ such that $n m_{3s}\|\bm{u}\|^{2} \leq \left\|\mathbf{X}_{\mathcal{A}} \bm{u}\right\|^{2} \leq n M_{3s}\|\bm{u}\|^{2},$
	for $\forall \bm{u} \in \mathbb{R}^{\vert \mathcal{A}\vert }$.
	\label{con:ising:correlation}
	\item There exists a constant $\omega > 0$ such that every $3s$-sparse 
$\check{\bm{\theta}}$ satisfying $f(\check{\bm{\theta}}) < 
f(\bm{\theta}_0)$ obeys the row-wise $\ell_1$ bound
\begin{equation*}
    \max_{l \in [p]} \sum_{k \neq l} |\check{\bm{\theta}}_{kl}| 
    \;\leq\; \omega,
\end{equation*}
and, writing $\Delta \coloneqq e^{2\omega} / (e^{2\omega} + 1)^2$, 
the restricted condition number satisfies 
$(1 + M_{3s}) / \bigl[11.92 \, (1 + m_{3s})\bigr] < \Delta$.\label{con:ising:parameter}
\end{enumerate}
When $3s < p-1$, Condition~\ref{con:ising:correlation} is identical to Condition~\ref{con:classification:correlation} and can be interpreted similarly. This condition also implies that when $p > n$, $s$ cannot exceed $O(p)$, which is reasonable because the collected data is limited. On the other hand, if $n > p$, Condition~\ref{con:ising:correlation} can hold even when $\vert \mathcal{A} \vert  = p-1$ once the correlation among variables is bounded. The upper bound for $\max\limits_{l \in [p]} \sum\limits_{k \neq l}\left\vert \check{\param}_{k l}\right\vert$ in Condition~\ref{con:ising:parameter} is to avoid the case when one binary variable can be perfectly predicted, at which randomness does not exist. This upper bound is widely imposed to ensure the identifiability of the sparse Ising model  \citep{santhanam2012information, lokhov2018optimal}. The inequality $(1 + M_{3s}) / [11.92(1 + m_{3s})] < \Delta$ is the 
Ising-specific analogue of the RIP-type condition in 
Assumption~\ref{con:bound-gradient}.

The following Proposition shows that $f(\cdot)$ satisfies Assumption~\ref{con:convex-smooth}. 
\begin{proposition}\label{prop:ising-rsc-rss}
	Under Conditions~\ref{con:ising:correlation} and~\ref{con:ising:parameter}, $f(\cdot)$ is $8\Delta(1+m_{3s})$-RSC and $(1+M_{3s})$-RSS.
\end{proposition}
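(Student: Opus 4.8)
The plan is to reduce the claim to a two-sided bound on the Hessian quadratic form of the pseudo-likelihood and then control that form node-by-node. Writing the Ising conditional as $\mathbb{P}(\mathbf{x}_{ik}\mid \mathbf{x}_{i,-k})=\exp(\mathbf{x}_{ik}\eta_{ik})/(e^{\eta_{ik}}+e^{-\eta_{ik}})$ with $\eta_{ik}\coloneqq\sum_{l\neq k}\param_{kl}\mathbf{x}_{il}$, the objective becomes $\mathit{f}(\param)=\frac1n\sum_{i,k}[-\mathbf{x}_{ik}\eta_{ik}+\psi(\eta_{ik})]$ where $\psi(t)\coloneqq\log(e^{t}+e^{-t})$, so that $\mathit{f}$ is a sum of node-wise logistic losses. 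Since each $\eta_{ik}$ is linear in $\param$, a second-order mean-value expansion gives, for any $\param,\param'$ with $\|\param-\param'\|_0\le 3s$ and $\delta\coloneqq\param'-\param$,
\[
\mathit{f}(\param')-\mathit{f}(\param)-\langle\nabla\mathit{f}(\param),\delta\rangle=\frac{1}{2n}\sum_{i=1}^{n}\sum_{k=1}^{p}\psi''(\bar\eta_{ik})\Big(\sum_{l\neq k}\delta_{kl}\mathbf{x}_{il}\Big)^{2},
\]
with $\bar\param$ on the segment between $\param$ and $\param'$ and $\bar\eta_{ik}=\sum_{l\neq k}\bar\param_{kl}\mathbf{x}_{il}$. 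It therefore suffices to sandwich the weighted sum on the right between $\tfrac12\,8\Delta(1+m_{3s})\|\delta\|^2$ and $\tfrac12(1+M_{3s})\|\delta\|^2$, which is exactly the RSC and RSS requirements.

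The first ingredient is a uniform bound on the curvature weight $\psi''(t)=\operatorname{sech}^2(t)\in(0,1]$. The upper bound $\psi''\le1$ is immediate and drives the RSS side. For the lower bound I would invoke Condition~\ref{con:ising:parameter}: the mean-value point $\bar\param$ is at most $2s$-sparse and, by convexity of the pseudo-likelihood, lies in the sublevel set $\{\mathit{f}<\mathit{f}(\param^0)\}$, so the row-sum control $\max_{l}\sum_{k\neq l}|\bar\param_{kl}|\le\omega$ applies. Hence $|\bar\eta_{ik}|\le\sum_{l\neq k}|\bar\param_{kl}|\le\omega$, and since $\operatorname{sech}^2$ is even and decreasing in $|t|$ we obtain $\psi''(\bar\eta_{ik})\ge\operatorname{sech}^2(\omega)=4\Delta$, which is precisely where the quantity $\Delta=e^{2\omega}/(e^{2\omega}+1)^2$ enters the RSC constant.

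The second ingredient is the design form $\frac1n\sum_i(\sum_{l\neq k}\delta_{kl}\mathbf{x}_{il})^2=\delta_{k,\A_k}^\top\big(\tfrac1n\mathbf{X}_{\A_k}^\top\mathbf{X}_{\A_k}\big)\delta_{k,\A_k}$, where $\A_k\coloneqq\supp(\delta_{k,\cdot})$ has size at most $\min\{3s,p-1\}$ so that Condition~\ref{con:ising:correlation} is applicable. The structural fact $\mathbf{x}_{il}^2=1$ forces $\tfrac1n\mathbf{X}_{\A_k}^\top\mathbf{X}_{\A_k}=I+E_k$ with $E_k$ a zero-diagonal correlation block whose eigenvalues lie in $[m_{3s}-1,M_{3s}-1]$; because the largest eigenvalue of $\tfrac1n\mathbf{X}_{\A_k}^\top\mathbf{X}_{\A_k}$ is $M_{3s}\ge1$, we get $\|E_k\|_{\mathrm{op}}\le M_{3s}$ and hence $\delta_{k}^\top(I+E_k)\delta_{k}\le(1+M_{3s})\|\delta_{k,\cdot}\|^2$. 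It is this ``unit diagonal plus correlation'' split that produces the ``$1+$'' in $(1+M_{3s})$ rather than a bare $M_{3s}$. Summing $\psi''\le1$ against these per-node bounds and converting $\sum_k\|\delta_{k,\cdot}\|^2$ into $\|\delta\|^2$ yields the RSS claim; combining $\psi''\ge4\Delta$ with the matching lower estimate for the design form yields the RSC claim.

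The step I expect to be the main obstacle is the exact constant bookkeeping on the RSC side. A direct application of Condition~\ref{con:ising:correlation} only lower-bounds the per-node design form by $m_{3s}\|\delta_{k,\cdot}\|^2$, whereas landing precisely on $8\Delta(1+m_{3s})$ requires carefully tracking the exact unit-diagonal contribution, the factor coming from the symmetric parametrization (each free coordinate $\delta_{kl}$, $k<l$, is charged in both the $k$- and $l$-node terms), and the eigenvalue window of $E_k$, so that the $(1+m_{3s})$ factor survives rather than degrading to $m_{3s}$. A secondary technical point is rigorously justifying that the mean-value point $\bar\param$ satisfies the premise of Condition~\ref{con:ising:parameter} (sparsity at most $3s$ together with membership in the sublevel set), which is what I would establish from convexity of $\mathit{f}$ and from restricting the RSC/RSS inequalities to the algorithm's trajectory.
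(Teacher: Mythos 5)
Your overall route is the paper's: the published proof likewise treats the pseudo-likelihood node-wise, identifies the Hessian curvature weights $4\phi_k(\mathbf{x}_i)(1-\phi_k(\mathbf{x}_i))=\psi''(\bar\eta_{ik})$, bounds them in $[4\Delta,1]$ exactly as you do (this is where Condition~\ref{con:ising:parameter} enters, via $\operatorname{sech}^2(\omega)=4\Delta$), and then applies Condition~\ref{con:ising:correlation} per node to a restricted Gram-type matrix. However, there is a genuine gap precisely at the step you yourself flagged as the main obstacle: your mechanism for the ``$1+$'' cannot deliver the RSC constant. The split $\frac1n\mathbf{X}_{\A_k}^\top\mathbf{X}_{\A_k}=I+E_k$ helps only on the upper side (and even there it is redundant, since Condition~\ref{con:ising:correlation} directly gives the tighter bound $M_{3s}\|\delta_{k,\cdot}\|^2$). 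On the lower side, Condition~\ref{con:ising:correlation} says exactly that this matrix has restricted eigenvalues in $[m_{3s},M_{3s}]$, so no identity-plus-correlation decomposition can lower-bound the per-node design form by more than $m_{3s}\|\delta_{k,\cdot}\|^2$: with two nearly collinear columns, the form along the difference direction is close to $m_{3s}\|\delta_{k,\cdot}\|^2$, far below $(1+m_{3s})\|\delta_{k,\cdot}\|^2$. Routing both node charges through Gram-eigenvalue bounds, as your write-up does, therefore lands on $8\Delta m_{3s}$, not $8\Delta(1+m_{3s})$.

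The paper's ``$+1$'' comes from a different place. In its grouping, the coordinate $\param_{kl}$ receives, besides the node-$k$ Gram block $\frac1n(\mathbf{X}^\top W_k\mathbf{X})$ with $W_k=\textup{diag}\{4\phi_k(\mathbf{x}_i)(1-\phi_k(\mathbf{x}_i))\}_i$, a \emph{pure diagonal} term $\frac1n\mathrm{Tr}(W_l)=\frac4n\sum_i\phi_l(\mathbf{x}_i)(1-\phi_l(\mathbf{x}_i))\,\mathbf{x}_{ik}^2$ contributed by the partner node $l$'s conditional. Because $\mathbf{x}_{ik}^2=1$, this diagonal is evaluated exactly rather than through Condition~\ref{con:ising:correlation}, and is bounded below by $4\Delta$ and above by $1$ on its own; adding it to the Gram bounds gives $4\Delta(1+m_{3s})$ and $(1+M_{3s})$ per group, whence the stated constants after normalization. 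So the double-charging of each free coordinate across the two node terms that you correctly noticed is indeed the key, but it must be exploited by keeping the partner node's contribution as an exact diagonal, not by re-absorbing it into a per-node eigenvalue bound. Your secondary point is well taken: like you, the paper needs the mean-value point $\bar{\param}$ to satisfy the premise of Condition~\ref{con:ising:parameter}, and the published proof simply asserts the applicability of the curvature bound there, so your plan to justify this via convexity and the sublevel-set restriction is, if anything, more careful than the printed argument.
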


\begin{remark}
The constant $11.92$ in Condition~\ref{con:ising:parameter} arises 
as $8 \times 1.49$: the factor $8$ comes from the RSC constant in 
Proposition~\ref{prop:ising-rsc-rss}, and the factor $1.49$ 
matches the restricted-condition-number threshold in 
Assumption~\ref{con:bound-gradient}. Combining the two constants 
gives the Ising-specific RIP-type bound.
\end{remark}
Then, we give an upper bound for $\| \nablaf(\param^*) \|_{\infty} \coloneqq \max\limits_{k, l \in [p]} (\nabla f)_{kl}$. For $\forall k , l \in [p]$ satisfying $k \neq l$, we have $-2 \mathbf{x}_{i k} \mathbf{x}_{i l} (2-\phi_{k}(\mathbf{x}_i) - \phi_{l}(\mathbf{x}_i)) \in [-4,4]$ for each $i$. Thus, its independent empirical mean $(\nablaf(\param^*))_{k l}$ is a sub-Gaussian variable with parameter $\frac{4}{\sqrt{n}}$ by Hoeffding's inequality. According to Condition~\ref{con:ising:parameter}, we can define a positive value $c=\frac{0.35}{\sqrt{s}}(11.92\Delta(1+m_{3s}) - (1 + M_{3s}))$, then owing to $\mathbb{E} \left[\nablaf(\param^*)\right] = \mathbf{0}$ \citep{xue2012nonconcave}, we have $\mathbb{P} \left(\left\|\nabla f(\param^*)\right\|_{\infty} \geq c\vartheta \right) \leq 2 \exp \left\{\log \frac{p(p-1)}{2}-\frac{n c^2\vartheta^{2}}{8}\right\}$, implying Assumptions~\ref{con:bound-gradient} holds with high probability. 
Particularly, when $\vartheta = O(\sqrt{\log{p}/n})$, Assumption~\ref{con:bound-gradient} holds with probability 1 as $n \to \infty$; subsequently, the properties of SCOPE hold with probability 1. It means that SCOPE can exactly recover the underlying graph structure of the Ising model with probability one within a few iterations.

\section{Numerical Experiments}\label{sec:experiments}

In this section, we evaluate the empirical performance of the SCOPE algorithm with two numerical experiments. The first experiment aims to illustrate the support recovery property and computational merits of our algorithm by comparing it with the exact solver (see Section~\ref{sec:experiment-exact-solver}). The second experiment demonstrates the empirical advantages of our method against the state-of-the-art algorithms (see Section~\ref{sec:experiment-sota}). Additional experiments regarding to $k_{\max}$ is defer to Appendix~\ref{sec:max-splicing-size}.

\subsection{Comparisons with the exact solver}\label{sec:experiment-exact-solver}
Here, we select a commercial solver, GUROBI with version 12.0.0 \citep{gurobi}, for comparison because it is one of the most popular solvers for sparse-constrained optimization. We will compare SCOPE with GUROBI on problem~\eqref{eq:quardticobj}. All of experiment are conduct on an Ubuntu platform with Intel(R) Xeon(R) Silver 4210 CPU @ 2.20GHz and 64 RAM.
We select problem~\eqref{eq:quardticobj} as our benchmark because it is well-developed and supported by GUROBI.

We depict the detailed numerical settings for problem~\eqref{eq:quardticobj} in the following. 
Specifically, 
we create noisy linear measurements $\mathbf{y}$ that comes from the underlying linear model with a $s$-sparse parameter vector~$\bm{\theta}^*$:
$\mathbf{y}=\mathbf{X} \bm{\theta}^* +\bm{\epsilon},$
where $\bm{\epsilon} = (\epsilon_1, \ldots, \epsilon_n)^\top$ is the additive measurement noise, and $\epsilon_1, \ldots, \epsilon_n$ are \textit{i.i.d.} zero-mean random noises. 
First, we generate a random predictor matrix $\mathbf{X} \in \mathbb{R}^{n \times p}$ whose row vectors $\mathbf{X}_{1,\cdot}, \ldots \mathbf{X}_{n,\cdot} \overset{i.i.d.}{\sim} \mathcal{N}(\mathbf{0}, \mathbf{\Sigma})$, where the $(i, j)$-entry of covariance matrix is $\mathbf{\Sigma}_{ij}=\rho^{\vert  i-j\vert }$. Here, we set $\rho=0.6$. 
In terms of the $s$-sparse regression coefficients $\bm{\theta}^*$, the indices of its non-zero entries are randomly selected from $[p]$, and their values are selected from $\{-100, 100\}$ with equal chance. As for the random noise $\bm{\epsilon}$, it is sampled from $\mathcal{N}(\mathbf{0}, \sigma^2 \mathrm{I}_{p\times p})$ such that the signal-to-noise ratio $\|\mathbf{X}\bm{\theta}^*\|^2 / \sigma^2$ is equal to $1$. 
Finally, the response is generated according to the above linear model, and it is adjusted to have a zero mean and a variance of one for normalization purposes. 

We measure the empirical performance of algorithms by the proportion of non-zero entries that are correctly selected.
To evaluate the computational performance, we measure each algorithm's running time (measured in seconds).
We investigate both support recovery accuracy and computational behavior in the cases where sample size $n$, dimension $p$, and sparsity $s$ vary, but the remaining are fixed. The results are summarized in  Figure~\ref{fig:compare-exact-solver}.

The left-upper panel of Figure~\ref{fig:compare-exact-solver} shows that both SCOPE and GUROBI can recover the support with perfect accuracy when the sample size is more than 800. 
Besides, we can see that SCOPE and GUROBI have a very close performance. In terms of runtime performance presented in the right-upper panel of Figure~\ref{fig:compare-exact-solver}, the runtime of GUROBI is more than one second in most cases and its runtime would increase when it has a lower probability of identifying the true support set. On the contrary, SCOPE can finish iterations in one-tenth of a second in most cases.  

Next, we turn to the results when $p$ varies, but $n, s$ are fixed (presented in the middle panel of Figure \ref{fig:compare-exact-solver}). We can first see that both SCOPE and GUROBI have a higher support recovery accuracy when $p$ decreases. And they have a very similar performance in most cases. From the right-middle panel of Figure~\ref{fig:compare-exact-solver}, the runtime of GUROBI quickly grows as $p$ increases while the runtime of SCOPE grows slowly. Thus, SCOPE still shows dominant runtime advantages --- it converges in less than one second when $p=100$, but GUROBI has reached its time limit. 
Finally, as we can see in the left-bottom of Figure~\ref{fig:compare-exact-solver}, an increasing $s$ makes the sparsity optimization harder, and thus, both GUROBI and SCOPE have lower accuracy. The SCOPE seems to be better when $s \geq 6$, and it maintains a competitive performance in other cases. In all, the gap between GUROBI and SCOPE is still tiny. The main difference between them lies in the runtime presented in the right-bottom panel of Figure~\ref{fig:compare-exact-solver}. GUROBI has a sharp-growth runtime concerning $s$. As for SCOPE, its runtime grows much slower and can converge in less than one second.

\begin{figure}[t]
\vspace{-10pt}
	\centering 
	\includegraphics[width=1.0\textwidth]{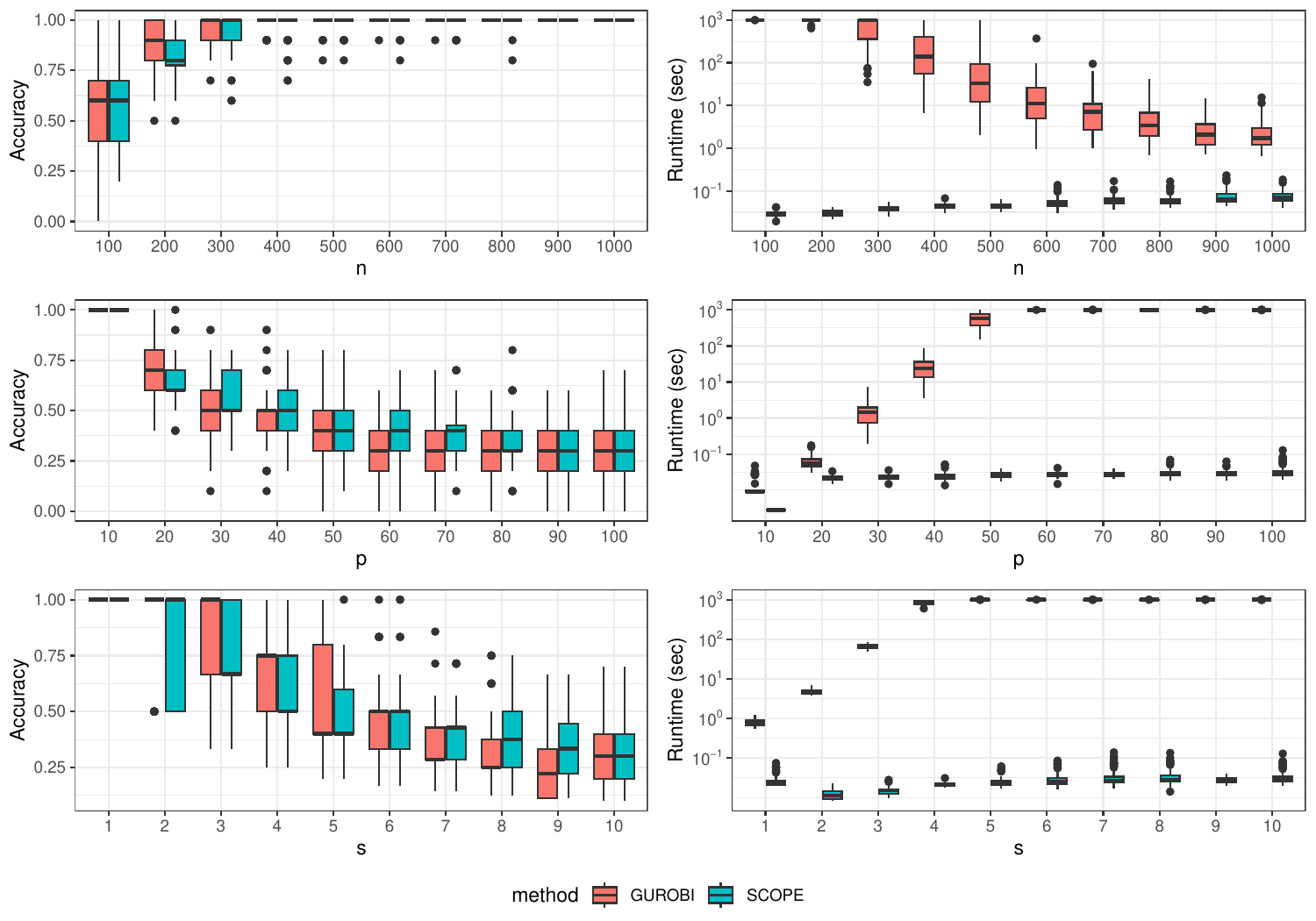}
	\vspace{-24pt}
	\caption{The boxplot of accuracy (Left panel) and runtime (Right panel). Upper panel: $n$ increases when $p$ and $s$ is fixed at $p=100$ and $s=10$; Middle panel: $p$ increases when fixing $n=50$, $s=10$; Bottom panel: increasing $s$ but fixing $n=50$, $p=100$. Note that the runtime of the two methods is limited to 1000 seconds. The experiment was independently repeated 100 times.}\label{fig:compare-exact-solver} 
\vspace{-10pt}
\end{figure}

\subsection{Comparisons with State-of-the-Art Methods}\label{sec:experiment-sota}
Here, we compare our algorithm with the other competitive methods in the literature. We list the competitors below with descriptions for their implementation.
\begin{itemize}[leftmargin=*]
\item Lasso-type regularization method. We use the \textsf{cvxpy} Python interface \citep{diamond2016cvxpy} that wraps a core \textsf{C++} backend to get the solution of the convex relaxation of \eqref{originalProblem}:
$\argmin\limits_{\bm{\theta} \in \mathbb{R}^p} f(\bm{\theta}),\textup{ s.t. } \|\bm{\theta}\|_1 \leq \lambda,$
where $\lambda$ is a fixed regularization strength \citep{Lasso, park2007l1, hoing_estimation_nodate}. Since there is no explicit expression that can determine $s$ from $\lambda$, we will consider multiple values for $\lambda$ and find out the one that results in a $s$-sparse parameter estimation.  
\item Gradient support pursuit (GraSP), which is implemented in the \texttt{skscope} Python library.
\item Gradient hard thresholding pursuit (GraHTP). We consider two methods for deciding the step size used in GraHTP. One method uses a fixed step size $(2M_{2s})^{-1}$ as suggested by Theorem~5 in \citet{yuan2017gradient}. Since it is difficult to exactly compute $M_{2s}$, we approximate $M_{2s}$ with the largest singular value of $\nabla^2_{\A_0}f(\param_0)$ for instead. Another method considers five step sizes $10^{0}, 10^{-1}, \ldots, 10^{-4}$. For each step size, we run GraHTP and record the objective when it converges. Then we choose the step size that gives the smallest objective. The two methods are denoted as GraHTP1 and GraHTP2, respectively. The two methods are implemented in \texttt{skscope}.
\end{itemize}

We systematically compare these methods in learning sparse linear, sparse classification, and sparse Ising models. The model settings are summarized below.
\begin{itemize}[leftmargin=*]
	\item \textbf{Linear model.} We adopt the settings depicted in the second paragraph on Section~\ref{sec:experiment-exact-solver} except the signal-to-noise ratio is set as $6$.
	\item \textbf{Classification model.} We inherit the settings for $\mathbf{X}$ and $\bm{\theta}^*$ from the linear model in the previous section. The only difference is that the response $y_i$ is sampled from a binomial variable with probability~\eqref{eq:logistic-regression}. 
	\item \textbf{Ising model.} First, we depict the method for generating $\bm{\theta}^* \in \mathbb{R}^{p \times p}$: (i) generate a zero-diagonal upper-triangle matrix $\mathbf{S} \in \mathbb{R}^{p\times p}$ whose $s$ non-zero entries are randomly selected from upper triangle; (ii) for the non-zero entries in $\mathbf{S}$, their values are randomly drawn from $\{-0.5, 0.5\}$ with equal probability; (iii) $\bm{\theta} = \mathbf{S}^\top + \mathbf{S}$. Then, we independently draw $n$ samples from the Ising model in Section~\ref{sec:ising}.
\end{itemize}
For the linear model and classification model, we set $p=500$ and $s=50$; as for the Ising model, we set $p=20$, and $s=40$. We adopt the same evaluation metrics in Section~\ref{sec:experiment-exact-solver}. 
The numerical results of the three models are demonstrated in Figure~\ref {fig:sota}. 

\begin{figure}[t]
	\centering 
    \vspace{-24pt}
	\includegraphics[width=1.0\textwidth]{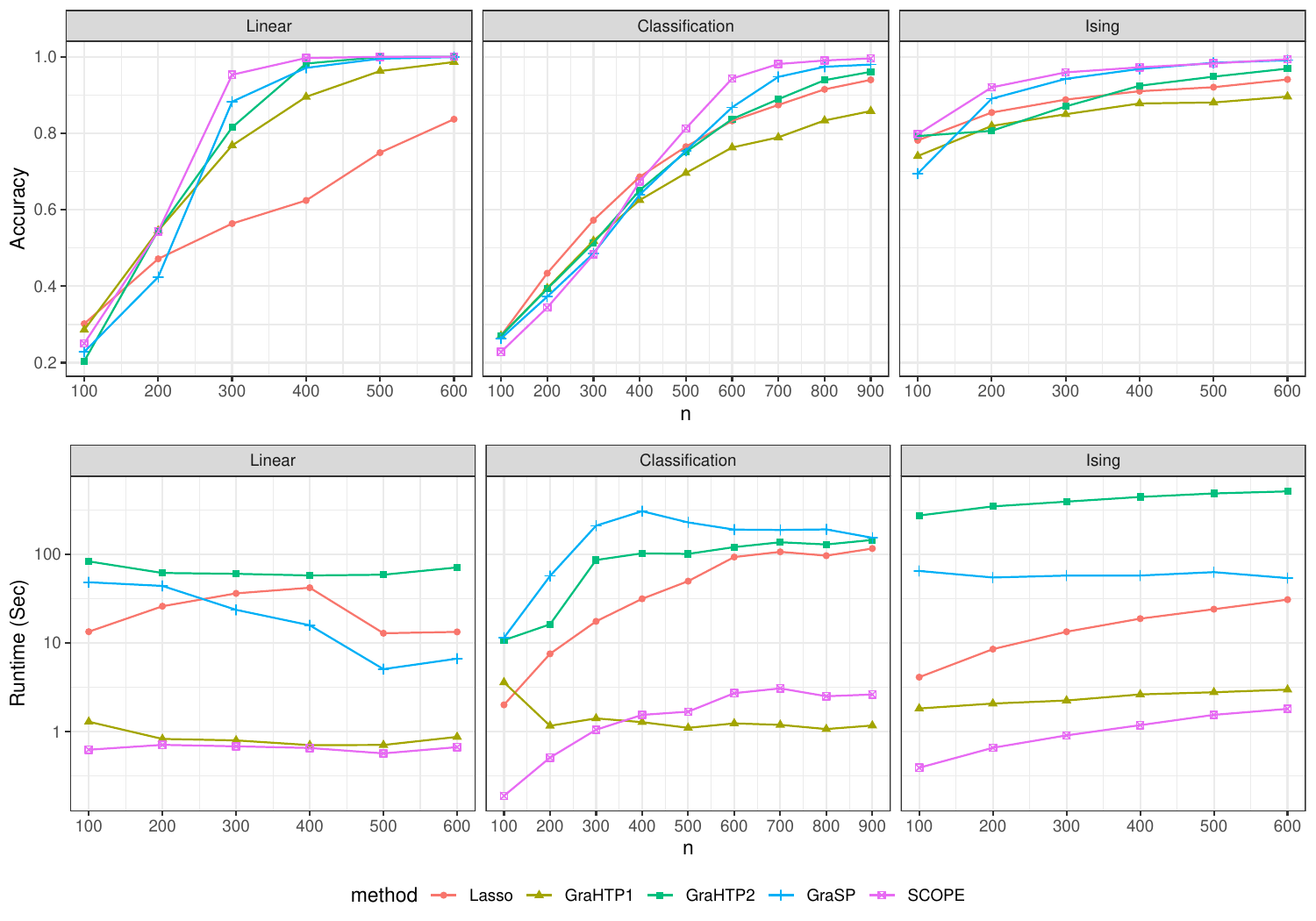}
	\vspace{-24pt}
	\caption{The accuracy (upper panel) and runtime (bottom panel) as the sample size increases from 100 with step size 100. All experiments were independently repeated 100 times. The $y$-axis for runtime is $\log_{10}$-transformed. } 
	\label{fig:sota} 
\end{figure}

Figure~\ref{fig:sota} shows that, among these methods, SCOPE uses the minimal samples to perfectly recover $\mathcal{A}^*$ in the three models. The results also verify the support recovery property of SCOPE in Theorem~\ref{thm:recovery}. Like SCOPE, GraHTP2 can recover $\mathcal{A}^*$ in all of the three models since it has a support recovery guarantee when its hyperparameter, step size, is properly chosen. However, when step size is not chosen appropriately, GraHTP may not correctly recover $\mathcal{A}^*$ when the sample size is large, which can be witnessed by the performance of GraHTP1. In contrast to GraHTP2, GraHTP1 fails to recover $\mathcal{A}^*$ in logistic regression and Ising model when $n \geq 600$ regardless of its successful recovery for $\mathcal{A}^*$ in the linear model. These numerical results suggest tuning the hyperparameter is essential for GraHTP. In terms of GraSP, despite it having no support recovery guarantee, it can successfully identify $\mathcal{A}^*$ under the linear model and Ising regression model, and its accuracy is very close to 1 under logistic regression. Yet, compared with the SCOPE, it generally requires more samples to recover $\mathcal{A}^*$ in these models. Finally, the Lasso-based relaxation method, it can recover $\mathcal{A}^*$ in all models, as the theory suggests. Unfortunately, it requires more samples to reach perfect recovery compared to SCOPE.

From the runtime analysis presented in the bottom panel of Figure~\ref{fig:sota}, SCOPE exhibits competitive runtime compared to existing HT-based methods. Moreover, SCOPE achieves approximately 10--20$\times$ speedup over Lasso-based methods, which are also implemented with a \textsf{C++} backend. This computational efficiency can be attributed to (i) the algorithmic structure of SCOPE and (ii) its efficient implementation in \textsf{C++}. Overall, the numerical results indicate that SCOPE provides a computationally efficient alternative for sparsity-constrained optimization problems.

\section{Conclusion}\label{sec:conclusion}

Motivated by the splicing technique introduced in \citet{abess}, this paper proposes the SCOPE algorithm for solving~\eqref{originalProblem}. Under sufficient conditions, the algorithm recovers the true support set in a linear convergence rate. Impressively, these theoretical guarantees are attained without tuning hyper-parameters. To our knowledge, SCOPE provides a tuning-free alternative among existing HT-type methods. The numerical results show that, in most cases, SCOPE surpasses state-of-the-art solvers in terms of both accuracy and computational efficiency. 

An interesting direction for future research is to integrate the proposed splicing mechanism with IHT-type algorithms or recent Newton hard-thresholding pursuit methods \citep{zhou_global_nodate}. This can handle moderately large values of $s$ and avoid solving subproblems exactly. It is also of interest to incorporate penalty terms into the framework to further expand its applicability.

\acks{We thank the Editor-in-Chief, the Action Editor, and the anonymous reviewers for their valuable comments and suggestions.

Wang’s research is partially supported by National Natural Science Foundation of China grants No.72171216 and 12231017, and the National Key R\&D Program of China No.2022YFA1003803. Lin’s research is partially supported by National Natural Science Foundation of China grants No.12171310 and the Shanghai Natural Science Foundation No.20ZR1421800. The authors declare that they have no competing interests.}

\newpage

\appendix

\section{Proof for Theoretical Results}

\subsection{Proof Sketch for the Core Results}

Theorems~\ref{thm:recovery}, \ref{thm:convergence-rate} and~\ref{thm:support_recovery_relax} are the core theoretical results of the proposed algorithm. We therefore provide a high-level proof sketch here, while deferring the detailed proofs to the subsequent sections.

\begin{proof}{\bf sketch for Theorem~\ref{thm:recovery}.}
	Before proving Theorem~\ref{thm:recovery}, we present three crucial Lemmas. Among them, the first two Lemmas establish the lower and upper bounds for $\nabla^2_{\mathcal{A}}f(\bm{\theta})$ and $\nabla_{\mathcal{A}}\nabla_{\mathcal{B}}f(\bm{\theta})$ where $\bm{\theta}$ is an arbitrary sparse parameter, and $\mathcal{A}, \mathcal{B}$ are arbitrary support sets. 
    \begin{lemma}\label{lemma1}
	If $f:\mathbb{R}^p\mapsto \mathbb{R}$ is twice differentiable and is $m_k$-RSC and $M_k$-RSS, then, for $\forall \mathcal{A} \subseteq [p], \bm{x} \in \mathbb{R}^p$ satisfying $\vert \mathcal{A} \cup \operatorname{supp}(\bm{x})\vert  \leq k$,
	we have:$\quad m_{k} I_{\vert \mathcal{A}\vert } \preceq \nabla_{\mathcal{A}}^{2} f(\bm{x}) \preceq M_{k}I_{\vert \mathcal{A}\vert }.$
    \end{lemma}
    \begin{lemma}\label{lemma2}
    	If $f:\mathbb{R}^p \mapsto \mathbb{R}$ is $m_k$-RSS and $M_k$-RSC and has the second differentiation, then
    	$\forall \mathcal{A},\mathcal{B} \subseteq [p], \bm{x} \in \mathbb{R}^p$ satisfying $\vert \mathcal{A} \cup \mathcal{B} \cup \operatorname{supp}(\bm{x})\vert  \leq k \text{ and } \mathcal{A} \cap \mathcal{B} = \emptyset$, we have: 
    	$$\|\nabla_{\mathcal{A}} \nabla_{\mathcal{B}} f(\bm{x}) v\| \leq \frac{M_{k}-m_{k}}{2} \| v\| ~\textup{ and }~ w^{\top} \nabla_{\mathcal{A}} \nabla_{\mathcal{B}} f(\bm{x}) v  \leq \frac{M_{k}-m_{k}}{2}\|w\| \|v\|,$$
    	$\forall v \in \mathbb{R}^{\vert \mathcal{A}\vert }, w \in \mathbb{R}^{\vert \mathcal{B}\vert }$.
    \end{lemma}
    Together with Taylor expansion at $\bm{\theta}^*$, the two Lemmas facilitate the control for $f(\bm{\theta}) - f(\bm{\theta}^*)$. These two Lemmas can be proven under Assumption~\ref{con:convex-smooth}. 
	
	Next, Lemma~\ref{lemma:loss-now-next} shows that whenever we encounter a sparse solution whose support set is not identical to $\mathcal{A}^*$, then the objective function value at this sparse solution can be bounded. This bound is derived from the previous two Lemmas and Assumption~\ref{con:bound-gradient}. More interestingly, this lemma shows that, for the following sparse solution derived by one splicing operator, the value of the objective function at this new solution can be controlled by similar terms for bounding the previous sparse solution. 
    Therefore, Lemma~\ref{lemma:loss-now-next} enables the comparison of $f(\cdot)$ at the two sparse solutions. 
    \begin{lemma}\label{lemma:loss-now-next}
    	Let $\widehat{\bm{\theta}}$ is any estimation in the splicing loop whose support set $\widehat{\mathcal{A}}$ does not equal to $\mathcal{A}^*$, i.e., $\I_1 \coloneqq \widehat{\I} \cap \mathcal{A}^* \neq \emptyset$, and we define $\widetilde{\bm{\theta}}$ as an estimation under a new support set $\widetilde{\mathcal{A}}$ given by one splicing operation: $\widetilde{\mathcal{A}} \coloneqq (\operatorname{supp}\{\widehat{\bm{\theta}}\} \cup \mathcal{S}^\I_{\vert \I_1\vert }) \setminus \mathcal{S}^\mathcal{A}_{\vert \I_1\vert }$. 
    	Support Assumptions~\ref{con:convex-smooth}-\ref{con:bound-gradient} hold, and let $m \coloneqq m_{3s}$, $M \coloneqq M_{3s}$, $v \coloneqq \frac{M-m}{2}$, $c\coloneqq\frac{0.35}{\sqrt{s}}(1.49m-M)$, then we have: 
    	\begin{equation}\label{gapOfLoss:beforeSplicing}
			f(\widehat{\bm{\theta}})-f (\bm{\theta}^{*} )
			\geq 
			\left(\frac{m}{2} - \sqrt{2s}c\right) \|\param_{\I_{1}}^{*} \|^{2}, 
    	\end{equation}
            \begin{equation}\label{gapOfLoss:afterSplicing}
            \begin{aligned}
                \vert  f(\widetilde{\bm{\theta}})-f (\bm{\theta}^{*} ) \vert 
                \leq \ & \left[\frac{s c^{2}}{m}+\frac{s c^{2} M}{2 m^{2}} \right. 
                         + \left. (\frac{\sqrt{s} c v}{m}
                        +\frac{\sqrt{s} c M v}{m^{2}}+\sqrt{s} c ) \Delta \right.  + \left. (\frac{M v^{2}}{2 m^{2}}+\frac{M}{2} ) \Delta^{2} \right] \|\param_{\I_{1}}^{*} \|^{2},
            \end{aligned}
            \end{equation}

		where $\Delta$ is a constant depends on $m, M$ and $s$.
    \end{lemma}

	We prove Theorem~\ref{thm:recovery} by showing that if $\widehat{\mathcal{A}} \neq \mathcal{A}^*$, it would cause a contradiction. Notably, Lemma~\ref{lemma:loss-now-next} allows us to analyze the loss at a support set $\widetilde{A}$ attained by one splicing operation upon $\widehat{\mathcal{A}}$. By comparing the $f$ under $\widehat{\mathcal{A}}$ or $\widetilde{\mathcal{A}}$, we find 
	$\min\limits_{\operatorname{supp}\{\bm{\theta}\} = \widehat{\mathcal{A}}} f(\bm{\theta})
	> 
	\min\limits_{\operatorname{supp}\{\bm{\theta}\} = \widetilde{\mathcal{A}}} f(\bm{\theta})$
	hold under Assumption~\ref{con:bound-gradient}. 
	This implies $\widehat{\mathcal{A}}$ should not be the support convergence point, which contradicts the fact that Algorithm~\ref{algo:main} outputs $\widehat{\mathcal{A}}$. Therefore, our primary supposition is incorrect, and $\widehat{\mathcal{A}}=\mathcal{A}^*$ must hold.
	
	
\end{proof}

\begin{proof}{\bf sketch of Theorem~\ref{thm:convergence-rate}.}
	From Lemma~\ref{lemma:loss-now-next}, we can give the bounds of objective function values on the current solution $\bm{\theta}^t$ and the next solution $\bm{\theta}^{t+1}$. By the RIP-type condition implied by Assumption~\ref{con:bound-gradient}, we will show that the ratio of these two bounds is not more than a constant that is less than $1$. Therefore, the algorithm converges linearly.
\end{proof}

\begin{proof}{\bf sketch of Theorem~\ref{thm:support_recovery_relax}.}
To prove Theorem~\ref{thm:support_recovery_relax}, we reuse the results of Lemmas~\ref{lemma1} and~\ref{lemma2}, and further, we develop Lemma~\ref{lemma:loss_down_bound} that bounds the gap of objective between the current solution and the spliced set.  
\begin{lemma}\label{lemma:loss_down_bound}
Let $\widehat{\param}$ be the $s$-sparse solution under a support set $\widehat{\A}$. Denote $\rho_0 \coloneqq 0, \rho_j \coloneqq \max \left\{ |\widehat{\param}_i|: i \in \mathcal{S}_{\A}^{(j)}\right\} / \min \left\{| \nabla_i \loss(\widehat{\param})| : i \in \mathcal{S}_{\I}^{(j)}\right\}$ (for $j \in[s^*]$), and $\rho_{s^{*}+1} \coloneqq +\infty$. Under Assumption~\ref{con:convex-smooth-relaxed}, for $\forall \varepsilon>0$, there exists $k \in\{0, \ldots, s^*\}$ satisfying $\frac{1}{M_{s+s^*}+\varepsilon} \in[\rho_k, \rho_{k+1})$ such that, given the support set after splicing $\widetilde{\A}^k \coloneqq (\widehat{\A}\setminus\mathcal{S}_{\A}^{(k)}) \cup \mathcal{S}_{\I}^{(k)}$, we have:
\begin{equation}\label{eqn:loss_down_bound}
    \argmin\limits_{\operatorname{\supp}\{\param\} = \widetilde{\A}^k} f(\param) - f(\widehat{\param}) \leq \frac{-\varepsilon}{2(M_{s+s^*}+\varepsilon)^2}\left\|\nablaf(\widehat{\param})_{	\mathcal{S}_\I^{(k)}}\right\|^2.
\end{equation}
\end{lemma}
Lemma~\ref{lemma:loss_down_bound} shows that, at each splicing iteration, Algorithm~\ref{algo:main} identifies an appropriate splicing set based on the objective value.
\end{proof}

\subsection{Proof under RIP-type Conditions}\label{sec:proof-rip}
    \begin{proof}{\textbf{of Lemma~\ref{lemma:loss-now-next}.}}
    Define $\mathcal{I}_{2}=\hat{\mathcal{I}} \cap \mathcal{I}^{*}, \mathcal{A}_{1}=\hat{\mathcal{A}} \cap \mathcal{A}^{*}, \mathcal{A}_{2}=\hat{\mathcal{A}} \cap \mathcal{I}^{*}.$
	For ease notations, we denote $\boldsymbol{d}^{*}=\nabla \loss (\boldsymbol{\theta}^{*}), \hat{\boldsymbol{d}}=\nabla \loss(\hat{\boldsymbol{\theta}})$, and $\mathcal{S}_{1}= \mathcal{S}_\mathcal{A}^{(\vert \mathcal{I}_1\vert)}$, where $\vert \mathcal{I}_1\vert$ means the cardinality of $\mathcal{I}_1$.
	Also, denote $\mathcal{A}_{11}=\mathcal{A}_{1} \cap (\mathcal{S}_{1} )^{c}, \mathcal{A}_{12}=\mathcal{A}_{1} \cap \mathcal{S}_{1}, \mathcal{A}_{21}=\mathcal{A}_{2} \cap (\mathcal{S}_{1} )^{c}, \mathcal{A}_{22}=\mathcal{A}_{2} \cap \mathcal{S}_{1} .$
	Then $\mathcal{S}_{1}=\mathcal{A}_{12} \cup \mathcal{A}_{22}$. Since $ \vert \mathcal{A}_{12} \vert + \vert \mathcal{A}_{22} \vert = \vert \mathcal{I}_{1} \vert =\vert \mathcal{A}^*\vert -\vert \mathcal{A}_1\vert   \leq \vert \hat{\mathcal{A}}\vert -\vert \mathcal{A}_1\vert =
	\vert \mathcal{A}_{2} \vert = 
	\vert \mathcal{A}_{21} \vert + \vert \mathcal{A}_{22} \vert $, we have $\vert \mathcal{A}_{12}\vert \leq\vert \mathcal{A}_{21}\vert $.
	
	By definition of $\mathcal{S}_{1}$, we have $\hat{\boldsymbol{\theta}}_{j}^{2} \leq \ \hat{\boldsymbol{\theta}}_{i}^{2},\  \forall j \in \mathcal{A}_{12},\ i \in \mathcal{A}_{21}.$
	Thus,
	\begin{equation}\label{S1}
		 \|\hat{\boldsymbol{\theta}}_{\mathcal{A}_{12}} \| \leq \|\hat{\boldsymbol{\theta}}_{\mathcal{A}_{21}} \|.
	\end{equation}
	Since $\hat{\boldsymbol{\theta}}$ minimizes $\loss(\boldsymbol{\theta})$ given active set $\hat{\mathcal{A}}$,
		$0=\nabla_{\hat{\mathcal{A}}} \loss(\hat{\boldsymbol{\theta}})=\nabla_{\hat{\mathcal{A}}} \loss (\boldsymbol{\theta}^{*} )+\nabla_{\hat{\mathcal{A}}}^{2} \loss(\bar{\boldsymbol{\theta}}) (\hat{\boldsymbol{\theta}}_{\hat{\mathcal{A}}}-\param_{\hat{\mathcal{A}}}^{*} )+\nabla_{\hat{\mathcal{A}}} \nabla_{\mathcal{I}_{1}} \loss(\bar{\boldsymbol{\theta}}) (-\param_{\mathcal{I}_{1}}^{*} )$,
	where $\overline{\boldsymbol{\theta}}=t \hat{\boldsymbol{\theta}}+(1-t) \boldsymbol{\theta}^{*}, 0 \leq t \leq 1$. 
	By Lemma~\ref{lemma1} and Lemma~\ref{lemma2}, we have
	\begin{equation}\label{eq:est_true_gap}
		\begin{aligned}
			\|\hat{\boldsymbol{\theta}}_{\hat{\mathcal{A}}}-\param_{\hat{\mathcal{A}}}^{*} \| 
            &\leq\| [\nabla_{\hat{\mathcal{A}}}^{2} \loss(\bar{\boldsymbol{\theta}}) ]^{-1} \boldsymbol{d}_{\hat{\mathcal{A}}}^{*}\|
            +\| [\nabla_{\hat{\mathcal{A}}}^{2} \loss(\bar{\boldsymbol{\theta}}) ]^{-1} \cdot\nabla_{\hat{\mathcal{A}}}\nabla_{\mathcal{I}_{1}} \loss(\bar{\boldsymbol{\theta}}) \cdot \param_{\mathcal{I}_{1}}^{*} \| 
			\leq  m^{-1} \|\boldsymbol{d}_{\hat{\mathcal{A}}}^{*} \|+m^{-1} v \|\param_{\mathcal{I}_{1}}^{*} \|. 
		\end{aligned}
	\end{equation}
	Thus, the loss function at the point $\hat{\boldsymbol{\theta}}$ satisfies:
		\begin{align*}
			\loss(\hat{\param})-\loss (\param^{*} ) 
            &\geq (\hat{\param}-\param^*)^{\top}\boldsymbol{d}^* + \frac{m}{2}\|\hat{\param}-\param^*\|^2
			\geq \frac{m}{2}\|\hat{\param}-\param^*\|^2 - \|\boldsymbol{d}^*_{\mathcal{I}_2^c}\|\|\hat{\param}-\param^*\| \\
			&= \frac{m}{2}(\|\hat{\param}-\param^*\| - \frac{1}{m}\|\boldsymbol{d}^*_{\mathcal{I}_2^c}\|)^2 - \frac{1}{2m}\|\boldsymbol{d}^*_{\mathcal{I}_2^c}\|^2.
		\end{align*}
 
	By $\|\hat{\param}-\param^*\|\geq\|\param^*_{\mathcal{I}_1}\| \geq \vartheta \geq \frac{\sqrt{s}}{0.35(1.49-\frac{M}{m})m}\|\boldsymbol{d}^*\|_{\infty} \geq \frac{\sqrt{2s}}{m}\|\boldsymbol{d}^*\|_{\infty} \geq \frac{1}{m}\|\boldsymbol{d}^*_{\mathcal{I}_2^c}\|$,
    \begin{equation}
        \begin{aligned}
            \loss(\hat{\param})-\loss (\param^{*} ) 
            &\geq \frac{m}{2} \|\param^*_{\mathcal{I}_1}\|^2 - \|\boldsymbol{d}^*_{\mathcal{I}_2^c}\|\|\param^*_{\mathcal{I}_1}\| 
            \geq \left(\frac{m}{2} - \sqrt{2s}c\right)\|\param^*_{\mathcal{I}_1}\|^2.
        \end{aligned}
    \end{equation}

	Next, we derive an upper bound for $\| \boldsymbol{\theta}^*_{\mathcal{A}_{12}} \|$ and  $\| \boldsymbol{\theta}^*_{\mathcal{I}_{12}} \|$ (defined below) to bound the loss function at the point $\tilde{\boldsymbol{\theta}}$.
	Notably, from inequality~\eqref{eq:est_true_gap}, it can be derived that
	\begin{align*}
		\|\hat{\boldsymbol{\theta}}_{\mathcal{A}_{12}} \| &\geq \|\param_{\mathcal{A}_{12}}^{*} \| -\|\hat{\boldsymbol{\theta}}_{\hat{\mathcal{A}}}-\param_{\hat{\mathcal{A}}}^{*} \| \geq \|\param_{\mathcal{A}_{12}}^{*} \|-\frac{1}{m} \|\boldsymbol{d}_{\hat{\mathcal{A}}}^{*} \|-\frac{v}{m} \|\param_{\mathcal{I}_1}^{*} \|, 
        \\
		\|\hat{\boldsymbol{\theta}}_{\mathcal{A}_{21}} \| &\leq \|\boldsymbol{\theta}^*_{\mathcal{A}_{21}} \|+\|\hat{\boldsymbol{\theta}}_{\hat{\mathcal{A}}}-\param_{\hat{\mathcal{A}}}^{*} \|\leq \frac{1}{m} \|\boldsymbol{d}_{\hat{\mathcal{A}}}^{*} \|+\frac{v}{m} \|\param_{\mathcal{I}_{1}}^{*} \|.
	\end{align*}
	Furthermore, by \eqref{S1}, we have
	\begin{align}\label{S3}
		\|\param_{\mathcal{A}_{12}}^{*} \| \leq 2 (\frac{1}{m} \|\boldsymbol{d}_{\hat{\mathcal{A}}}^{*} \|+\frac{v}{m} \|\param_{\mathcal{I}_{1}}^{*} \| ) \notag
            \leq \frac{2}{m}  (c\sqrt{\frac{s}{ \vert \mathcal{I}_{1} \vert }}+v ) \|\param_{\mathcal{I}_{1}}^{*} \|\notag
		\leq 2 (\frac{\sqrt{s} c}{m}+\frac{v}{m} ) \|\param_{\mathcal{I}_{1}}^{*} \|, 
	\end{align}
	where the second inequality follows from Assumption~\ref{con:bound-gradient}.
	
	On the other hand, denote $\mathcal{S}_{2}=\mathcal{S}_\mathcal{I}^{(\vert \mathcal{I}_1\vert)}$ and $\mathcal{I}_{11}=\mathcal{I}_{1} \cap \mathcal{S}_{2}, \mathcal{I}_{12}=\mathcal{I}_{1} \cap (\mathcal{S}_{2} )^{c}, \mathcal{I}_{21}=\mathcal{I}_{2} \cap \mathcal{S}_{2}, \mathcal{I}_{22}=\mathcal{I}_{2} \cap (\mathcal{S}_{2} )^{c}$.
    
    Since $\vert \mathcal{I}_{11} \vert + \vert \mathcal{I}_{21} \vert =\vert \mathcal{S}_{2}\vert = \vert \mathcal{I}_{1} \vert = \vert \mathcal{I}_{11} \vert + \vert \mathcal{I}_{12} \vert ,
	\text{ we have }
	\vert \mathcal{I}_{12} \vert = \vert \mathcal{I}_{21} \vert $.
	By definition of $\mathcal{S}_{2}$, we have $\hat{\boldsymbol{d}}_{i}^{2} \leq  \hat{\boldsymbol{d}}_{j}^{2}, \quad \forall i \in \mathcal{I}_{12}, j \in \mathcal{I}_{21}.$
	Thus,
	\begin{equation}\label{S4}
		\|\hat{\boldsymbol{d}}_{\mathcal{I}_{12}} \| \leq  \|\hat{\boldsymbol{d}}_{\mathcal{I}_{21}} \|.
	\end{equation}
	Note that,
    \begin{equation}\label{eq:grad_on_inactive}
    \begin{aligned}
        \hat{\boldsymbol{d}}_{\hat{\mathcal{I}}} &= \nabla_{\hat{\mathcal{I}}} \loss(\hat{\boldsymbol{\theta}}) = \nabla_{\hat{\mathcal{I}}} \loss (\boldsymbol{\theta}^{*} ) + \nabla_{\hat{\mathcal{A}}} \nabla_{\hat{\mathcal{I}}} \loss(\bar{\boldsymbol{\theta}}) (\hat{\boldsymbol{\theta}}_{\hat{\mathcal{A}}}-\param_{\hat{\mathcal{A}}}^{*} ) + \nabla_{\mathcal{I}_{1}} \nabla_{\hat{\mathcal{I}}} \loss(\bar{\boldsymbol{\theta}}) (-\param_{\mathcal{I}_{1}}^{*} ),
    \end{aligned}
    \end{equation}
	where $\overline{{\boldsymbol{\theta}}}=t \hat{\boldsymbol{\theta}}+(1-t) \boldsymbol{\theta}^{*}, 0 \leq t \leq 1$. Then, by (\ref{eq:grad_on_inactive}),
	\begin{align*}
		\begin{aligned}
			\|\hat{\boldsymbol{d}}_{\mathcal{I}_{12}} \| 
            \geq& - \|\nabla_{\mathcal{I}_{12}}\loss(\boldsymbol{\theta}^{*}) \|- \|\nabla_{\hat{\mathcal{A}}} \nabla_{\mathcal{I}_{12}} \loss(\bar{\boldsymbol{\theta}}) (\hat{\boldsymbol{\theta}}_{\hat{\mathcal{A}}}-\param_{\hat{\mathcal{A}}}^{*} ) \|
            + \|\nabla_{\mathcal{I}_{12}}^{2} \loss(\bar{\boldsymbol{\theta}}) \param_{\mathcal{I}_{12}}^{*} \|- \|\nabla_{\mathcal{I}_{11}} \nabla_{\mathcal{I}_{12}} \loss(\bar{\boldsymbol{\theta}}) \param_{\mathcal{I}_{1}}^{*} \|\\
			\geq&- \|\boldsymbol{d}_{\mathcal{I}_{12}}^{*} \|-v \|\hat{\boldsymbol{\theta}}_{\hat{\mathcal{A}}}-\param_{\hat{\mathcal{A}}}^{*} \|+m \|\param_{\mathcal{I}_{12}}^{*} \|-v \|\param_{\mathcal{I}_{1}}^{*} \|
			\geq m \|\param_{\mathcal{I}_{12}}^{*} \|-v \|\param_{\mathcal{I}_{1}}^{*} \|- \|\boldsymbol{d}_{\mathcal{I}_{12}}^{*} \|-\frac{v}{m} \|\boldsymbol{d}_{\hat{\mathcal{A}}}^{*} \|-\frac{v^{2}}{m} \|\param_{\mathcal{I}_{1}}^{*} \|,
            \\
            \|\hat{\boldsymbol{d}}_{\mathcal{I}_{21}} \| 
            \leq& \|\nabla_{\mathcal{I}_{21}} \loss (\boldsymbol{\theta}^{*} ) \|+ \|\nabla_{\hat{\mathcal{A}}} \nabla_{\mathcal{I}_{21}}\loss(\bar{\boldsymbol{\theta}})(\hat{\boldsymbol{\theta}}_{\hat{\mathcal{A}}}-\param_{\hat{\mathcal{A}}}^{*}) \| +\| \nabla_{\mathcal{I}_{1}} \nabla_{\mathcal{I}_{21}} \loss (\bar{\boldsymbol{\theta}}) \param_{\mathcal{I}_{1}}^{*} \|
			\leq \|\boldsymbol{d}_{\mathcal{I}_{21}}^{*} \|+v \|\hat{\boldsymbol{\theta}}_{\hat{\mathcal{A}}}-\param_{\hat{\mathcal{A}}}^{*} \|+v \|\param_{\mathcal{I}_{1}}^{*} \|\\
			\leq& \|\boldsymbol{d}_{\mathcal{I}_{21}}^{*} \|+\frac{v}{m} \|\boldsymbol{d}_{\hat{\mathcal{A}}}^{*} \|+\frac{v^{2}}{ m} \|\param_{\mathcal{I}_1}^{*} \|+v \|\param_{\mathcal{I}_{1}}^{*} \|.
		\end{aligned}
	\end{align*}
	In conjunction with \eqref{S4}, we have 
	$$ \|\boldsymbol{d}_{\mathcal{I}_{21}}^{*} \|+ \frac{v}{m} \|\boldsymbol{d}_{\hat{\mathcal{A}}}^{*} \|+ (\frac{v^{2}}{m}+v ) \|\param_{\mathcal{I}_{1}}^{*} \| \geq m \|\param_{\mathcal{I}_{12}}^{*} \|-v\|\param_{\mathcal{I}_1}^{*}\|-\|\boldsymbol{d}_{\mathcal{I}_{12}}^{*}\| -\frac{v}{m}\| \boldsymbol{d}_{\hat{\mathcal{A}}}^{*}\|-\frac{v^{2}}{m}\| \param_{\mathcal{I}_{1}}^{*} \|.$$
	Thus, we have
	\begin{equation}\label{S5}
		\begin{aligned}
			 \frac{\|\param_{\mathcal{I}_{12}}^{*} \|}{ \|\param_{\mathcal{I}_{1}}^{*} \|}
			\leq& \frac{1}{m\|\param_{\mathcal{I}_{1}}^{*} \|}\left[ \|\boldsymbol{d}_{\mathcal{I}_{21}}^{*} \|+ \|\boldsymbol{d}_{\mathcal{I}_{12}}^{*} \|+ 2 \frac{v}{m} \|\boldsymbol{d}_{\hat{\mathcal{A}}}^{*} \| + 2 (\frac{v^{2}}{m}+v ) \|\param_{\mathcal{I}_{1}}^{*} \|\right]    \notag\\
			\leq& \frac{c}{ \sqrt{\vert \mathcal{I}_{1} \vert } m} \left( \sqrt{\vert \mathcal{I}_{21} \vert }+ \sqrt{\vert \mathcal{I}_{12}\vert }+ \frac{2v}{m} \sqrt{s} \right)\notag + 2 (\frac{v^{2}}{m^{2}}+\frac{v}{m} )   \notag\\
			\leq& 2 (\frac{c }{m }\sqrt{\frac{\vert \mathcal{I}_{12} \vert }{\vert \mathcal{I}_{1} \vert }}+\frac{\sqrt{s}cv}{m^{2} \sqrt{\vert \mathcal{I}_{1} \vert }}+\frac{v^{2}}{m^{2}}+\frac{v}{m} )  \notag 
			\leq 2 (\frac{\sqrt{s} c}{m}+\frac{\sqrt{s}cv}{m^{2}}+\frac{v^2}{m^{2}}+\frac{v}{m} ).
		\end{aligned}
	\end{equation}
	
	Consider the new active set $\tilde{\mathcal{A}}= (\hat{\mathcal{A}} \backslash \mathcal{S}_{1} ) \cup \mathcal{S}_{2}$ and inactive set $\tilde{\mathcal{I}}=(\tilde{\mathcal{A}})^{c}$. 
	Similar to $\mathcal{I}_{1}$, define $\tilde{\mathcal{I}}_{1}=\tilde{\mathcal{I}} \cap \mathcal{A}^{*}$. 
	Since $\tilde{\boldsymbol{\theta}}=\arg\min\limits_{\param_{\tilde{\mathcal{I}}}=0} \loss(\boldsymbol{\theta})$, similar to \eqref{eq:est_true_gap}, we have
	$$ \|\tilde{\boldsymbol{\theta}}_{\tilde{\mathcal{A}}}-\param_{\tilde{\mathcal{A}}}^{*} \| \leq m^{-1} \|\boldsymbol{d} _{\tilde{\mathcal{A}}}^{*} \|+\frac{v}{m} \|\param_{\tilde{\mathcal{I}}_1}^{*} \|.$$
	Notice that $\tilde{\I_1}=\mathcal{I}_{12} \cup\mathcal{A}_{12}$, the gap between the loss function at point $\tilde{\boldsymbol{\theta}}$ and $\loss(\boldsymbol{\theta})$ can be expanded as:
	\begin{equation}
		\begin{aligned}
			\vert  \loss(\tilde{\boldsymbol{\theta}})-\loss (\boldsymbol{\theta}^{*} ) \vert 
			\leq& \vert  \nabla_{\tilde{\mathcal{A}}} \loss(\boldsymbol{\theta}^{*})^\top (\tilde{\boldsymbol{\theta}}_{\tilde{\mathcal{A}}}-\param_{\tilde{\mathcal{A}}}^{*} )\vert+\vert\nabla_{\tilde{\mathcal{I}}_{1}} \loss(\boldsymbol{\theta}^{*})^{\top} (-\param_{\tilde{\mathcal{I}}_{1}}^{*} )\vert
                +\frac{M}{2} \|\tilde{\param}-\param^*\|^2			\\
			\leq& \|\boldsymbol{d}^{*}_{\tilde{\mathcal{A}}} \| \|\tilde{\boldsymbol{\theta}}_{\tilde{\mathcal{A}}}-\param_{\tilde{\mathcal{A}}}^{*} \|+ \|\boldsymbol{d}_{\tilde{\mathcal{I}}_{1}}^{*} \| \|\param_{\tilde{\mathcal{I}}_1}^{*} \|+\frac{M}{2} \|\tilde{\boldsymbol{\theta}}_{\tilde{\mathcal{A}}}-\param_{\tilde{\mathcal{A}}}^{*} \|^{2} 
                +\frac{M}{2}  \|\param_{\tilde{\mathcal{I}}_1}^{*} \|^{2}\\
			\leq&m^{-1}\|\boldsymbol{d}_{\tilde{\mathcal{A}}}^{*}\|^{2}+\frac{v}{m}\|\boldsymbol{d}_{\mathcal{A}}^{*}\|\|\param_{\mathcal{I}_{12} \cup\mathcal{A}_{12}}^{*}\| 
                +\|\boldsymbol{d}_{\tilde{\mathcal{I}}_{1}}^{*}\|\|\param_{\mathcal{I}_{12} \cup \mathcal{A}_{12}}^{*}\|+\frac{M}{2 m^{2}}\|\boldsymbol{d}_{\tilde{\mathcal{A}}}^{*}\|^{2}\\
			&{+\frac{Mv}{ m^{2}}\|\boldsymbol{d}_{\tilde{\mathcal{A}}}^{*}\|\|\param_{\mathcal{I}_{12} \cup \mathcal{A}_{12}}^{*}\|+ (\frac{M v^{2}}{2 m^{2}}+\frac{M}{2}  ) \|\boldsymbol{\theta}^{*}_{\mathcal{I}_{12} \cup \mathcal{A}_{12}} \|^{2}}\\
			\leq& (\frac{1}{m}+\frac{M}{2 m^{2}} ) \frac{s c^{2}}{ \vert \mathcal{I}_{1} \vert } \|\param_{\mathcal{I}_{1}}^{*} \|^{2} 
                + c\left[ (\frac{v}{m}+\frac{M v}{ m^{2}} ) \sqrt{\frac{s}{ \vert \mathcal{I}_{1} \vert }}+\sqrt{\frac{ \vert \tilde{\mathcal{I}}_{1} \vert }{ \vert \mathcal{I}_{1} \vert }} \right] \frac{\| \param_{\mathcal{I}_{12} \cup \mathcal{A}_{12}}^{*} \|}{ \|\param_{\mathcal{I}_{1}}^{*} \|} \|\boldsymbol{\theta}^*_{\mathcal{I}_{1}} \|^2 \\
			&+ (\frac{M v^{2}}{2 m^{2}}+\frac{ M}{2} ) \frac{\| \param_{\mathcal{I}_{12} \cup \mathcal{A}_{12}}^{*} \|^2}{ \|\param_{\mathcal{I}_{1}}^{*} \|^{2}} \|\param_{\mathcal{I}_{1}}^{*} \|^{2}\\
			\leq& \left[\frac{s c^{2}}{m}+\frac{s c^{2} M}{2 m^{2}}+ (\frac{\sqrt{s} c v}{m}+\frac{\sqrt{s} c M v}{ m^{2}}+\sqrt{s} c ) \Delta \right.
                + \left.(\frac{M v^{2}}{2 m^{2}}+\frac{M}{2} ) \Delta^{2} \right] \|\param_{\mathcal{I}_{1}}^{*} \|^{2},
		\end{aligned}
	\end{equation}
	where $ \Delta=2 \sqrt{ (\frac{\sqrt{s} c}{m}+\frac{v}{m} )^{2}+ (\frac{\sqrt{s} c}{m}+\frac{\sqrt{s} c v}{m^{2}}+\frac{v^{2}}{m^{2}}+\frac{v}{m} )^{2}}.$
	This encloses the proof of Lemma~\ref{lemma:loss-now-next}.
\end{proof}
\begin{proof}{\textbf{of Theorem~\ref{thm:recovery}.}}
	Suppose $\hat{\mathcal{A}} \neq \mathcal{A}^*$, then according to inequalities~\eqref{gapOfLoss:beforeSplicing} and~\eqref{gapOfLoss:afterSplicing} in Lemma~\ref{lemma:loss-now-next}, we have:
	\begin{align*}
		\begin{aligned}
		\frac{\loss(\hat{\boldsymbol{\theta}})-\loss(\tilde{\boldsymbol{\theta}})}{m \|\param_{\mathcal{I}_{1}}^{*} \|^{2}}
			\geq \frac{1}{2} -\sqrt{2}\frac{\sqrt{s}c}{m}-(\frac{\sqrt{s}c}{m} )^{2}  - \frac{1}{2}(\frac{\sqrt{s}c}{m} )^{2} \frac{M}{m}
			-\frac{\sqrt{s}c}{m} ( \frac{v}{m}+\frac{M}{m} \frac{v}{m}+1 ) \Delta- (\frac{1}{2} \frac{M}{m} (\frac{v}{m} )^{2}+\frac{1}{2} \frac{M}{m} ) \Delta^{2}.
		\end{aligned}
	\end{align*}
	
	Denote $x=\frac{M}{m}$, then $\frac{v}{m}=\frac{x-1}{2}$, and $\frac{\sqrt{s}c}{m}=0.35(1.49-x)$, we have
	$$\loss(\hat{\boldsymbol{\theta}})- \loss(\tilde{\boldsymbol{\theta}}) \geq m \|\param_{\mathcal{I}_{1}}^{*} \|^{2} \varepsilon(\frac{M}{m}),$$ 
	where $\varepsilon(x)=
		\frac{1}{2}-\sqrt{2}(0.5215-0.35x)-(0.5215-0.35x)^2-\frac{1}{2}x(0.5215-0.35x)^2 
		- \frac{1}{2}(0.5215-0.35x)(x^2+1)\Delta(x) - \frac{1}{8}x(x^2-2x+5)\Delta(x)^2$,
	and $\Delta(x)=(0.15x+0.0215) \sqrt{x^{2}+2 x+5}$.
	
	By Assumption~\ref{con:bound-gradient}, we know $x\in [1,1.49]$, and in this interval, it's easy to prove $\varepsilon(x) > 0.0125$ which leads to a contradiction, thus $\hat{\mathcal{A}} = \mathcal{A}^{*}.$
\end{proof}
\begin{proof}{\textbf{of Theorem~\ref{thm:convergence-rate}.}}
	We reuse the notions such as $m, M, v, x, \Delta(x)$ in the proof of Theorem~\ref{thm:recovery}. Let $\Delta=2 \sqrt{ (\frac{\sqrt{s} c}{m}+\frac{v}{m} )^{2}+ (\frac{\sqrt{s} c}{m}+\frac{\sqrt{s} c v}{m^{2}}+\frac{v^{2}}{m^{2}}+\frac{v}{m} )^{2}}$, then by Assumption~\ref{con:bound-gradient}, we have:
	\begin{equation}\label{S10}
		\begin{aligned}
			\frac{ \vert  \loss(\boldsymbol{\theta}^{t+1})-\loss(\boldsymbol{\theta}^{*}) \vert  }{ \vert  \loss(\boldsymbol{\theta}^{t})-\loss(\boldsymbol{\theta}^{*}) \vert  } 
			\leq
			& \frac{\frac{s c^{2}}{m}+\frac{s c^{2} M}{2 m^{2}}+ (\frac{\sqrt{s} c v}{m}+\frac{\sqrt{s} c M v}{ m^{2}}+\sqrt{s} c ) \Delta+ (\frac{M v^{2}}{2 m^{2}}+\frac{M}{2} ) \Delta^{2} }{\frac{m}{2} - \sqrt{2s}c}\\
			\leq
			&[\frac{1}{2}-\sqrt{2}(0.5215-0.35x)]^{-1}[(0.5215-0.35x)^2\\
                &+\frac{1}{2}x(0.5215-0.35x)^2 
				+ \frac{1}{2}(0.5215-0.35x)(x^2+1)\Delta(x)
                + \frac{1}{8}x(x^2-2x+5)\Delta(x)^2],
		\end{aligned}
	\end{equation}
	where the first inequality follows from  inequalities~\eqref{gapOfLoss:beforeSplicing} and~\eqref{gapOfLoss:afterSplicing} given by Lemma~\ref{lemma:loss-now-next}, and the second inequality comes from $x=\frac{M}{m}$, $\frac{v}{m}=\frac{x-1}{2}$ and $\frac{\sqrt{s}c}{m}=0.35(1.49-x)$. 
	
	Furthermore, from Assumption~\ref{con:bound-gradient}, we know $x \in [1,1.49]$. It is easy to prove that the univariate function in the right-hand side of { \eqref{S10}}, denoted by $\delta(x)$, satisfies $\delta_1^{-1} \coloneqq \max\limits_{x \in [1,1.49]}\delta(x)<0.968$. Consequently, we have:
	$$
	\vert  \loss(\boldsymbol{\theta}^{t+1})-\loss(\boldsymbol{\theta}^{*}) \vert  \leq \delta_1^{-1} \vert  \loss(\boldsymbol{\theta}^{t})-\loss(\boldsymbol{\theta}^{*}) \vert ,
	$$
	which completes the proof.
\end{proof}

 \begin{proof}{\textbf{of Theorem~\ref{thm:convergence}.}}
 	Here, the notations $m, M, v, x, \Delta(x)$ adopt from the proof of Theorem~\ref{thm:recovery}. For each $\boldsymbol{\theta}^t$ whose support set is not equal to $\mathcal{A}^*$, we have
 	\begin{align*}
 		\loss(\boldsymbol{\theta}^t)-\loss(\boldsymbol{\theta}^{*}) 
 		\geq (\frac{1}{2}-\sqrt{2}(0.5215-0.35x))m\vartheta 
 		> 0.2574m\vartheta^2,
 	\end{align*}
 	where the first inequality comes from the inequality~\eqref{gapOfLoss:beforeSplicing} in Lemma~\ref{lemma:loss-now-next}.
 	So, there is a constant $\delta_2>0$ such that
 	\begin{equation} \label{eq:S11}
 		\loss(\boldsymbol{\theta}^t)-\loss\left(\boldsymbol{\theta}^{*}\right) > \delta_2m\vartheta^2.
 	\end{equation}
 	On the other hand, if $t>\log_{\delta_{1}}\left[\frac{\mid\loss(\boldsymbol{\theta}^{0})-\loss(\boldsymbol{\theta}^{*}) \mid}{\delta_{2} m \vartheta^{2}}\right]$, Theorem~\ref{thm:convergence-rate} asserts that 
 \begin{equation} 
 		\vert  \loss(\boldsymbol{\theta}^{t})-\loss(\boldsymbol{\theta}^{*}) \vert  \leq \delta_{1}^{-t} \vert  \loss(\boldsymbol{\theta}^{0})-\loss(\boldsymbol{\theta}^{*}) \vert <\delta_{2} m \vartheta^{2}, 
   \label{eq:S12}
 \end{equation}
 	where the first inequality raises from \eqref{eq:convergence-rate}. 
  However, { \eqref{eq:S11}} contradicts to { \eqref{eq:S12}}, implying $\mathcal{A}^{t}=\mathcal{A}^{*}$ must hold when $t>\log_{\delta_{1}}\left[\frac{\mid\loss(\boldsymbol{\theta}^{0})-\loss(\boldsymbol{\theta}^{*}) \mid}{\delta_{2} m \vartheta^{2}}\right]$.
 \end{proof}

\subsection{Proof without RIP-type Conditions}\label{sec:proof-rip-free}
\begin{proof}{\textbf{of Lemma~\ref{lemma:loss_down_bound}.}}
	We reuse the notions in the proof of Lemma~\ref{lemma:loss-now-next} (e.g., we denote $M=M_{s+s^*}$). It is directly to see that $\rho_0 \leq \rho_1 \leq \cdots \leq \rho_{s^*}\leq\rho_{s^*+1}$. When $M^{-1} \leq \rho_1$, selecting $k = 0$ makes the conclusion trivially hold. Consider the case that $M^{-1} \geq \rho_1$, we construct an auxiliary $s$-sparse vector $\bar{\param} \in \mathbb{R}^p$:
	\begin{align*}
		\bar{\param} = 
		\begin{cases}
			\hat{\param}_i, & \textup{ if } i \in \hat{\mathcal{A}} \backslash \mathcal{S}_\mathcal{A}^{(k)} \\
			\frac{-1}{M+\varepsilon} \hat{\boldsymbol{d}}_i, & \textup{ if } i \in \mathcal{S}_\mathcal{I}^{(k)} \\
			0, & \textup{ if } i \in \mathcal{S}_\mathcal{A}^{(k)} \cup (\hat{\mathcal{I}} \backslash \mathcal{S}_\mathcal{I}^{(k)})
		\end{cases},
	\end{align*}
	from which we can easily notice that
	\begin{equation}\label{eqn:auxiliary_variable_property}
		(M+\epsilon)(\bar{\param} - \hat{\param}) = 
		\begin{cases}
			0, & \textup{ if } i \in \hat{\mathcal{A}}\backslash \mathcal{S}_\mathcal{A}^{(k)} \\
			-\hat{\boldsymbol{d}}_i, & \textup{ if } i \in \mathcal{S}_\mathcal{I}^{(k)} \\
			-(M+\epsilon)\hat{\param}_i, & \textup{ if } i \in \mathcal{S}_\mathcal{A}^{(k)} \\
			0, & \textup{ if } i \in \hat{\mathcal{I}} \backslash \mathcal{S}_\mathcal{I}^{(k)}
		\end{cases}.
	\end{equation}
	
	
	
	
	Then, we can find that
	\begin{align*}
		\|(M+\varepsilon)(\bar{\param}-\hat{\param})+\hat{\boldsymbol{d}}\|^2-\| \hat{\boldsymbol{d}}\|^2
        = \|(M+\varepsilon) \hat{\param}_{\mathcal{S}_\mathcal{A}^{(k)}}\|^2-\| \hat{\boldsymbol{d}}_{\mathcal{S}_\mathcal{I}^{(k)}}\|^2 
		\leq \| \rho_k^{-1} \hat{\param}_{\mathcal{S}_\mathcal{A}^{(k)}}\|^2-\| \hat{\boldsymbol{d}}_{\mathcal{S}_\mathcal{I}^{(k)}} \|^2  \leq 0
	\end{align*}
	where the first inequality holds because of $(M+\epsilon)^{-1} \in [\rho_k, \rho_{k+1})$ and the second inequality holds due to the definition of $\rho_k$.
	By decomposing the left-hand side of the above inequality and simple algebra, we have
	\begin{equation}\label{eqn:auxiliary_variable_error}
		(\bar{\param}-\hat{\param})^T \hat{\boldsymbol{d}} \leq -\frac{M+\varepsilon}{2}\|\bar{\param}-\hat{\param}\|^2.
	\end{equation}
	Next, we can establish an upper bound for  $f(\bar{\param})-f(\hat{\param})$:
	\begin{equation}\label{eqn:auxiliary_variable_objective_bound}
		\begin{aligned}
		f(\bar{\param})-f(\hat{\param}) 
		& \leq(\bar{\param}-\hat{\param})^T \hat{\boldsymbol{d}}+\frac{M}{2}\|\bar{\param}-\hat{\param}\|^2 \\
		&  \overset{\eqref{eqn:auxiliary_variable_error}}{\leq} \frac{-\varepsilon}{2}\|\bar{\param}-\hat{\param}\|^2
		\overset{\eqref{eqn:auxiliary_variable_property}}{=}\frac{-\varepsilon\|\hat{\boldsymbol{d}}_{\mathcal{S}_\mathcal{I}^{(k)}}\|^2}{2(M+\varepsilon)^2}-\frac{\varepsilon\|\hat{\param}_{\mathcal{S}_\mathcal{A}^{(k)}}\|^2}{2} 
		\leq \frac{-\varepsilon \|\hat{\boldsymbol{d}}_{\mathcal{S}_\mathcal{I}^{(k)}}\|^2}{2(M+\varepsilon)^2}.
		\end{aligned}
	\end{equation}
	where the first inequality comes from the fact that $f$ is $M$-RSS. 
	Finally, since both $\tilde{\param}^{(k)}$ and $\bar{\param}$ have the support set $(\hat{\mathcal{A}} \backslash \mathcal{S}_\mathcal{A}^{(k)}) \cup \mathcal{S}_\mathcal{I}^{(k)}$, we have $\loss(\tilde{\param}^{(k)})-f(\hat{\param}) \leq f(\bar{\param})-f(\hat{\param})$ owing to the definition of $\tilde{\param}^{(k)}$. Coupled with~\eqref{eqn:auxiliary_variable_objective_bound}, it leads to the conclusion of the Lemma~\ref{lemma:loss_down_bound}. 
\end{proof}

\begin{proof}{\textbf{of Theorem~\ref{thm:support_recovery_relax}.}}
	We reuse the notions in the proof of Lemma~\ref{lemma:loss-now-next}.
	We first show that the algorithm will not terminate if $\mathcal{A}^*\nsubseteq  \hat{\mathcal{A}} $, i.e., $\mathcal{I}_1=\mathcal{A}^* \backslash \hat{\mathcal{A}} \neq \emptyset$. 
	
	When $\I_1 \neq \emptyset$, we have:
	\begin{equation}\label{eqn:thm_support_recovery_relax_1}
		\|\hat{\param}-\param^*\| \geq \frac{1}{\sqrt{2}}(\|\param_{\mathcal{I}_1}^*\|+\|\hat{\param}_{\mathcal{A}_2}\|) \geq \frac{1}{\sqrt{2}}(\vartheta+\|\hat{\param}_{\mathcal{A}_2}\|).
	\end{equation}
	Then, we can establish an upper bound for $\|\hat{\param}_{\mathcal{A}_2}\|$ with $\frac{\sqrt{2}}{m}\|\hat{\boldsymbol{d}}_{\mathcal{I}_2^C}\|$: 
	\begin{align*}
		\|\hat{\boldsymbol{d}}_{\mathcal{I}_2^C}\| 
		& =\|\boldsymbol{d}_{\mathcal{I}_2^C}^*+\nabla^2_{\mathcal{I}_2^C}\loss(\bar{\param})(\hat{\param}_{\mathcal{I}_2^C}-\param^*_{\mathcal{I}_2^C})\| 
        \geq m\|\hat{\param}-\param^*\|-\|\boldsymbol{d}_{\mathcal{I}_2^C}^*\| \\
		& \overset{\eqref{eqn:thm_support_recovery_relax_1}}{\geq} \frac{m}{\sqrt{2}}\|\hat{\param}_{\mathcal{A}_2}\| + \frac{m}{\sqrt{2}} \vartheta -\sqrt{s+s^*}\|\boldsymbol{d}^*\|_{\infty} 
		\geq \frac{m}{\sqrt{2}}\|\hat{\param}_{\mathcal{A}_2}\|,
	\end{align*}
 where $\overline{\boldsymbol{\theta}}=t \hat{\boldsymbol{\theta}}+(1-t) \boldsymbol{\theta}^{*}, 0 \leq t \leq 1$,  the last inequality comes from Assumption~\ref{con:bound-gradient-relaxed}. 
	Upon this inequality, we can show that 
	\begin{equation}\label{eqn:max_gradient_min_param}
        \begin{aligned}
		\max \left\{\vert \hat{\boldsymbol{d}}_i\vert: i \in \hat{\mathcal{I}}\right\} 
		\geq \frac{1}{\sqrt{s^*}} \|\hat{\boldsymbol{d}}_{\mathcal{I}_1}\| 
		= \frac{1}{\sqrt{s^*}} \|\hat{\boldsymbol{d}}_{\mathcal{I}_2^C}\| 
		\geq \frac{m}{\sqrt{2 s^*}}\|\hat{\param}_{\mathcal{A}_2}\| 
		\geq m \sqrt{\frac{s-s^*}{2 s^*}} \mathop{\min} \{\vert\hat{\param}_i\vert: i \in \hat{\A}\}            
        \end{aligned}
	\end{equation}
	where the equality comes from the fact that 
	$\hat{\boldsymbol{d}}_{\hat\A} = \mathbf{0}$ and the right-most inequality results from $|\A_2| = |\hat{\A} \cap \I^*| = s - |\hat{\A} \cap \A^*| \geq s - s^*$.
	
	According to~\eqref{eqn:max_gradient_min_param}, we have 
	\begin{align*}
		\frac{1}{m}\sqrt{\frac{2s^*}{s-s^*}}
		  \geq \frac{\min \left\{\vert\hat{\param}_i\vert: i \in \hat{\mathcal{A}}\right\}}{\max \left\{\vert \hat{\boldsymbol{d}}_i\vert: i \in \hat{\mathcal{I}}\right\}}  
		= \frac{\max \left\{\vert\hat{\param}_i\vert: i \in \mathcal{S}_\mathcal{A}^{(1)}\right\}}{\min \left\{\vert \hat{\boldsymbol{d}}_i\vert: i \in \mathcal{S}_\mathcal{I}^{(1)}\right\}} = \rho_1,
	\end{align*}
	further, since $s>(1+2\frac{M^2}{m^2})s^*$, we have $\frac{1}{M} > \frac{1}{m}\sqrt{\frac{2s^*}{s-s^*}}$, and thus, $\frac{1}{M} > \rho_1$. 
	Therefore, for $\forall \varepsilon>0$, there exists  $k\geq 1$ satisfying $\frac{1}{M+\varepsilon} \in[\rho_k, \rho_{k+1})$.
	By Lemma~\ref{lemma:loss_down_bound}, for $\tilde{\param}^{(k)}$, it enjoys: 
	$$\loss(\tilde{\param}^{(k)}) - \loss(\hat{\param}) \leq \frac{-\varepsilon}{2(M_{s+s^*}+\varepsilon)^2}\|\nabla\loss(\hat{\param})_{\mathcal{S}_\mathcal{I}^{(k)}}\|^2 < 0,$$ 
	which implies the algorithm shall not terminate. 
	
	Secondly, we will show that $\operatorname{supp}(\param^*)=\operatorname{supp}(\mathcal{H}_{s^*}(\hat{\param}))$ holds. To this end, it is equivalent to prove $\vartheta > \|\param^*-\mathcal{H}_{s^*}(\hat{\param})\|$, and we will prove this via constructing a contradiction. 

	Presume $\vartheta \leq \|\param^*-\mathcal{H}_{s^*}(\hat{\param})\|$, then according to Theorem~1 and Remark 2 of \citet{shen_tight_nodate}, $\mathcal{H}_{s^*}(\hat{\param})$ satisfies a tight bound $\|\param^*-\mathcal{H}_{s^*}(\hat{\param})\| \leq \frac{\sqrt{5}+1}{2}\|\param^*-\hat{\param}\|,$
	and hence, we can conclude that
	\begin{equation}\label{eqn:lower_bound_hat_param_relaxed}
		\vartheta^{-1}\|\param^*-\hat{\param}\| \geq \frac{2}{\sqrt{5}+1}.
	\end{equation} 
	On the other hand, we have 
	\begin{align*}
		\mathbf{0}
		=\nabla_{\hat{\mathcal{A}}} \loss(\hat{\boldsymbol{\theta}})
		=\nabla_{\hat{\mathcal{A}}} \loss (\boldsymbol{\theta}^{*} )+\nabla_{\hat{\mathcal{A}}}^{2} \loss(\bar{\boldsymbol{\theta}}) (\hat{\boldsymbol{\theta}}_{\hat{\mathcal{A}}}-\param_{\hat{\mathcal{A}}}^{*} )
            +\nabla_{\hat{\mathcal{A}}} \nabla_{\mathcal{I}_{1}} \loss(\bar{\boldsymbol{\theta}}) (-\param_{\mathcal{I}_{1}}^{*} ) 
		=\nabla_{\hat{\mathcal{A}}} \loss (\boldsymbol{\theta}^{*} )+\nabla_{\hat{\mathcal{A}}}^{2} \loss(\bar{\boldsymbol{\theta}}) (\hat{\boldsymbol{\theta}}_{\hat{\mathcal{A}}}-\param_{\hat{\mathcal{A}}}^{*} )
	\end{align*}
	where $\bar{\param}=\lambda\param^*+(1-\lambda)\hat{\param}$ for $\lambda \in (0,1)$, and the last inequality holds due to $\hat{\mathcal{A}} \supseteq \mathcal{A}^*$. Hence, we have
	\begin{equation}\label{eqn:upper_bound_hat_param_relaxed}
        \begin{aligned}
		\|\param^*_{\hat{\mathcal{A}}}-\hat{\param}_{\hat{\mathcal{A}}}\|
         = \|[\nabla_{\hat{\mathcal{A}}}^2\loss(\bar{\param})]^{-1}\boldsymbol{d}^*_{\hat{\mathcal{A}}}\| 
        \leq \frac{1}{m}\|\boldsymbol{d}^*_{\hat{\mathcal{A}}}\| \leq \frac{\sqrt{s}}{m}\|\boldsymbol{d}^*\|_{\infty},            
        \end{aligned}
	\end{equation}
	Furthermore, owing to Assumption~\ref{con:bound-gradient-relaxed} and $\|\param^*-\hat{\param}\| = \|\param^*_{\hat{\mathcal{A}}}-\hat{\param}_{\hat{\mathcal{A}}}\|$, we can derive from \eqref{eqn:lower_bound_hat_param_relaxed} and \eqref{eqn:upper_bound_hat_param_relaxed} that
	$$\frac{m}{2\sqrt{s+s^*}} > \vartheta^{-1} \|\boldsymbol{d}^*\|_{\infty} \geq \frac{m}{\sqrt{s}} \vartheta^{-1} \|\param^*-\hat{\param}\| \geq \frac{2m}{(\sqrt{5}+1)\sqrt{s}}.$$ 
	However, we witness a contradiction between the left-most and right-most parts of this inequality.
	Therefore, $\vartheta > \|\param^*-\mathcal{H}_{s^*}(\hat{\param})\|$ holds, and we can claim $\operatorname{supp}(\param^*)=\operatorname{supp}(\mathcal{H}_{s^*}(\hat{\param}))$.
\end{proof}

\begin{proof}{\textbf{of Theorem~\ref{thm:convergence_rate_relax}.}}
	Again, we use the notions in the proof of Theorem~\ref{thm:recovery}. Beside, we denote $\hat{\param} \coloneqq \param^{t}$ and $\tilde{\param} \coloneqq \param^{t+1}$. Let $\delta_4=\frac{m_{s+s^*}}{4 M_{s+s^*}}\left(1-\frac{4 s^*}{s-2 s^*} \frac{M_{s+s^*}^2}{m_{s+s^*}^2}\right), \varepsilon=\left(\frac{1}{2} \delta_4^{-1} \frac{m}{M}-1\right)^{-1} M$, Lemma~\ref{lemma:loss_down_bound} ensures there is $k \in \{0, 1, \ldots, s^*\}$ satisfying $\frac{1}{M+\varepsilon} \in[\rho_k, \rho_{k+1})$ such that $\loss(\tilde{\param}^{(k)}) - \loss(\hat{\param}) \leq \frac{-\varepsilon}{2(M+\varepsilon)^2}\|\hat{\boldsymbol{d}}_{	\mathcal{S}_\mathcal{I}^{(k)}}\|^2=-\frac{\delta_4}{m}(1-2\delta_4\frac{M}{m})\|\hat{\boldsymbol{d}}_{	\mathcal{S}_\mathcal{I}^{(k)}}\|^2.$
	
	We first consider the case that $k=s^*$. In this case, $|\mathcal{S}_\mathcal{I}^{(k)}| \geq |\mathcal{I}_1|$ holds, and thus $\|\hat{\boldsymbol{d}}_{\mathcal{S}_\mathcal{I}^{(k)}}\| \geq \|\hat{\boldsymbol{d}}_{\mathcal{I}_1}\|$ due to the definition of $\mathcal{S}_\mathcal{I}^{(k)}$. Therefore, we can show that 
	\begin{equation}\label{eqn:objective_gap_hat_true}
		\begin{aligned}
			\loss(\hat{\param})-\loss(\param^*)
			\leq& \hat{\boldsymbol{d}}^{\top}(\hat{\param}-\param^*)-\frac{m}{2}\|\hat{\param}-\param^*\|^2 \\
			=& \hat{\boldsymbol{d}}_{\mathcal{I}_2^C}^{\top}(\hat{\param}-\param^*)_{\mathcal{I}_2^C}-\frac{m}{2}\|\hat{\param}_{\mathcal{I}_2^C}-\param_{\mathcal{I}_2^C}^*\|^2 
			= \frac{1}{2 m}\|\hat{\boldsymbol{d}}_{\mathcal{I}_2^C}\|^2-\frac{m}{2}\|\hat{\param}_{\mathcal{I}_2^C}-\param_{\mathcal{I}_2^C}^*-\frac{1}{m}\hat{\boldsymbol{d}}_{\mathcal{I}_2^C}\|^2 \\
			\leq&  \frac{\|\hat{\boldsymbol{d}}_{\mathcal{I}^C_2}\|}{2 m} = \frac{\|\hat{\boldsymbol{d}}_{\mathcal{I}_1}\|}{2 m}
			\leq  \frac{\|\hat{\boldsymbol{d}}_{\mathcal{S}_\mathcal{I}^{(k)}} \|}{2 m} \leq \frac{1}{2\delta_4(1-2\delta_4\frac{M}{m})}(\loss(\hat{\param})-\loss(\tilde{\param}^{(k)})).
		\end{aligned}
	\end{equation}
	It leads to: 
	\begin{align*}
		\loss(\tilde{\param}^{(k)})-\loss(\param^*)
		=& \loss(\tilde{\param}^{(k)}) - \loss(\hat\param) + \loss(\hat\param) -\loss(\param^*)
		\leq [1-2\delta_4(1-2\delta_4\frac{M}{m})](\loss(\hat{\param})-\loss\left(\param^*\right))\\
		\leq& \left(1-\delta_4\right)(f(\hat{\param})-f\left(\param^*\right)),
	\end{align*}
	where the last inequality holds due to $\delta_4 < \frac{m}{4M}$ from the definition of $\delta_4$. With the above inequality and the fact that $f(\tilde{\param}) \leq f(\tilde{\param}^{(k)})$, we complete the proof for the first case.

	Secondly, we turn to the case that $k+1 \leq s^*$. Then, we give a lower bound for $\|\hat{\param}-\param^*\|$. It is easily notice that, $\|\hat{\param}-\param^*\| \geq \|\hat{\param}_{\mathcal{A}_{21}}\|$; and for $\|\hat{\param}_{\mathcal{A}_{21}}\|$, it satisfies
	\begin{align*}
		\|\hat{\param}_{\mathcal{A}_{21}}\| 
		&\geq \sqrt{|\mathcal{A}_{21}|} \max\{|\hat{\param}_i| : i \in \mathcal{S}^{(k)}_{\A}\} 
		\geq \sqrt{s - 2s^*} \max\{|\hat{\param}_i| : i \in \mathcal{S}^{(k)}_{\A}\} 
		= \sqrt{s - 2s^*} \rho_{k+1}\min\{|\hat{\boldsymbol{d}}_i| : i \in \mathcal{S}^{(k)}_{\I}\} \\
		&\geq \sqrt{s - 2s^*} \rho_{k+1} |\mathcal{I}_{12}|^{-\frac{1}{2}} \|\hat{\boldsymbol{d}}_{\mathcal{I}_{12}}\| 
		\geq \sqrt{\frac{s-2 s^*}{s^*}}  \rho_{k+1}\|\hat{\boldsymbol{d}}_{\mathcal{I}_{12}}\| 
	    \geq \frac{1}{M+\varepsilon} \sqrt{\frac{s-2 s^*}{s^*}}\|\hat{\boldsymbol{d}}_{\mathcal{I}_{12}}\|,
	\end{align*}
	where the first and second inequalities result from the definition of $\mathcal{S}^{(k)}_{\A}$ and $|\mathcal{A}_{21}| \geq s-2s^*$, and the third and fourth equalities comes from the definition of $\mathcal{S}^{(k)}_{\I}$ and $|\mathcal{I}_{12}| \leq s^*$. Consequently, we have:
	\begin{equation}\label{eqn:hat_true_bound2}
		\frac{1}{M+\varepsilon} \sqrt{\frac{s-2 s^*}{s^*}}\|\hat{\boldsymbol{d}}_{\mathcal{I}_{12}}\| \leq \|\hat{\param}-\param^*\|
	\end{equation}

	Let $l=1-2\delta_4\frac{M}{m}$ be a root of 
	\begin{equation}\label{eqn:l_equation}
		\frac{m}{4 l}-\frac{m}{2}+\frac{l}{m}(M+\varepsilon)^2 \frac{s^*}{s-2 s^*}=0.
	\end{equation}
	 Then, we have
	\begin{align*}
	\loss(\hat{\param})-\loss(\param^*) 
	 \leq& \hat{\boldsymbol{d}}_{\mathcal{I}_2^C}(\hat{\param}-\param^*)_{\mathcal{I}_2^C}-\frac{m}{2}\|\hat{\param}_{\mathcal{I}_2^C}-\param_{\mathcal{I}_2^C}^*\|\\
	 =& (\frac{m}{4 l}-\frac{m}{2}) \|\hat{\param}-\hat{\param}^*\|^2+\frac{l}{m}\| \hat{\boldsymbol{d}}_{\mathcal{I}_1}\|^2 
        -\| \sqrt{\frac{l}{m}} \hat{\boldsymbol{d}}_{\mathcal{I}_2^C}-\sqrt{\frac{m}{4l}}(\hat{\param}-\param^*)_{\mathcal{I}_2^C} \|^2 \\
	 \leq &\left(\frac{m}{4 l}-\frac{m}{2}\right)\|\hat{\param}-\param^*\|^2+\frac{l}{m}\|\hat{\boldsymbol{d}}_{\mathcal{I}_{12}}\|^2+\frac{l}{m}\|\hat{\boldsymbol{d}}_{\I_{11}}\|^2 \\
	 \leq &\frac{l}{m}\|\hat{\boldsymbol{d}}_{\mathcal{S}_\mathcal{I}^{(k)}}\|^2 + \left(\frac{m}{4 l}-\frac{m}{2}\right)\|\hat{\param}-\param^*\|^2 
     +\frac{l}{m}(M+\varepsilon)^2 \frac{s^*}{s-2s^*}\|\hat{\param}-\param^*\|^2\\
	 =&\frac{l}{m}\|\hat{\boldsymbol{d}}_{\mathcal{S}_\mathcal{I}^{(k)}}\|^2 
	\leq \frac{l}{\delta_4(1-2\delta_4\frac{M}{m})}(\loss(\hat{\param})-\loss(\tilde{\param}^{(k)})) 
	 = \delta_4^{-1}(\loss(\hat{\param})-\loss(\tilde{\param}^{(k)})),
	\end{align*}
	where the second inequality holds owing to $\I_1 = \I_{12} \cup \I_{11}$, the third inequality holds because of \eqref{eqn:hat_true_bound2} and the definition of $\mathcal{S}_{\I}^{(k)}$, the second equation holds because of~\eqref{eqn:l_equation}. Upon this inequality, simple algebra shows that $\loss(\tilde{\param}^{(k)})-\loss\left(\param^*\right) \leq \left(1-\delta_4\right)(\loss(\hat{\param})-\loss(\param^*))$. 
	According to the definition of $\tilde{\param}$, 
	we can easily derive that $\loss(\tilde{\param})-\loss\left(\param^*\right) \leq \left(1-\delta_4\right)(\loss(\hat{\param})-\loss(\param^*))$. This completes the proof of the second part.
\end{proof}

\begin{proof}{\textbf{of Theorem~\ref{thm:convergence_relax}.}}
	Again, we use the notations in the proof of Theorem~\ref{thm:convergence_rate_relax}. Following the same derivation in~\eqref{eqn:objective_gap_hat_true}, we have: 
	$$
	\begin{aligned}
		\loss(\hat{\param})-\loss(\param^*)
		 \geq \frac{m}{2}\|\hat{\param} - \param^*\|^2 - \|\boldsymbol{d}^*_{\mathcal{I}_2^C}\|\|\hat{\param} - \param^*\|
		 = h(\|\hat{\param} - \param^*\|),
	\end{aligned}
	$$
	where $h(x) = \frac{m}{2}x^2 - \|\boldsymbol{d}^*_{\mathcal{I}_2^C}\|x$ is a quadratic function with respect to $x$. 

	Let $\delta_5 \coloneqq (m\vartheta)^{-1} \sqrt{s+s^*}\|\boldsymbol{d}^*\|$, when
	$\hat{\mathcal{A}}\supsetneq \mathcal{A}^{*}$, we have
        $$
		\|\hat{\param}-\param^*\| 
		\geq \|\param^*_{\mathcal{I}_1}\| 
		\geq \vartheta 
		= (m\delta_5)^{-1}\sqrt{s+s^*}\|\boldsymbol{d}^*\|_{\infty} 
		\geq (m\delta_5)^{-1}\|\boldsymbol{d}^*_{\mathcal{I}_2^C}\| 
		> m^{-1}\|\boldsymbol{d}^*_{\mathcal{I}_2^C}\|
        $$
	we have $\nabla h(\|\hat{\param} - \param^*\|) = m\|\hat{\param} - \param^*\| - \|\boldsymbol{d}^*_{\mathcal{I}_2^C}\| > 0$. Consequently, 
	\begin{equation}\label{eqn:objective-gap-hat-true-uncover}
        \begin{aligned}
		\loss(\hat{\param})-\loss(\param^*)
		\geq h(\vartheta) = \frac{m}{2}\vartheta^2 - \vartheta\|\boldsymbol{d}^*_{\mathcal{I}_2^C}\|
		\geq (\frac{1}{2}-\delta_5)m\vartheta^2 > 0            
        \end{aligned}
	\end{equation}
	On the other hand, by recursively applying the result of Theorem~\ref{thm:convergence_rate_relax}, we can derive that, when $t \geq \log_{(1-\delta_4)^{-1}}\left\lceil\frac{\mathop{\max}\{\loss(\param^0)-\loss(\param^*),0\}}{(\frac{1}{2}-\delta_5)m\vartheta^2}\right\rceil$, or equivalently  
	$$(1-\delta_4)^t\mathop{\max}\{\loss(\param^0)-\loss(\param^*),0\} < (\frac{1}{2}-\delta_5)m\vartheta^2,$$ 
	we have 
	$$\loss(\param^t)-\loss(\param^*) \leq (1-\delta_4)^t\mathop{\max}\{\loss(\param^0)-\loss(\param^*),0\}.$$
	Combining the two inequalities finally shows that $\loss(\hat{\param})-\loss(\param^*) < (\frac{1}{2}-\delta_5)m\vartheta^2$. Compared with~{ \eqref{eqn:objective-gap-hat-true-uncover}} when $\hat{\A} \subsetneq \A^*$, we can conclude that  
	$\supp(\param^t) \supseteq \mathcal{A}^{*}$ when $t \geq \log_{(1-\delta_4)^{-1}}\left\lceil\frac{\mathop{\max}\{\loss(\param^0)-\loss(\param^*),0\}}{(\frac{1}{2}-\delta_5)m\vartheta^2}\right\rceil$.
\end{proof}

\subsection{Proof of Lemmas~\ref{lemma1}-\ref{lemma2}.}

\begin{proof}{\textbf{of Lemma~\ref{lemma1}.}}
	For any $\mathcal{A}$ and $\boldsymbol{x}$ which satisfy $\vert \mathcal{A} \cup \operatorname{supp}(\boldsymbol{x})\vert  \leq k$, let $\mathcal{F}=\mathcal{A} \cup \operatorname{supp}(\boldsymbol{x})$ and consider the set $\Omega_S = \{a \in \mathbb{R}^{p} \mid \operatorname{supp}(a) \subseteq \mathcal{F} \}$. It is easy to see that $\Omega_S$ is a convex set.
	Because of $f$ is $m_k$-RSC and $M_k$-RSS, for $\forall a, b \in \Omega_S,\vert \operatorname{supp}(a - b)\vert  \leq\vert  \mathcal{F}\vert  \leq k$, we have:
	\begin{align*}
		\frac{m_{k}}{2}\|a-b\|^{2} \leq f(a)-f(b)-\nabla f(b)^{\top}[a-b] \leq \frac{M_{k}}{2}\|a-b\|^{2}.
	\end{align*}
	Hence, we have the following:
	\begin{equation}\label{lemma1:eq1}
		\begin{aligned}
			&f(a) \leq f(b)+\nabla f(b)^{\top}(a-b)+\frac{M_{k}}{2}\|a-b\|^{2}, \\
			&f(a) \geq f(b)+\nabla f(b)^{\top}(a-b)+\frac{m_{k}}{2}\|a-b\|^{2} .
		\end{aligned}
	\end{equation}
	
	We can find that { \eqref{lemma1:eq1}} are first order conditions for convexity of $\frac{M_{k}}{2}\|a\|^{2}-f(a)$ and $f(a)-\frac{m_{k}}{2}\|a\|^{2}$ at $\Omega_S$:
    \begin{align*}
        \begin{array}{ll}
            & [f(a)-\frac{m_{k}}{2}\|a\|^{2} ]- [f(b)-\frac{m_{k}}{2}\|b\|^{2} ] \geq \nabla [f(b)-\frac{m_{k}}{2}\|b\|^{2} ]^{\top}(a-b) \\
            \Leftrightarrow & f(a) \geq f(b)+\frac{m_{k}}{2} (\|a\|^{2}-\|b\|^{2} )  + [\nabla f(b)-m_{k} b ]^{\top}(a-b) \\
            \Leftrightarrow & f(a) \geq f(b)+\nabla f(b)^{\top}(a-b) \quad +\frac{m_{k}}{2} (\|a\|^{2}-\|b\|^{2}-2 b^{\top} a+2 b^{\top} b ) \\
            \Leftrightarrow & f(a) \geq f(b)+\nabla f(b)^{\top}(a-b)+\frac{m_{k}}{2}\|a-b\|^{2}.
        \end{array}
    \end{align*}

	Therefore, $f(a)-\frac{m_{k}}{2}\|a\|^{2}$ is convex at $\Omega_S$, so does $\frac{M_{k}}{2}\|a\|^{2}-f(a)$ similarly. Then, by second-order condition for convexity of the two functions, we have $\nabla^{2} [\frac{M_{k}}{2}\|a\|^{2}-f(a) ] \succeq 0 \text { and } \nabla^{2} [f(a)-\frac{m_{k}}{2}\|a\|^{2} ] \succeq 0$ for any $a \in \Omega_{\mathcal{F}}$. 
	In other words, we have $m_{k} I_{\vert  \mathcal{F}\vert } \preceq \nabla^{2} f(a) \preceq M_{k} I_{\vert  \mathcal{F}\vert }$ for $\forall a \in \Omega_S$. 
	Since $\nabla_{\mathcal{A}}^{2} f(\boldsymbol{x})$ is a principle submatrix of $\nabla_{\mathcal{F}}^{2} f(\boldsymbol{x})$, so we get the conclusion $m_{k} I_{\vert \mathcal{A}\vert } \preceq \nabla_{\mathcal{A}}^{2} f(\boldsymbol{x}) \preceq M_{k} I_{\vert \mathcal{A}\vert }$.
\end{proof}
\begin{proof}{\textbf{of Lemma~\ref{lemma2}.}}
	Denote $\mathcal{F}=\mathcal{A} \cup \mathcal{B}$, then 
	$$\nabla_{\mathcal{F}}^{2} f(\boldsymbol{x})=\left(\begin{array}{ll}\nabla_{\mathcal{A}}^{2} f(\boldsymbol{x}) & \nabla_{\mathcal{B}} \nabla_{\mathcal{A}} f(\boldsymbol{x}) \\ \nabla_{\mathcal{A}} \nabla_{\mathcal{B}} f(\boldsymbol{x}) & \nabla_{\mathcal{B}}^{2} f(\boldsymbol{x})\end{array}\right).$$
	Let $X=\nabla_{\mathcal{F}}^{2} f(\boldsymbol{x})-\frac{M_{k}+m_{k}}{2} I_{\vert \mathcal{F}\vert }$, by Lemma~\ref{lemma1}, we have $-\frac{M_{k}-m_{k}}{2} I_{\vert \mathcal{F}\vert } \preceq X \preceq \frac{M_{k}-m_{k}}{2} I_{\vert \mathcal{F}\vert } $.
	And we can get $X^{\top} X \preceq\left(\frac{M_{k}-m_{k}}{2}\right)^{2}I_{\vert \mathcal{F}\vert }$, then $\forall u \in \mathbb{R}^{\vert \mathcal{F}\vert },\left\|Xu\right\|=\sqrt{u^{\top} X^{T} X u} \leq \sqrt{\left(\frac{M_{k} - m_{k}}{2}\right)^{2}\|u\|^{2}}=\frac{M_{k}-m_{k}}{2}\|u\|$. 
	Due to $\mathcal{A} \cap \mathcal{B}=\emptyset$, $\nabla_{\mathcal{A}} \nabla_{\mathcal{B}} f(\boldsymbol{x})$ is submatrix of $X$, and hence $\forall v \in \mathbb{R}^{\vert \mathcal{A}\vert }$, let $\tilde{v}=$ $\left(0^{\top}, v^{\top}\right)^{\top} \in \mathbb{R}^{\vert \mathcal{F}\vert }$, we have:
	\begin{align*}
		\left\|\nabla_{\mathcal{A}} \nabla_{\mathcal{B}} f(\boldsymbol{x}) v\right\| \leq\|X \tilde{v}\| \leq \frac{M_{k}-m_{k}}{2}\|\tilde{v}\|=\frac{M_{k}-m_{k}}{2} \| v\|.
	\end{align*}
	For $\forall w \in \mathbb{R}^{\vert \mathcal{B}\vert }$, Cauchy-Schwartz inequality implies: 
	$$w^{\top} \nabla_{\mathcal{A}} \nabla_{\mathcal{B}} f(\boldsymbol{x}) v  \leq\|w\| \cdot\left\|\nabla_{A} \nabla_{B} f(\boldsymbol{x}) v\right\|  \leq \frac{M_{k}-m_{k}}{2}\|w\| \|v\|.$$
\end{proof}

\subsection{Proof of Proposition~\ref{prop:ising-rsc-rss}}
\begin{proof}{Proof of Proposition~\ref{prop:ising-rsc-rss}}
    Let
    $
    \phi_{k}(\mathbf{x}_{i}) \coloneqq \mathbb{P}(\mathbf{x}_{i k}\vert \mathbf{x}_{i 1}, \ldots, \mathbf{x}_{i k - 1}, \mathbf{x}_{i k + 1}, \ldots, \mathbf{x}_{ip}) = \frac{1}{1+\exp (-2\sum\limits_{l: l \neq k} \mathbf{x}_{i k}\mathbf{x}_{i l} \param_{k l})}, 
    $
    we can attain the differentiations of $\loss(\boldsymbol{\theta})$ by simple algebra:
    \begin{align*}
    	&(\nabla\loss)_{kl} 
    	\coloneqq 
    	\frac{\partial\loss(\boldsymbol{\theta})}{\partial\param_{kl}} 
            = -\frac{2}{n}\sum_{i=1}^n \mathbf{x}_{ik}\mathbf{x}_{il} (1-\phi_{k}(\mathbf{x}_i) + 1 -\phi_{l}(\mathbf{x}_i)),\\
    	&\left(\nabla^2 \loss\right)_{kl ; vw}
    	\coloneqq 
    	\frac{\partial^2 \loss(\boldsymbol{\theta})}{\partial \param_{kl} \partial \param_{vw}}
    	=
    	\begin{cases}
    		\frac{4}{n}\sum_{i=1}^n (1-\phi_k(\mathbf{x}_i))\phi_k(\mathbf{x}_i)+(1-\phi_l(\mathbf{x}_i))\phi_l(\mathbf{x}_i)
    		& \textup{if } k=v, l=w
    		\\ 
    		\frac{4}{n}\sum_{i=1}^n \mathbf{x}_{il} \mathbf{x}_{iw} (1-\phi_k(\mathbf{x}_i)) \phi_k(\mathbf{x}_i) 
    		& \textup{if } k=v, l \neq w
    		\\ 
    		0 
    		& \textup{other }
    	\end{cases}
    	.
    \end{align*}
	Let $\boldsymbol{u}$ be an arbitrary vector with dimension $p\times p$ satisfying $\vert \operatorname{supp}(\boldsymbol{u})\vert \leq 3s$ and $\| \boldsymbol{u} \|=1$. For any $k\in[p]$, denote $W_k \coloneqq \textup{diag}\{4(1-\phi_{k}(\mathbf{x}_1))\phi_{k}(\mathbf{x}_1), \dots,4(1-\phi_{k}(\mathbf{x}_n))\phi_{k}(\mathbf{x}_n)\}$ and $\textup{Tr}(W_k)$ as its trace. And define subvector of $\boldsymbol{u}$ that has length $p - 1$ as $\boldsymbol{u}_{-k} \coloneqq (\boldsymbol{u}_{k l})$ for $l \in [p] \text{ and } l \neq k$, then $\boldsymbol{u}_{-k}$ satisfies $\vert \operatorname{supp}(\boldsymbol{u}_{-k})\vert \leq \min\{3s, p-1\}$. Then, we have
	\begin{align*}
		\boldsymbol{u}^\top \left(\nabla^2 \loss\right)\boldsymbol{u} 
		=&\sum_{k=1}^p \sum_{\substack{v \neq k \\ w \neq k}} \boldsymbol{u}_{k v} \boldsymbol{u}_{k w} \frac{4}{n} \sum_{i=1}^n
		\left(
		\mathbf{x}_{i v} \mathbf{x}_{i w} (1-\phi_{k}(\mathbf{x}_i))\phi_{k}(\mathbf{x}_i) + \textup{I}(v=w)(1-\phi_{v}(\mathbf{x}_i))\phi_{v}(\mathbf{x}_i) 
		\right)
		\\
		=& \sum_{k=1}^p \sum_{\substack{v \neq k \\ w \neq k}} \boldsymbol{u}_{k v} \boldsymbol{u}_{k w} \left(\frac{1}{n}\mathbf{X}^\top W_k \mathbf{X} + \frac{1}{n}\textup{diag}\{\textup{Tr}(W_1), \dots, \textup{Tr}(W_p)\} \right)_{v w} 
		\\
		=& \sum_{k=1}^p \boldsymbol{u}_{-k}^\top  \left(\frac{1}{n}\mathbf{X}^\top W_k \mathbf{X} + \frac{1}{n}\textup{diag}\{\textup{Tr}(W_1), \dots, \textup{Tr}(W_p)\} \right)_{-k,-k} \boldsymbol{u}_{-k} 
		\\
		\leq& \sum_{k=1}^p \boldsymbol{u}_{-k}^\top  \left(\frac{1}{n}\mathbf{X}^\top \mathbf{X} + \textup{I}_p \right)_{-k,-k} \boldsymbol{u}_{-k} 
		\leq (1+M_{3s}) \sum_{k=1}^p \boldsymbol{u}_{-k}^\top  \boldsymbol{u}_{-k} 
		\leq (1+M_{3s}) \| \boldsymbol{u} \|,
	\end{align*}
	where the second inequality leverages Condition~\ref{con:ising:correlation}. Thus, $\loss(\cdot)$ is $(1+M_{3s})$-RSS.
	
	Note that $\Delta$ satisfies $\Delta \leq \phi_{k}(\mathbf{x}_i)(1-\phi_{k}(\mathbf{x}_i))$ for any $k \in [p]$. Then, with Conditions~\ref{con:ising:correlation} and~\ref{con:ising:parameter}, we can get $\boldsymbol{u}^\top \left(\nabla^2 \loss\right)\boldsymbol{u} \geqslant 8\Delta(1+m_{3s})$ by following the similar procedure for deriving its upper bound. So, $\loss(\cdot)$ is $8\Delta(1+m_{3s})$-RSC.
\end{proof}

\setcounter{theorem}{6}

\section{Parameter estimation analysis}
\subsection{Case I: with RIP-type condition}
\begin{theorem}\label{thm:L2_error_bound}
	Following Assumptions and notations in Theorem~\ref{thm:convergence}, there exists a constant $\delta_3>0$ such that the $\ell_2$-error of $\boldsymbol{\theta}^t$ is upper bounded by
	\begin{equation}\label{eq:l2_bound}
		\|\boldsymbol{\theta}^{t}-\boldsymbol{\theta}^{*}\|^2 \leq \frac{\delta_3}{m_{3s}} \delta_1^{-t} \vert \loss(\boldsymbol{\theta}^0)-\loss(\boldsymbol{\theta}^{*})\vert,
	\end{equation}
	if $\boldsymbol{\theta}^t$ is not the oracle solution.
\end{theorem}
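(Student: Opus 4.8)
The plan is to decouple the estimate into two stages: first bound the squared error $\|\boldsymbol{\theta}^t - \boldsymbol{\theta}^*\|^2$ by a constant multiple of the objective gap $\mathit{f}(\boldsymbol{\theta}^t) - \mathit{f}(\boldsymbol{\theta}^*)$ (with the constant scaling like $1/m_{3s}$), and then feed in the linear convergence rate of Theorem~\ref{thm:convergence-rate} to convert that gap into the geometric factor $\delta_1^{-t}$. Throughout I would reuse the notation of Lemma~\ref{lemma:loss-now-next}, writing $m = m_{3s}$, $M = M_{3s}$, $\hat{\boldsymbol{\theta}} = \boldsymbol{\theta}^t$, and $c = \frac{0.35}{\sqrt{s}}(1.49m - M)$, so that Assumption~\ref{con:bound-gradient} reads $\|\nabla \mathit{f}(\boldsymbol{\theta}^*)\|_\infty \leq c\vartheta$. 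A preliminary observation is that, since $\boldsymbol{\theta}^t$ is not the oracle solution $\hat{\boldsymbol{\theta}^*}$, its support $\mathcal{A}^t$ cannot equal $\mathcal{A}^*$ (otherwise the minimizer over $\mathcal{A}^t=\mathcal{A}^*$ would be exactly $\hat{\boldsymbol{\theta}^*}$). Hence $\mathcal{I}_1 := \mathcal{A}^* \setminus \mathcal{A}^t \neq \emptyset$, and on $\mathcal{I}_1$ the vector $\boldsymbol{\theta}^t$ vanishes, giving $\vartheta \leq \|\boldsymbol{\theta}^*_{\mathcal{I}_1}\| = \|(\boldsymbol{\theta}^t - \boldsymbol{\theta}^*)_{\mathcal{I}_1}\| \leq \|\boldsymbol{\theta}^t - \boldsymbol{\theta}^*\|$.

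The core step is the $m_{3s}$-RSC inequality applied at the pair $(\boldsymbol{\theta}^t,\boldsymbol{\theta}^*)$, which is legitimate because $\|\boldsymbol{\theta}^t - \boldsymbol{\theta}^*\|_0 \leq 2s \leq 3s$:
$$\frac{m}{2}\|\boldsymbol{\theta}^t - \boldsymbol{\theta}^*\|^2 \leq \mathit{f}(\boldsymbol{\theta}^t) - \mathit{f}(\boldsymbol{\theta}^*) - \langle \nabla \mathit{f}(\boldsymbol{\theta}^*), \boldsymbol{\theta}^t - \boldsymbol{\theta}^* \rangle.$$
Since $\boldsymbol{\theta}^t - \boldsymbol{\theta}^*$ is supported on $\mathcal{A}^t \cup \mathcal{A}^* = \mathcal{I}_2^c$ with $|\mathcal{I}_2^c| \leq 2s$, the cross term obeys $|\langle \nabla \mathit{f}(\boldsymbol{\theta}^*), \boldsymbol{\theta}^t - \boldsymbol{\theta}^* \rangle| \leq \|\nabla \mathit{f}(\boldsymbol{\theta}^*)_{\mathcal{I}_2^c}\|\,\|\boldsymbol{\theta}^t - \boldsymbol{\theta}^*\| \leq \sqrt{2s}\,\|\nabla \mathit{f}(\boldsymbol{\theta}^*)\|_\infty\,\|\boldsymbol{\theta}^t - \boldsymbol{\theta}^*\|$. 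The crucial manoeuvre is to bound $\|\nabla \mathit{f}(\boldsymbol{\theta}^*)\|_\infty \leq c\vartheta \leq c\,\|\boldsymbol{\theta}^t - \boldsymbol{\theta}^*\|$ by combining Assumption~\ref{con:bound-gradient} with the lower bound $\vartheta \le \|\boldsymbol{\theta}^t-\boldsymbol{\theta}^*\|$ from the previous paragraph. This renders the cross term $\sqrt{2s}\,c\,\|\boldsymbol{\theta}^t - \boldsymbol{\theta}^*\|^2$, which is homogeneous of degree two and can be absorbed on the left, yielding $\left(\frac{m}{2} - \sqrt{2s}\,c\right)\|\boldsymbol{\theta}^t - \boldsymbol{\theta}^*\|^2 \leq \mathit{f}(\boldsymbol{\theta}^t) - \mathit{f}(\boldsymbol{\theta}^*)$.

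The step requiring care is verifying that the leading coefficient is strictly positive. Writing $x = M/m$, Assumption~\ref{con:bound-gradient} forces $x \in [1,1.49]$ and $\frac{\sqrt{s}\,c}{m} = 0.35(1.49 - x) \leq 0.1715$, so $\frac{m}{2} - \sqrt{2s}\,c = m\big(\tfrac{1}{2} - \sqrt{2}\cdot 0.35(1.49-x)\big) \geq 0.2574\,m > 0$, reusing the numerical estimate already established in the proof of Theorem~\ref{thm:convergence}. Setting $\delta_3 := \big(\tfrac{1}{2} - \sqrt{2}\cdot 0.35(1.49-x)\big)^{-1}$, a constant bounded over the admissible interval, gives $\|\boldsymbol{\theta}^t - \boldsymbol{\theta}^*\|^2 \leq \frac{\delta_3}{m}\big(\mathit{f}(\boldsymbol{\theta}^t) - \mathit{f}(\boldsymbol{\theta}^*)\big)$.

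Finally I would close the argument with the linear rate. Because $\boldsymbol{\theta}^t$ is not the oracle solution and the objective decreases monotonically until termination, every earlier iterate $\boldsymbol{\theta}^0,\dots,\boldsymbol{\theta}^{t-1}$ is non-terminal, so Theorem~\ref{thm:convergence-rate} applies at each transition and composes to $|\mathit{f}(\boldsymbol{\theta}^t) - \mathit{f}(\boldsymbol{\theta}^*)| \leq \delta_1^{-t}|\mathit{f}(\boldsymbol{\theta}^0) - \mathit{f}(\boldsymbol{\theta}^*)|$. Substituting this into the previous display, and recalling $m = m_{3s}$, delivers the claimed bound. I expect the only real (though mild) obstacle to be the bookkeeping that legitimizes the absorption of the gradient cross term — namely establishing the chain $\vartheta \le \|\boldsymbol{\theta}^*_{\mathcal{I}_1}\| \le \|\boldsymbol{\theta}^t - \boldsymbol{\theta}^*\|$ and confirming positivity of $\frac{m}{2} - \sqrt{2s}\,c$ across the whole RIP range $x\in[1,1.49]$.
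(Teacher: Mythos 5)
Your proof is correct, but it is not the paper's argument; it takes a genuinely different and shorter route. The paper never applies the RSC inequality directly to the pair $(\boldsymbol{\theta}^t,\boldsymbol{\theta}^*)$; instead it recycles two estimates from the proof of Lemma~\ref{lemma:loss-now-next}: the stationarity condition $\nabla_{\mathcal{A}^t}\mathit{f}(\boldsymbol{\theta}^t)=\mathbf{0}$, expanded by Taylor's theorem and controlled by the Hessian-block bounds of Lemmas~\ref{lemma1}--\ref{lemma2}, gives $\|\boldsymbol{\theta}^t-\boldsymbol{\theta}^*\|\leq\bigl(\frac{\sqrt{s}c}{m}+\frac{v}{m}+1\bigr)\|\boldsymbol{\theta}^*_{\mathcal{I}_1^t}\|$ via~\eqref{eq:est_true_gap}, and then~\eqref{gapOfLoss:beforeSplicing} converts $\|\boldsymbol{\theta}^*_{\mathcal{I}_1^t}\|^2$ into the objective gap; both proofs then compose Theorem~\ref{thm:convergence-rate} identically to obtain the factor $\delta_1^{-t}$. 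Your single application of $m_{3s}$-RSC with the absorption trick $\|\nabla\mathit{f}(\boldsymbol{\theta}^*)\|_\infty\leq c\vartheta\leq c\|\boldsymbol{\theta}^t-\boldsymbol{\theta}^*\|$ — legitimate precisely because the chain $\|\hat{\boldsymbol{\theta}}-\boldsymbol{\theta}^*\|\geq\|\boldsymbol{\theta}^*_{\mathcal{I}_1}\|\geq\vartheta$ is the same one the paper itself uses inside Lemma~\ref{lemma:loss-now-next} — bypasses Lemmas~\ref{lemma1}--\ref{lemma2} and the $v/m$ bookkeeping entirely, and in fact yields a slightly sharper constant: your $\delta_3=\bigl(\frac{1}{2}-\sqrt{2}\cdot 0.35(1.49-x)\bigr)^{-1}\leq 0.2574^{-1}\approx 3.89$, versus the paper's $2.31^2\approx 5.34$. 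What the paper's longer route buys is pure reuse: \eqref{eq:est_true_gap} and \eqref{gapOfLoss:beforeSplicing} are already established as the workhorses of Theorems~\ref{thm:recovery}--\ref{thm:convergence}, so its proof of this theorem reduces to three lines of substitution. One small caveat in your write-up: the preliminary step ``not oracle $\Rightarrow\mathcal{A}^t\neq\mathcal{A}^*\Rightarrow\mathcal{I}_1\neq\emptyset$'' silently assumes $\vert\mathcal{A}^*\vert=s$; if $\vert\mathcal{A}^*\vert<s$ one could have $\mathcal{A}^t\supsetneq\mathcal{A}^*$ with $\mathcal{I}_1=\emptyset$ while $\boldsymbol{\theta}^t\neq\hat{\boldsymbol{\theta}^*}$, and then $\vartheta\leq\|\boldsymbol{\theta}^t-\boldsymbol{\theta}^*\|$ fails. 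Since the paper operates under the same convention (its ground truth is exactly $s$-sparse, and its own proofs require $\mathcal{I}_1\neq\emptyset$ in the same way), this is a remark, not a gap.
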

\begin{proof}{Proof of Theorem~\ref{thm:L2_error_bound}}
	Let $\mathcal{A}^t = \operatorname{supp}\{\boldsymbol{\theta}^t\}$, $\mathcal{A}_{1}^t=\mathcal{A}^t \cap \mathcal{A}^{*}$, and $\mathcal{I}_1^t = (\mathcal{A}^t)^c \cap \mathcal{A}^{*}$,  
	then following the immediate results in Lemma~\ref{lemma:loss-now-next}, we have:
	\begin{align*}
		\begin{aligned}
			\|\boldsymbol{\theta}^t-\boldsymbol{\theta}^*\| 
			\stackrel{(\ref{eq:est_true_gap})}{\leq} m^{-1}\|\boldsymbol{d}_{\mathcal{A}^t}^*\|+(\frac{v}{m}+1)\|\param_{\mathcal{I}_1^t}^*\| 
            \leq (\frac{\sqrt{s} c}{m}+\frac{v}{m}+1)\|\param_{\mathcal{I}_1^t}^*\|
			\stackrel{(\ref{gapOfLoss:beforeSplicing})}{\leq} \frac{(\frac{\sqrt{s} c}{m}+\frac{v}{m}+1)\sqrt{ \vert  \loss(\boldsymbol{\theta}^t)-\loss (\boldsymbol{\theta}^{*} ) \vert }}{\sqrt{\frac{m}{2} - \sqrt{2s}c }},
		\end{aligned}
	\end{align*}
	where the second inequality results from Assumption~\ref{con:bound-gradient}. Furthermore,
	according to the result of Theorem~\ref{thm:convergence-rate}, we have
	\begin{align*}
		\|\boldsymbol{\theta}^t-\boldsymbol{\theta}^*\| 
		&\leq \frac{(\frac{\sqrt{s} c}{m}+\frac{v}{m}+1)\sqrt{\frac{\vert \loss(\boldsymbol{\theta}^0)-\loss(\boldsymbol{\theta}^{*})\vert }{m\delta_1^t}}}{\sqrt{\frac{1}{2}-\frac{\sqrt{2s}c}{m} }} 
		\leq \frac{(0.15x+1.0215)\sqrt{\frac{\vert \loss(\boldsymbol{\theta}^0)-\loss(\boldsymbol{\theta}^{*})\vert }{m\delta_1^t}}}{\sqrt{(\frac{1}{2}-\sqrt{2}(0.5215-0.35x))}} 
		\leq 2.31 \sqrt{\frac{\vert \loss(\boldsymbol{\theta}^0)-\loss(\boldsymbol{\theta}^{*})\vert }{m\delta_1^t}},
	\end{align*}
	where $x$ in the second inequality is defined as $x \coloneqq M / m$, and the last inequality follows from $x \in [1, 1.49]$ implied by Assumption~\ref{con:bound-gradient}.
\end{proof}

\subsection{Case II: without RIP-type condition}
\begin{theorem}\label{thm:L2_error_bound_relax}
	Following the same assumptions and notations in Theorem~\ref{thm:convergence_rate_relax}, $\ell_2$-error of $\param^t$ is upper bounded by
	\begin{equation}\label{eqn:L2_error_bound_relax}
        \begin{aligned}
		\|\param^t - \param^*\| &\leq \sqrt{\frac{2(1-\delta_4)^t}{m_{s+s^*}} \mathop{\max}\{\loss(\param^0)-\loss(\param^*),0\}} + \frac{2\sqrt{s+s^*}}{m_{s+s^*}}\|\nabla\loss(\param^*)\|_{\infty}.
        \end{aligned}
	\end{equation}
\end{theorem}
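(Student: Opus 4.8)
The plan is to derive the $\ell_2$-error bound directly from the restricted strong convexity of Assumption~\ref{con:convex-smooth-relaxed} combined with the geometric decay of the objective gap already established in Theorem~\ref{thm:convergence_rate_relax}. Write $m \coloneqq m_{s+s^*}$ for brevity. Since $\param^t$ is $s$-sparse and $\param^*$ is $s^*$-sparse, the difference satisfies $\|\param^t - \param^*\|_0 \leq s + s^*$, so the $m$-RSC inequality applies with $\boldsymbol{x} = \param^t$ and $\boldsymbol{y} = \param^*$, yielding
\[
\frac{m}{2}\|\param^t - \param^*\|^2 + \langle \nabla \mathit{f}(\param^*), \param^t - \param^*\rangle \leq \mathit{f}(\param^t) - \mathit{f}(\param^*).
\]
Rearranging gives $\tfrac{m}{2}\|\param^t - \param^*\|^2 \leq [\mathit{f}(\param^t) - \mathit{f}(\param^*)] - \langle \nabla \mathit{f}(\param^*), \param^t - \param^*\rangle$, and I would bound the two right-hand terms separately.

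For the objective gap, applying Theorem~\ref{thm:convergence_rate_relax} recursively (exactly as done in the proof of Theorem~\ref{thm:convergence_relax}) gives $\mathit{f}(\param^t) - \mathit{f}(\param^*) \leq (1-\delta_4)^t \max\{\mathit{f}(\param^0) - \mathit{f}(\param^*), 0\}$. For the linear term, I would invoke H\"older's inequality together with the sparsity of $\param^t-\param^*$: because $\supp(\param^t - \param^*)$ has cardinality at most $s + s^*$,
\[
\vert\langle \nabla \mathit{f}(\param^*), \param^t - \param^*\rangle\vert \leq \sqrt{s+s^*}\,\|\nabla \mathit{f}(\param^*)\|_\infty \,\|\param^t - \param^*\|.
\]

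Combining the two estimates and abbreviating $A \coloneqq (1-\delta_4)^t \max\{\mathit{f}(\param^0) - \mathit{f}(\param^*), 0\}$, $B \coloneqq \sqrt{s+s^*}\,\|\nabla \mathit{f}(\param^*)\|_\infty$, and $r \coloneqq \|\param^t - \param^*\|$, I arrive at the quadratic inequality $\tfrac{m}{2} r^2 - B r - A \leq 0$. Solving it yields $r \leq m^{-1}\bigl(B + \sqrt{B^2 + 2mA}\,\bigr)$, and the subadditivity $\sqrt{B^2 + 2mA} \leq B + \sqrt{2mA}$ then gives $r \leq 2B/m + \sqrt{2A/m}$, which is precisely the claimed bound. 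The steps here are all fairly standard; the only points requiring genuine care are the sparsity bookkeeping — verifying $\|\param^t-\param^*\|_0 \leq s+s^*$ so that both the RSC step and the $\sqrt{s+s^*}$ factor in the H\"older step are legitimate — and recognizing that the combined estimate is a quadratic inequality in $r$ whose solution, after the square-root subadditivity step, splits cleanly into the statistical-error term $2B/m$ and the optimization-error term $\sqrt{2A/m}$.
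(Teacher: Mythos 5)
Your proposal is correct and follows essentially the same route as the paper: the $m_{s+s^*}$-RSC inequality anchored at $\param^*$, a restricted Cauchy--Schwarz/H\"older bound $\vert\langle\nabla\mathit{f}(\param^*),\param^t-\param^*\rangle\vert \leq \sqrt{s+s^*}\,\|\nabla\mathit{f}(\param^*)\|_\infty\,\|\param^t-\param^*\|$ using $\vert\supp(\param^t-\param^*)\vert\leq s+s^*$, the recursive application of Theorem~\ref{thm:convergence_rate_relax} for the objective gap, and solving the resulting quadratic in $\|\param^t-\param^*\|$. The only cosmetic difference is that the paper splits into the cases $\mathit{f}(\param^*)-\mathit{f}(\param^t)\geq 0$ and $<0$ after completing the square, whereas your uniform bound $\mathit{f}(\param^t)-\mathit{f}(\param^*)\leq A$ with $A\geq 0$ absorbs both cases into a single quadratic inequality --- an equivalent, slightly cleaner finish.
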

\begin{proof}{Proof of Theorem~\ref{thm:L2_error_bound_relax}.}
	Without the loss of generality, this proof adopts the notations used in proving Theorem~\ref{thm:convergence_rate_relax}. It is easily seen that
	\begin{align*}
		0 & \geq \loss(\param^*)-\loss(\hat{\param}) + (\hat{\param}-\param^*)^{\top}\boldsymbol{d}^* + \frac{m}{2} \|\hat{\param} - \param^*\|^2
		\geq \frac{m}{2}\|\hat{\param} - \param^*\|^2 - \|\boldsymbol{d}^*_{\mathcal{I}_2^C}\|\|\hat{\param} - \param^*\| + \loss(\param^*)-\loss(\hat{\param})\\
		& = \frac{m}{2}(\|\hat{\param} - \param^*\| - \frac{\|\boldsymbol{d}^*_{\mathcal{I}_2^C}\|}{m})^2 - \frac{\|\boldsymbol{d}^*_{\mathcal{I}_2^C}\|^2}{2m}+ \loss(\param^*)-\loss(\hat{\param}).
	\end{align*}
	Considering the first case that $\loss(\param^*)-\loss(\hat{\param}) \geq 0$, we can derive $\|\hat{\param} - \param^*\|\leq \frac{2}{m}\|\boldsymbol{d}^*_{\mathcal{I}_2^C}\|$. Moreover, due to $|\I_2^C| \leq s + s^*$, we have $\frac{2}{m}\|\boldsymbol{d}^*_{\mathcal{I}_2^C} \| \leq \frac{2\sqrt{s+s^*}}{m}\|\boldsymbol{d}^*\|_{\infty}$. Thus, it leads to  $\|\hat{\param} - \param^*\| \leq \frac{2\sqrt{s+s^*}}{m}\|\boldsymbol{d}^*\|_{\infty}$, which coincides with~{ \eqref{eqn:L2_error_bound_relax}}.

	On the other hand, when $\loss(\param^*)-\loss(\hat{\param}) < 0$, we can get
	\begin{align*}
		\|\hat{\param} - \param^*\| 
		\leq& \frac{1}{m}\|\boldsymbol{d}^*_{\mathcal{I}_2^C}\| + \frac{1}{m}\sqrt{\|\boldsymbol{d}^*_{\mathcal{I}_2^C}\|^2 - 2m(\loss(\param^*)-\loss(\hat{\param}))}
		\leq \frac{2}{m}\|\boldsymbol{d}^*_{\mathcal{I}_2^C}\| + \sqrt{\frac{2}{m}}\sqrt{\loss(\hat{\param})-\loss(\param^*)} \\
		\leq& \frac{2\sqrt{s+s^*}}{m}\|\boldsymbol{d}^*\|_{\infty} + \sqrt{\frac{2}{m}}\sqrt{(1-\delta_4)^t\mathop{\max}\{\loss(\param^0)-\loss(\param^*),0\}}
	\end{align*}
	where the last inequality follows from recursively applying Theorem~\ref{thm:convergence_rate_relax} and the fact that $f(\param^0) \geq f(\param^1) \geq \cdots \geq f(\param^{t-1})\geq f(\param^t) =: f(\hat{\param})$.
\end{proof}



\section{Selection of $k_{\max}$}\label{sec:max-splicing-size}
From Algorithm~\ref{algo:main}, $k_{\max}$ trade-offs the number of splicing iterations and the number of splicing operators with each splicing iteration. Specifically, a large $k_{\max}$ enlarges the number of splicing but reduces the number of splicing iterations; and vice versa.
Although the proof of Theorem~\ref{thm:recovery} requires $k_{\max}$ to be the true size of active set $s$, we can still get a similar result when $k_{\max}$ is smaller than $s$. We compare the results with different $k_{\max}$ to see this. 

The way of data generation is completely the same as the experiments in Section~\ref{sec:experiment-sota}. The results in Figure~\ref{Fig.k_max} show that the performance of our algorithm is insensitive to $k_{\max}$, and a medium-sized $k_{\max}$ makes the algorithm more computationally efficient. 
Thus, $k_{\max} = 5$ maybe a rule-of-thumb setting.


\begin{figure}[htbp]
	\centering 
	\includegraphics[width=1.0\textwidth]{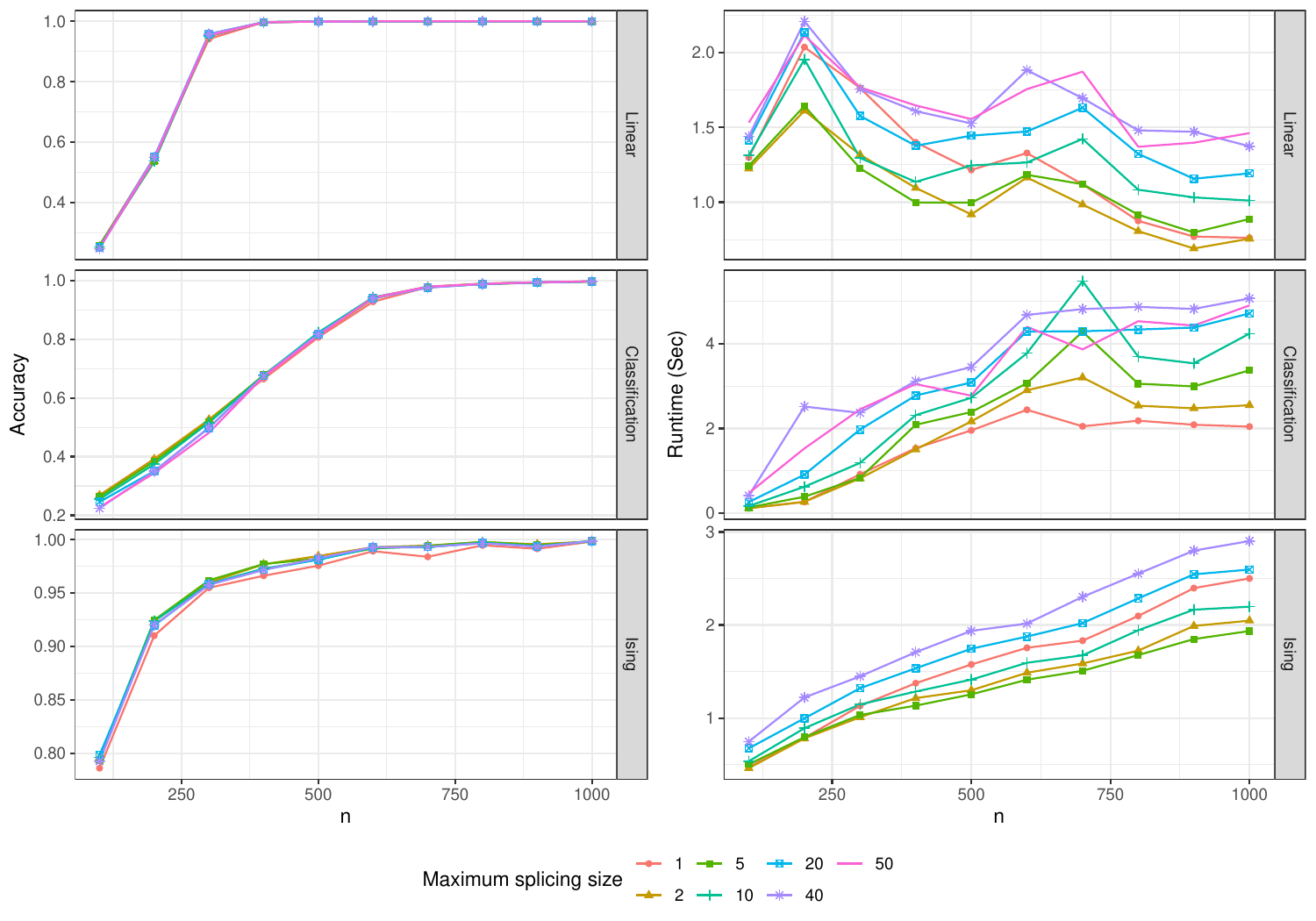}
    \vspace{-8pt}
	\caption{The mean of accuracy and runtime as sample size increases for different hyper-parameter $k_{\max}$. The experiment was independently repeated 20 times.} 
	\label{Fig.k_max} 
\end{figure}

\vskip 0.2in
\bibliography{reference}

\end{document}